\theoremstyle{plain}
\newtheorem{theorem}{Theorem}[section]
\newtheorem{lemma}[theorem]{Lemma}
\theoremstyle{definition}
\theoremstyle{remark}
\def\eqref#1{equation~\ref{#1}}
\def\1{\bm{1}}
\def\rr{{\textnormal{r}}}
\def\ru{{\textnormal{u}}}
\def\rv{{\textnormal{v}}}
\def\rw{{\textnormal{w}}}
\def\rvu{{\mathbf{i}}}
\def\rvu{{\mathbf{u}}}
\def\rvw{{\mathbf{w}}}
\def\vtheta{{\bm{\theta}}}
\def\vlambda{{\bm{\lambda}}}
\def\vbeta{{\bm{\beta}}}
\def\vh{{\bm{h}}}
\def\vu{{\bm{u}}}
\def\vw{{\bm{w}}}
\def\vx{{\bm{x}}}
\DeclareMathAlphabet{\mathsfit}{\encodingdefault}{\sfdefault}{m}{sl}
\SetMathAlphabet{\mathsfit}{bold}{\encodingdefault}{\sfdefault}{bx}{n}
\def\gB{{\mathcal{B}}}
\def\gC{{\mathcal{C}}}
\def\gE{{\mathcal{E}}}
\def\gG{{\mathcal{G}}}
\def\gV{{\mathcal{V}}}
\def\gHG{{\mathcal{HG}}}
\def\bp{{\gB}}
\def\pg{{\gC}}
\def\sC{{\mathbb{C}}}
\newcommand{\ZP}{\mathbb{Z}_+}
\newcommand{\softmax}{\mathrm{softmax}}
\newcommand{\parents}{Pa} %
\newcommand{\figtablepath}{./FigureTable/}
\renewcommand{\captionsize}{\small} %
\newcommand{\HG}{hierarchical graph}  %
\newcommand{\HiGeN }{HiGen\xspace}  %
\newcommand{\edge}[1]{({#1})}
\newcommand{\features}{embeddings\xspace}
\newcommand{\feature}{embedding\xspace}
\newcommand{\darrow}{{}}
\newcommand{\TODO}[1]{}
 \newcommand{\todoM}[2][]{}
\title{HiGen: Hierarchical Graph Generative Networks}
\author{Mahdi Karami \\
\texttt{mahdi.karami@ualberta.ca} 
}
\begin{document}
\maketitle

\begin{abstract}
Most real-world graphs exhibit a hierarchical structure, which is often overlooked by existing graph generation methods. To address this limitation, we propose a novel graph generative network that captures the hierarchical nature of graphs and successively generates the graph sub-structures in a coarse-to-fine fashion. 
At each level of hierarchy, this model generates communities in parallel, followed by the prediction of cross-edges between communities using separate neural networks. 
This modular approach enables scalable graph generation for large and complex graphs.  Moreover, we model the output distribution of edges in the hierarchical graph with a multinomial distribution and derive a recursive factorization for this distribution. This enables us to generate  community graphs with integer-valued edge weights in an autoregressive manner.
Empirical studies demonstrate the effectiveness and scalability of our proposed generative model, achieving state-of-the-art performance in terms of graph quality across various benchmark datasets. 
The code is available at \url{https://github.com/Karami-m/HiGen_main}.

\end{abstract}

\section{Introduction} \label{sec:Intro}

Graphs play a fundamental role in representing relationships and are widely applicable in various domains. 
The task of generating graphs from data holds immense value for diverse  applications but also poses significant challenges  \citep{dai2020scalableGraphGen}. 
Some of the applications include: the exploration of novel molecular and chemical structures \citep{jin2020hierarchicalMolec}, document generation \citep{blei2003LDA}, circuit design \citep{mirhoseini2021graphCircuit}, the analysis and synthesis of realistic data networks, as well as the synthesis of scene graphs in computer 
\citep{habitat19iccv, ramakrishnan2021hm3d}.

In all the aforementioned domains, a common observation is the presence of locally heterogeneous edge distributions in the graph representing the system, leading to the formation of clusters or communities and hierarchical structures. These clusters represent groups of nodes characterized by a high density of edges within the group and a comparatively lower density of edges connecting the group with the rest of the graph.
In a hierarchical structure that arise from graph clustering, the communities in the lower levels capture the local structures and relationships within the graph. These communities provide insights into the fine-grained interactions among nodes. On the other hand, the higher levels of the hierarchy reflect the broader interactions between communities and characterize global properties of the graph.
Therefore, in order to generate realistic graphs, it is essential for graph generation models to learn this multi-scale structure, and be able to capture the cross-level relations.
While hierarchical multi-resolution generative models were developed for specific data types such as voice \citep{oord2016wavenet}, image \citep{reed2017parallel-multiscale-DE, karami2019invertible} and molecular motifs \citep{jin2020hierarchicalMolec}, these methods rely on domain-specific priors that are not applicable to general graphs with unordered nature.
To the best of our knowledge, there exists no data-driven generative models specifically designed for generic graphs that can effectively incorporate hierarchical structure.

Graph generative models have been extensively studied in the literature. Classical methods based on random graph theory, such as those proposed in \cite{erdos1960evolution} and \cite{barabasi1999emergence}, can only capture a limited set of hand-engineered graph statistics. \citet{leskovec2010kronecker} leveraged the Kronecker product of matrices but the resulting generative model is very limited in modeling the underlying graph distributions.
With recent advances in graph neural networks, a variety of deep neural network models have been introduced that are based on variational autoencoders (VAE) \citep{kingma2013auto} or generative adversarial networks (GAN) \citep{goodfellow2020GAN}. Some examples of such models include
\citep{de2018molgan, simonovsky2018graphvae, kipf2016semi, ma2018constrained, liu2019graph, bojchevski2018netgan, yang2019conditional} 
The major challenge in VAE based models is that they rely on heuristics to solve a graph matching problem for aligning the VAE's input and sampled output, limiting them to small graphs.
On the other hand, GAN-based methods circumvent the need for graph matching by using a permutation invariant discriminator. However, they can still suffer from convergence issues and have difficulty capturing complex dependencies in graph structures for moderate to large graphs \citep{li2018learning, martinkus2022spectre}. 
To address these limitations, \citep{martinkus2022spectre} recently proposed using spectral conditioning to enhance the expressivity of GAN models in capturing global graph properties.

 On the other hand, autoregressive models approach graph generation as a sequential decision-making process. 
Following this paradigm, \citet{li2018learning} proposed a generative model based on GNN but it has high complexity of $\mathcal{O}(mn^2)$.
In a distinct approach, GraphRNN \citep{you2018graphrnn} modeled graph generation with a two-stage RNN architecture for generating new nodes and their links, respectively.
However, traversing all elements of the adjacency matrix in a predefined order results in $\mathcal{O}(n^2)$ time complexity making it non-scalable to large graphs.
In contrast, GRAN \citep{liao2019GRAN} employs a graph attention network and generates the adjacency matrix row by row, resulting in a $\mathcal{O}(n)$ complexity sequential generation process.
To improve the scalability of generative models, \citet{dai2020scalableGraphGen} proposed an algorithm for sparse graphs that decreases the training complexity to $\mathcal{O}(\log n)$, but at the expense of increasing the generation time complexity to $\mathcal{O}((n+m)\log n)$.
Despite their improvement in capturing complex statistics of the graphs,
autoregressive models highly rely on
an appropriate node ordering and do not take into account the community structures of the graphs.
Additionally, due to their recursive nature, they are not fully parallelizable.

A new family of diffusion model for graphs has emerged recently. Continuous denoising diffusion was developed by \citet{jo2022scoreBased}, which adds Gaussian noise to the graph adjacency matrix and node features during the diffusion process. 
However,  since continuous noise destroys the sparsity and structural properties of the graph, discrete denoising diffusion models have been developed as a solution in \citep{haefeli2022diffusion, vignac2022digress, kong2023autoregressiveDiff}. 
These models progressively edit graphs by adding or removing edges in the diffusion process, and then denoising graph neural networks are trained to reverse the diffusion process.
While the denoising diffusion models can offer promising results, their main drawback is the requirement of a long chain of reverse diffusion, which can result in relatively slow sampling.
To address this limitation, \citet{chen2023EDGE} introduced a diffusion-based graph generative model. 
In this model, a discrete diffusion process randomly removes edges while a denoising model is trained to inverse this process, therefore it only focuses on a portion of nodes in the graph at each denoising step.

In this work, we introduce {\HiGeN}, a \underline{\textbf{Hi}}erarchical \underline{\textbf{G}}raph G\underline{\textbf{en}}erative Network
to address the limitations of existing generative models by incorporating community structures and cross-level interactions. 
This approach involves generating graphs in a coarse-to-fine manner, where graph generation at each level is conditioned on a higher level (lower resolution) graph. The generation of communities at lower levels is performed in parallel, followed by the prediction of cross-edges between communities using a separate graph neural network. This parallelized approach enables high scalability.
To capture hierarchical relations, our model allows each node at a given level to depend not only on its neighbouring nodes but also on its corresponding super-node at the higher level. 
We address the generation of integer-valued edge weights of the hierarchical structure by modeling the output distribution of edges using a multinomial distribution. 
We show that multinomial distribution can be factorized successively, enabling the autoregressive generation of each community.
This property makes the proposed architecture well-suited for generating graphs with integer-valued edge weights.
Furthermore, by breaking down the graph generation process into the generation of multiple small partitions that are conditionally independent of each other, \HiGeN reduces its sensitivity to a predefined initial ordering of nodes. %

\section{Background}

A graph $ \gG = \left(\gV,~ \gE \right) $ is a collection of nodes (vertices) $\gV$ and edges $\gE$ with corresponding sizes $n = |\gV|$ and $m=|\gE|$ and an adjacency matrix $\Amat^{\pi}$ for the node ordering $\pi$.
The node set can be partitioned into $c$ communities (a.k.a. cluster or modules) using a graph partitioning function $ \mathcal{F}: ~ \gV \rightarrow ~ \{1, ..., c \}$, where each cluster of nodes forms a sub-graph denoted by $ \pg_i = \left(\gV(\pg_i),~ \gE(\pg_i)\right) $
with adjacency matrix $\Amat_{i}$.
The cross-links between neighboring communities form a \textit{bipartite graph}, denoted by  $ \bp_{ij} = \left(\gV(\pg_i),~\gV(\pg_j),~ \gE(\bp_{ij})\right) $ with adjacency matrix $\Amat_{ij}$.
Each community is aggregated to a super-node and each bipartite corresponds to a super-edge  linking neighboring communities, which induces a coarser graph at the higher (a.k.a. parent) level.
Herein, the levels are indexed by superscripts.
Formally, each community at level $l$, $\pg_{i}^l$, is mapped to a node at the higher level graph,
also called its parent node, $v^{l-1}_{i} := \parents(\pg_{i}^l) $ and
each bipartite at level $l$ 
is represented by an edge in the higher level, also called its parent edge, $ e^{l-1}_{i} = \parents(\bp_{ij}^l) = \edge{v^{l-1}_{i}, v^{l-1}_{j}} $.
The weights of the self edges and the weights of the cross-edges in the parent level are determined by the sum of the weights of the edges within their corresponding community and bipartite, respectively.
Therefore, the edges in the induced graphs at the higher levels have integer-valued weights: $ w^{l-1}_{ii} = \sum_{e\in \gE(\pg_i^l)} w_e$ and $w^{l-1}_{ij} = \sum_{e\in \gE(\bp_{ij}^l)} w_e$, moreover sum of all edge weights remains constant in all levels so  $w_0 := \sum_{e\in \gE(\gG^l)} w_e = |\gE|, ~ \forall ~  l \in [0, ..., L]$.

This clustering process continues recursively in a bottom-up approach until a single node graph $\gG^0$ is obtained,
producing a \textit{\HG }, defined by the set of graphs in all levels of abstractions, $\gHG := \{ \gG^0,  ...., \gG^{L-1}, \gG^L \} $.
This forms a dendrogram tree with $\gG^0$ being the root and $\gG^L$ being the final graph that is generated at the leaf level.
An $\gHG$ is visualized in Figure \ref{fig:HG}.
The hierarchical tree structure enables modeling of both local and long-range interactions among nodes, as well as control over the flow of information between them, across multiple levels of abstraction.
This is a key aspect of our proposed generative model. 
{Please refer to appendix \ref{apdx:not_def} for a list of notation definitions.}

\begin{figure}[t]
\centering
	\vskip -35pt
	\subfloat[][\centering \label{fig:HG}]{\includegraphics[width=0.34\linewidth]{./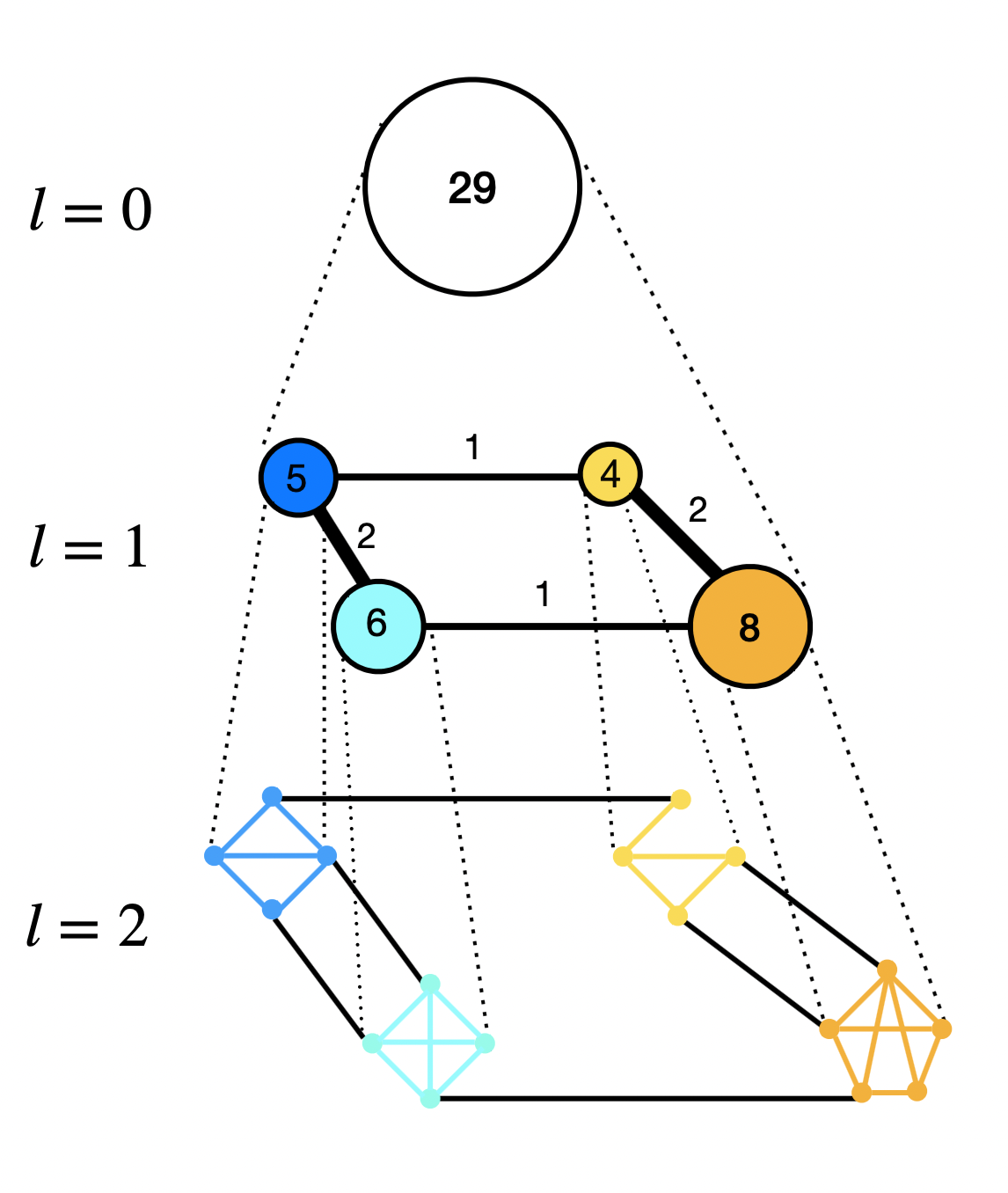}} \hfill
 	\subfloat[][\centering \label{fig:ADJ}]{\includegraphics[width=0.3\linewidth]{./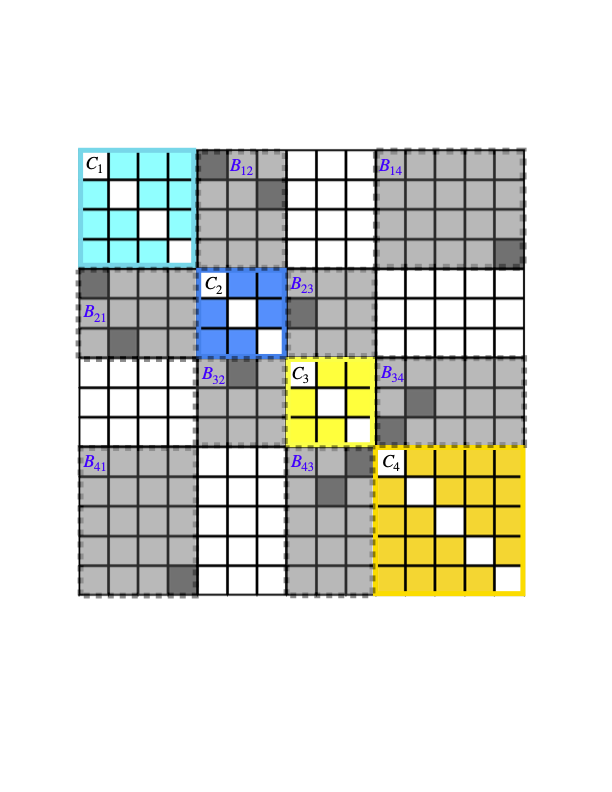}} \hfill
   	\subfloat[][\centering \label{fig:MN_BN}]{\includegraphics[width=0.27\linewidth]{./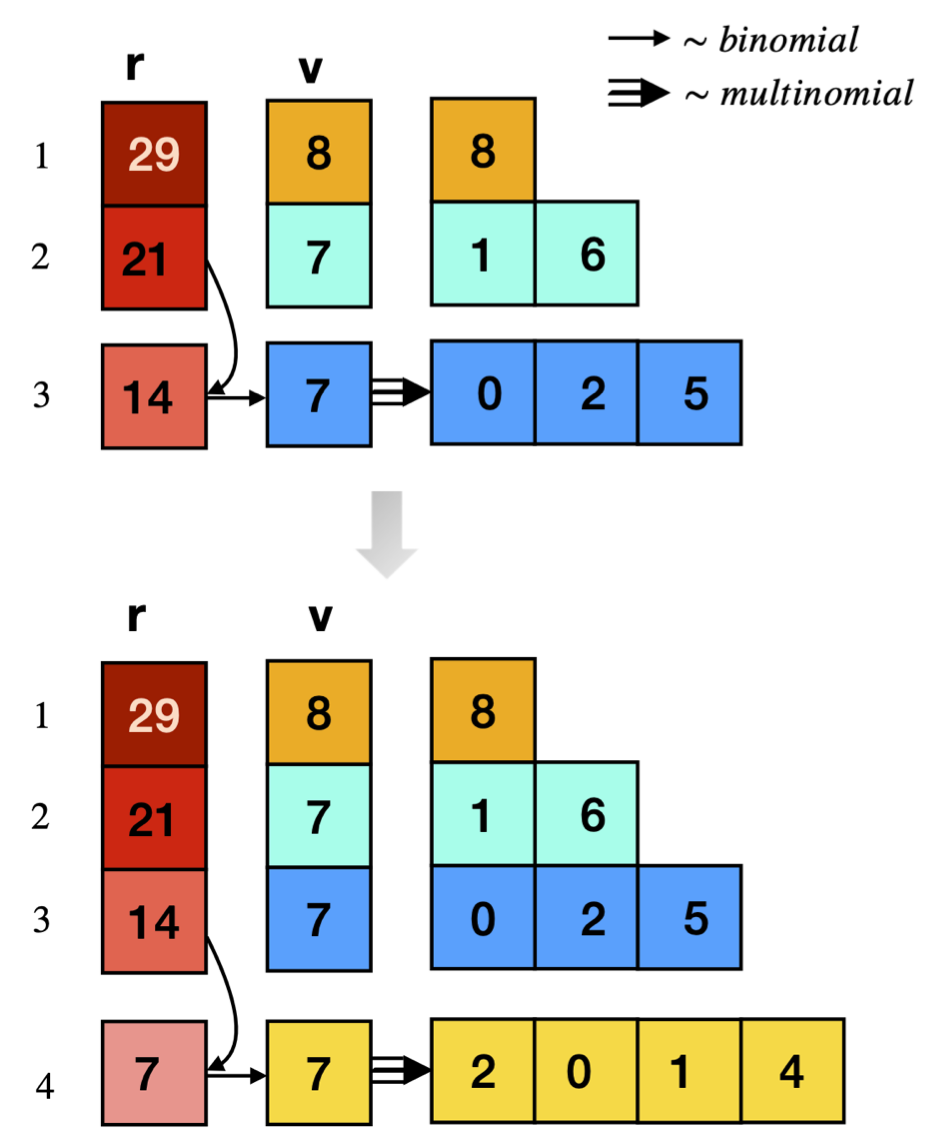}} \\
    \vspace{-10pt}
    \subfloat[][\centering \label{fig:comm_gen}]{\includegraphics[width=0.49\linewidth]{./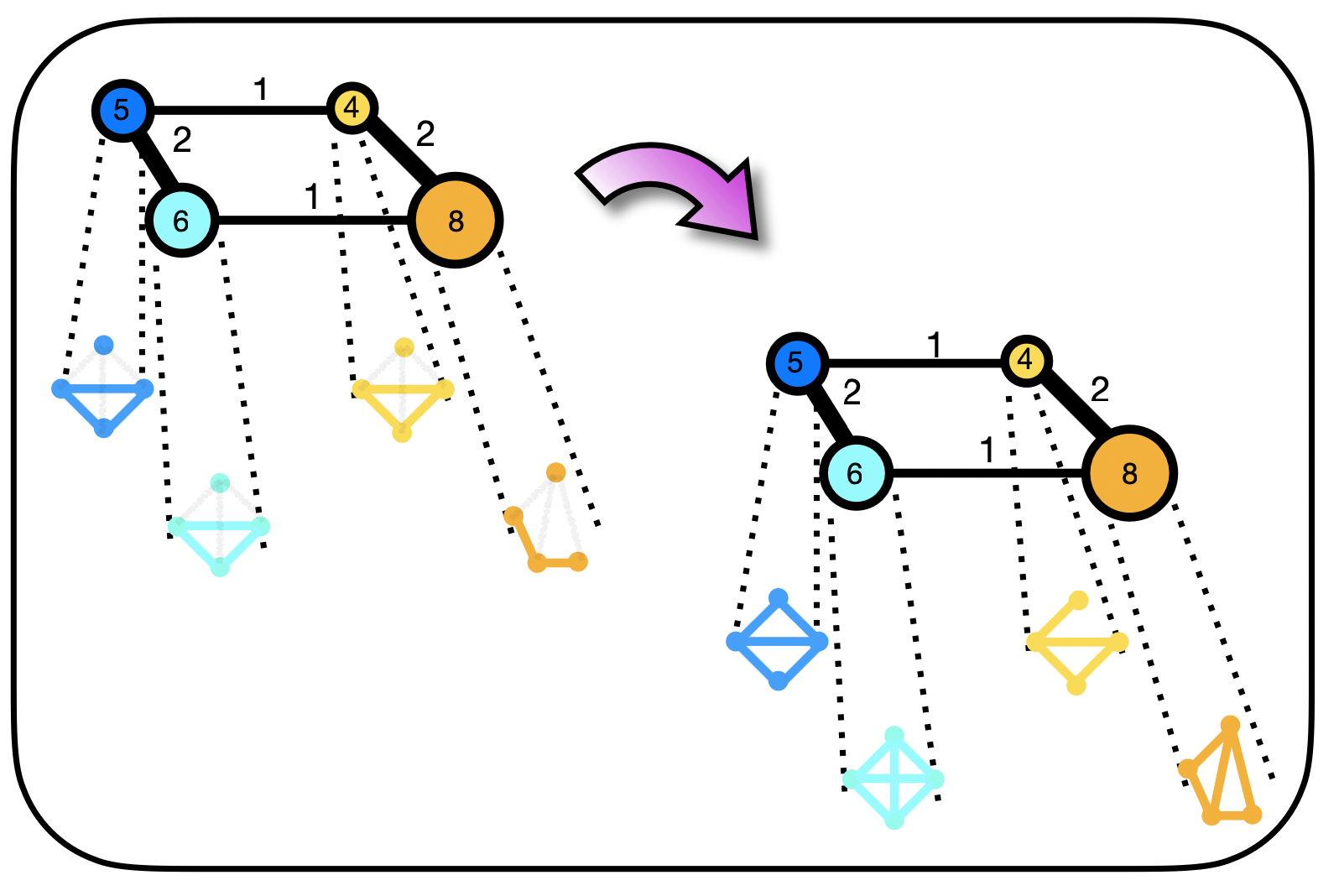}} \hfill
    \subfloat[][\centering \label{fig:BP_gen}]{\includegraphics[width=0.49\linewidth]{./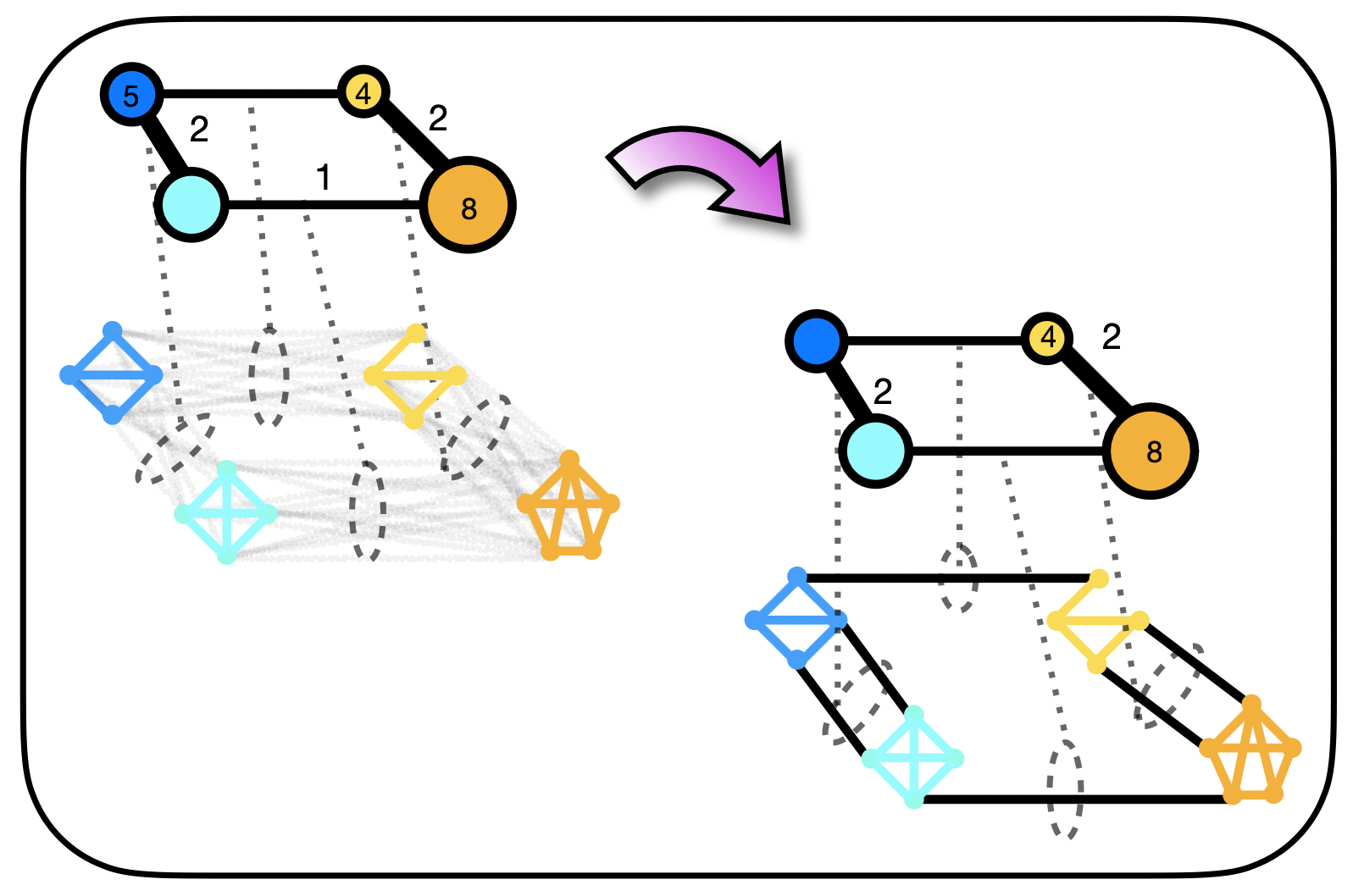}}
        \captionof{figure}{ \label{fig:HG_gen}
        \captionsize
        (a) A sample hierarchical graph, $\gHG$ with 2 levels is shown. Communities are shown in different colors and the weight of a node and the weight of an edge in a higher level, represent the sum of the  edges in the corresponding community and bipartite, respectively. Node size and edge width indicate their weights.
        (b)	The matrix shows corresponding adjacency matrix of the graph at the leaf level, $\gG^2$, where each of its sub-graphs corresponds to a block in the adjacency matrix, communities correspond to diagonal blocks and are shown in different colors while bipartites are colored in gray.
        (c) Decomposition of multinomial distribution as a recursive \textit{stick-breaking} process where at each iteration, first a fraction of the remaining weights $\rr_t$ is allocated to the $t$-th row (corresponding to the $t$-th node in the sub-graph) and then this fraction, $\rv_t$, is distributed among that row of lower triangular adjacency matrix, $\hat{A}$. 
        (d), (e) Parallel generation of communities and bipartites, respectively. Shadowed lines are the \textit{augmented edges} representing candidate edges at each step.
	}
	\vskip -20pt
\end{figure}

\section{Hierarchical Graph Generation} \label{sec:HG_gen}

In graph generative networks, the objective is to learn a generative model, $p(\gG)$  given a set of training graphs.
This work aims to establish a hierarchical multi-resolution framework for generating graphs in a coarse-to-fine fashion. In this framework, we assume that the graphs do not have node attributes, so the generative model only needs to characterize the graph topology.
Given a particular node ordering $\pi$, and a hierarchical graph $\gHG := \{ \gG^0,  ...., \gG^{L-1}, \gG^L \} $, produced by recursively applying a graph partitioning function, $\mathcal{F}$, we can factorize the generative model using the chain rule of probability as:
\begin{align} \label{eq:conditional_multi}
    p(\gG = \gG^L , \pi) = p( \{ \gG^L, \gG^{L-1}, ... , \gG^0 \} , \pi)  %
    & = p(\gG^L , \pi ~|~ \{\gG^{L-1}, ... , \gG^0 \})~
     ... ~
     p(\gG^{1} , \pi ~|~ \gG^0) ~ p(\gG^{0}) \nonumber\\
    &= \prod_{l=0}^{L} p(\gG^l , \pi ~|~ \gG^{l-1}) \times p(\gG^{0})
\end{align}

In other words, the generative process involves specifying the probability of the graph at each level conditioned on its parent level graph in the hierarchy.
This process is iterated recursively until the leaf level is reached. 
Here, the distribution of the root $p(\gG^{0})= p(\rw^0 = w_0)$ can be simply estimated using the empirical distribution of the number of edges, $w_0 = |\gE|$, of graphs in the training set.

Based on the partitioned structure within each level of $\mathcal{HG}$, the conditional generative probability 
$p(\gG^l ~|~ \gG^{l-1})$
can be decomposed into the probability of its communities and bipartite graphs as:
\begin{align} \label{eq:decomp}
    p(\gG^l ~|~ \gG^{l-1}) %
    & = p(\{\pg_{i}^l ~~\forall i \in \gV({\gG^{l-1}})\} ~~\cup~
    \{\bp_{ij}^l ~~\forall \edge{i,j} \in \gE({\gG^{l-1}})  \} ~|~ \gG^{l-1}) \nonumber \\
    & \approxeq \prod_{i ~ \in ~ \gV({\gG^{l-1}})} p(\pg_{i}^l  ~|~ \gG^{l-1}) 
    \times
    \prod_{{\edge{i,j}} \in ~ \gE({\gG^{l-1}})}  p(\bp_{ij}^l  ~|~ \gG^{l-1}, \{ \pg_{k}^l\}_{\pg_{k}^l \in \gG^{l} })
\end{align}
This decomposition is based on the assumption that, given the parent graph $\gG^{l-1}$,
generative probabilities of communities, $\pg_{i}^l$, are mutually independent.
Subsequent to the generation of community graphs, it further assumes that the generation probability of each bipartite can be modeled independent of the other bipartites.
\footnote{
Indeed, this assumption implies that the cross dependency between communities are primarily encoded by their  parent abstract graph which is reasonable where the nodes' dependencies are mostly local and are within community rather than being global.}
The following theorem leverages the properties of multinomial distribution to prove the conditional independence of the components for the integer-value weighted hierarchical graph (r.t. appendix \ref{proof:decomp_mn_independent} for the proof).
\begin{theorem} \label{thm:decomp_mn_independent}
Let the random vector $\rvw := [w_e]_{e ~\in~ \gE({\gG^l})}$ denote the set of weights of all edges of $\gG^l$  such that their sum is $w_0 = \1^{T}~\rvw$. 
The joint probability of $\rvw$ can be described by a multinomial distribution:
$ \rvw \sim \text{Mu}(\rvw~|~w_0, \vtheta^{l} ). $ 
By observing that the sum of edge weights within each community $\pg_i^l$ and bipartite graph $\bp_{ij}^l$ are determined by the weights of their parent edges in the higher level, $w_{ii}^{l-1}$ and $w_{ij}^{l-1}$ respectively, we can establish that 
these components are conditionally independent and each of them follow a multinomial distribution:
\begin{align} \label{eq:mn_level}
    p(\gG^l ~|~ \gG^{l-1}) 
    \sim  \prod_{i ~ \in ~ \gV({\gG^{l-1}})} \text{Mu} ( [w_e]_{e ~\in~ \pg_{i}^l}  ~|~ w^{l-1}_{ii}, \vtheta^{l}_{ii}) 
    \prod_{{\edge{i,j}} \in ~ \gE({\gG^{l-1}})} \text{Mu} ( [w_e]_{e ~\in~ \bp_{ij}^l}  ~|~ w^{l-1}_{ij}, \vtheta^{l}_{ij}) 
\end{align}
where
$\{ \vtheta^{l}_{ij}[e] \in [0, 1], ~ \text{s.t.} ~  \1^T \vtheta^{l}_{ij} = 1 ~ |~ \forall ~ \edge{i,j}  \in ~ \gE({\gG^{l-1}}) \}$ are the multinomial model's parameters.
\end{theorem}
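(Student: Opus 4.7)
The plan is to reduce the statement to the classical grouping (or aggregation) property of the multinomial distribution. By construction of the recursive clustering, the edges of $\gG^l$ decompose disjointly into the edges of the communities and of the bipartites, namely $\gE(\gG^l) = \bigsqcup_i \gE(\pg_i^l) \,\sqcup\, \bigsqcup_{\edge{i,j}} \gE(\bp_{ij}^l)$, and the parent-level weights $w^{l-1}_{ii}$ and $w^{l-1}_{ij}$ are by definition the block totals of the child edge weights within each community and bipartite. The theorem then reduces to the well-known fact that, for a multinomial random vector, conditioning on block totals produces a product of independent within-block multinomials.

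Concretely, I would index the blocks by $B \in \{\pg_i^l\}_i \cup \{\bp_{ij}^l\}_{\edge{i,j}}$, write $\Theta_B := \sum_{e \in B} \theta^l_e$ and $W_B := \sum_{e \in B} w_e$ (so that $W_{\pg_i^l} = w^{l-1}_{ii}$ and $W_{\bp_{ij}^l} = w^{l-1}_{ij}$), and invoke two elementary identities:
\[\frac{w_0!}{\prod_e w_e!} = \frac{w_0!}{\prod_B W_B!}\,\prod_B \frac{W_B!}{\prod_{e \in B} w_e!}, \qquad \prod_e (\theta^l_e)^{w_e} = \prod_B \Theta_B^{W_B}\,\prod_{e \in B} (\theta^l_e / \Theta_B)^{w_e}.\]
Together these identities factor the joint PMF of $\rvw$ into two pieces: (i) a multinomial over the block totals $\{W_B\}$ with aggregated parameters $\{\Theta_B\}$, and (ii) a product over $B$ of multinomials on the within-block weights with parameters $\vtheta^l_B[e] := \theta^l_e / \Theta_B$, each of which lies in $[0,1]$ and sums to one by construction.

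Finally, I would identify piece (i) with the parent-level marginal $p(\gG^{l-1})$: the tuple $\{W_B\}$ is exactly the edge-weight vector of $\gG^{l-1}$, consisting of a self-edge at $v_i^{l-1}$ carrying $w^{l-1}_{ii}$ and a cross-edge $\edge{v_i^{l-1}, v_j^{l-1}}$ carrying $w^{l-1}_{ij}$, so dividing the joint by this marginal yields $p(\gG^l \mid \gG^{l-1}) = \prod_B \text{Mu}([w_e]_{e \in B} \mid W_B, \vtheta^l_B)$, which is precisely \eqref{eq:mn_level}. The main obstacle is essentially bookkeeping: the argument is the classical multinomial grouping property, but care is needed to verify that the edge partition is disjoint and exhaustive, and to confirm that the aggregated multinomial on $\{W_B\}$ genuinely matches the parent-level distribution $p(\gG^{l-1})$ rather than being a mere algebraic rewriting, which is what promotes the factorization from an identity into a statement of conditional independence.
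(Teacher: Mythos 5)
Your proposal is correct and follows essentially the same route as the paper's own proof: the paper's Lemma \ref{lemma:ConditionalMN} establishes exactly the conditional-independence-given-block-totals property by computing the ratio $p(\rvw)/p(\{\rv_1,\dots,\rv_M\})$, which is the same multinomial-coefficient and parameter factorization you write out explicitly, and the theorem then follows because the block totals are determined by the parent graph $\gG^{l-1}$. The only cosmetic difference is that you factor the joint PMF first and then divide by the block-total marginal, whereas the paper packages the division as a standalone conditional-distribution lemma and invokes the aggregation property of multinomials for the denominator.
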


Therefore, given the parent graph at a higher level, the generation of graph at its subsequent level can be reduced to generation of its partition and bipartite sub-graphs.
As illustrated in figure \ref{fig:HG_gen}, this decomposition enables parallel generation of the communities in each level which is followed by predicting bipartite sub-graphs in that level. %
Each of these sub-graphs corresponds to a block in the adjacency matrix, as visualized in figure \ref{fig:ADJ}, so the proposed hierarchical model generates adjacency matrix in a blocks-wise fashion and constructs the final graph topology.

\subsection{Community Generation\label{sec:dist_com}}
Based on the equation \eq{eq:mn_level}, the edge weights within each community can be jointly modeled using a multinomial distribution. 
In this work, our objective is to specify the generative probability of communities in level $l$, $p(\pg_{i}^l  ~|~ \gG^{l-1})$, as an autoregressive process, hence, we need to factorize the joint multinomial distribution of edges as a sequence of conditional distributions.
Toward this goal, we show in appendix: \ref{apdx:mn2bn} %
that this multinomial distribution can be decomposed into a sequence of binomial distribution of each edge using a \textit{stick-breaking} process.
This result enables us to model the community generation as an edge-by-edge autoregressive process with $\mathcal{O}(|\gV_{\pg}|^2)$  generation steps,
similar to existing algorithms such as GraphRNN \citep{you2018graphrnn} or DeepGMG \citep{li2018learning}.

However, inspired by GRAN \citep{liao2019GRAN}, a community can be generated more efficiently by generating one node at a time, reducing the sampling process to $\mathcal{O}(|\gV_{\pg}|)$ steps.  
This requires decomposing the generative probability of edges in a group-wise form,
where the candidate edges between the $t$-th node, the new node at the $t$-th step, and the already generated graph are grouped together.
In other words, this model completes the lower triangle adjacency matrix one row at a time conditioned on the already generated sub-graph and the parent-level graph.
The following theorem formally derives this decomposition for multinomial distributions.

\begin{theorem} \label{thm:mn2bnmn}
For a random counting vector $\rvw \in \ZP^E$ with a multinomial distribution $\text{Mu}(\rvw~ | ~ w, \vtheta)$, %
let's split it into $T$ disjoint groups $\rvw=[\rvu_1, ~ ..., \rvu_T ]$ where $\rvu_t \in \ZP^{E_t} ~,~ \sum_{t=1}^T {E_t} = E $,
and also split the probability vector accordingly as $\vtheta=[\vtheta_1, ~ ..., \vtheta_T ]$.
Additionally, let's define sum of all variables in the $t$-th group ($t$-th step) by a random count variable $\rv_t := \sum_{e=1}^{E_t} \ru_{t,e}$.
Then, the multinomial distribution can be factorized as a chain of binomial and multinomial distributions:
\begin{align} \label{eq:mn2bnmn}
    &\text{Mu}(\rvw=[\rvu_1, ~ ..., \rvu_T ]| ~ w, \vtheta=[\vtheta_1, ..., \vtheta_T ]) %
    = \prod_{t=1}^{T} \text{Bi}(\rv_{t} ~ | ~ \rr_t,~ {\eta}_{\rv_{t}}) ~
    \text{Mu}(\rvu_{t}~ | ~ \rv_{t},~ \vlambda_{{t}}), \\
    &\text{where: } \rr_t = w - \sum_{i<t}\rv_i ~,~ 
    \eta_{\rv_t } = \frac{\1^{\T}~ \vtheta_t}{1-\sum_{i<t} \1^{\T}~ \vtheta_i}, ~~
    \vlambda_{{t}} = \frac{\vtheta_t}{\1^{\T}~ \vtheta_t}. \nonumber
\end{align}
Here, $\rr_t$ denotes the  remaining weight at $t$-th step, and the probability of binomial, $\eta_{\rv_t}$, is the fraction of the remaining probability mass that is allocated to $\rv_t$, \ie the sum of all weights in the $t$-th group.
The vector parameter $\vlambda_{{t}}$ is the normalized multinomial probabilities of all count variables in the $t$-th group.
Intuitively, this decomposition of multinomial distribution can be viewed as a recursive \textit{stick-breaking} process where at each step $t$: 
first a binomial distribution is used to determine how much probability mass to allocate to the current group, and a multinomial distribution is used to distribute that probability mass among the variables in the group. The resulting distribution is equivalent to the original multinomial distribution. 
\textbf{Proof:} Refer to appendix \ref{apdx:proof_mn2bnmn}.
\end{theorem}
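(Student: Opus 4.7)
The cleanest route is induction on the number of groups $T$, with the workhorse being the case $T=2$, which is essentially the standard fact that a multinomial can be split by first drawing the block total from a binomial and then distributing that total with a renormalized multinomial.

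The plan is to first dispatch the base case $T=1$: here $\rr_1 = w$, $\eta_{\rv_1} = \1^\T \vtheta_1 = 1$, and $\vlambda_1 = \vtheta_1$, so $\text{Bi}(\rv_1 \mid w, 1)$ is the point mass at $\rv_1 = w$ and the right-hand side collapses to $\text{Mu}(\rvu_1 \mid w, \vtheta_1)$, matching the left-hand side. Then I would verify the $T=2$ identity directly from the PMF formulas. Writing $p_1 = \1^\T \vtheta_1$ and $p_2 = 1-p_1 = \1^\T \vtheta_2$, the multinomial coefficient factors cleanly as
\begin{equation*}
\frac{w!}{\prod_{e} w_e!} \;=\; \binom{w}{\rv_1}\cdot \frac{\rv_1!}{\prod_{e\in 1} w_e!} \cdot \frac{\rv_2!}{\prod_{e\in 2} w_e!},
\end{equation*}
and the probability product $\prod_e \theta_e^{w_e}$ can be absorbed as $p_1^{\rv_1} p_2^{\rv_2}\cdot \prod_{e\in 1}(\theta_e/p_1)^{w_e}\cdot \prod_{e\in 2}(\theta_e/p_2)^{w_e}$. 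Regrouping these factors exactly reconstitutes $\text{Bi}(\rv_1\mid w,p_1)\cdot \text{Mu}(\rvu_1\mid \rv_1,\vtheta_1/p_1)\cdot \text{Mu}(\rvu_2\mid \rv_2,\vtheta_2/p_2)$, which is the claim after noting that the second binomial $\text{Bi}(\rv_2\mid w-\rv_1, p_2/(1-p_1))$ is degenerate at $\rv_2 = w-\rv_1$ and equals $1$.

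For the inductive step, assuming the identity for $T-1$ groups, I would peel off the first group using the $T=2$ result applied to the split $(\rvu_1,\,[\rvu_2,\ldots,\rvu_T])$ with block probabilities $(\1^\T\vtheta_1,\, 1-\1^\T\vtheta_1)$. This yields
\begin{equation*}
\text{Mu}(\rvw\mid w,\vtheta) = \text{Bi}(\rv_1\mid w,\1^\T\vtheta_1)\,\text{Mu}(\rvu_1\mid \rv_1,\vlambda_1)\,\text{Mu}\bigl([\rvu_2,\ldots,\rvu_T]\bigm| w-\rv_1,\,\vtheta'/(1-\1^\T\vtheta_1)\bigr),
\end{equation*}
where $\vtheta' = [\vtheta_2,\ldots,\vtheta_T]$ is renormalized to a probability vector. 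Applying the inductive hypothesis to this last factor with remaining total $w-\rv_1$ produces $T-1$ binomial/multinomial pairs whose parameters, after simplification, match exactly the ones in the theorem.

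The main obstacle, and the only part that needs care, is checking that the renormalization telescopes: when I invoke the inductive hypothesis on the tail with probability vector $\vtheta_t/(1-\1^\T\vtheta_1)$, the binomial parameter produced at step $t\ge 2$ comes out as $\frac{\1^\T\vtheta_t/(1-\1^\T\vtheta_1)}{1-\sum_{2\le i<t}\1^\T\vtheta_i/(1-\1^\T\vtheta_1)}$, which must collapse to $\frac{\1^\T\vtheta_t}{1-\sum_{i<t}\1^\T\vtheta_i}$. Multiplying numerator and denominator by $1-\1^\T\vtheta_1$ does this in one line. Similarly, the remaining-mass $\rr_t = (w-\rv_1) - \sum_{2\le i<t}\rv_i = w-\sum_{i<t}\rv_i$ matches immediately, and the multinomial-normalization factor $\vlambda_t = \vtheta_t/\1^\T\vtheta_t$ is unaffected by the intermediate rescaling. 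Once these bookkeeping identities are in hand, the induction closes and the theorem follows.
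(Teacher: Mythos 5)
Your proof is correct, and it takes a genuinely different route from the paper's. You argue by induction on the number of groups: the two-group case is verified directly from the PMF by factoring the multinomial coefficient through $\binom{w}{\rv_1}$ and splitting the probability product, and the inductive step peels off the first block and invokes the hypothesis on the renormalized tail, where the only delicate point --- that the renormalized binomial parameters telescope back to ${\eta}_{\rv_t} = \1^{\T}\vtheta_t \big/ \bigl(1-\sum_{i<t}\1^{\T}\vtheta_i\bigr)$ --- you check explicitly. The paper instead conditions on the entire vector of group sums at once: since $(\rv_1,\dots,\rv_T)$ is a deterministic function of $\rvw$, it writes $p(\rvw) = p(\rvw \mid \rv_1,\dots,\rv_T)\, p(\rv_1,\dots,\rv_T)$; the marginal of the sums is itself multinomial with parameters $\1^{\T}\vtheta_t$ by the aggregation property (Lemma \ref{lemma:groupsMN}), which is then unrolled into a chain of binomials by the stick-breaking lemma (Lemma \ref{thm:mn2bn}); and the conditional factorizes into independent renormalized multinomials (Lemma \ref{lemma:ConditionalMN}), shown by a direct PMF ratio. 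The trade-off: your argument is self-contained and elementary, needing nothing beyond PMF algebra and one telescoping identity, whereas the paper's is modular and more probabilistically transparent --- its conditional-independence lemma is the same fact that underlies Theorem \ref{thm:decomp_mn_independent}, so it is proved once and reused, and conditioning on all sums simultaneously makes the structure (groups independent given their totals) explicit rather than emergent from an induction. Your $T=2$ computation is, in effect, a merged two-block special case of the paper's three lemmas, and either argument is a valid proof of Theorem \ref{thm:mn2bnmn}.
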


Let $\hat{\pg}_{i,t}^l$ denote an already generated sub-graph at the $t$-th step, augmented with the set of candidate edges 
from the new node, $v_t(\pg_i^l)$, to its preceding nodes,
denoted by $\hat{\gE}_t({\hat{\pg}_{i, t}^l}) := \{ \edge{t, j} ~|~ j<t \}$.
We collect the weights of the candidate edges in the random vector  $\rvu_{t} := [w_e]_{e ~\in~ \hat{\gE}_t({\hat{\pg}_{i, t}^l})}$ (that corresponds to the $t$-th row of the lower triangle of adjacency matrix $\hat{\Amat}_{i}^l$),
where the sum of the candidate edge weights is $\rv_{t}$ , remaining edges' weight is $\rr_t = w^{l-1}_{ii} - \sum_{i<t}\rv_i$ and total edges' weight of community ${\pg}_{i}^l$ is $w^{l-1}_{ii}$. 
Based on theorem \ref{thm:mn2bnmn}, the probability of $\rvu_{t}$ can be characterized by the product of a binomial and a multinomial distribution. 
So, we need to model the parameters of the these distributions. 
This process is illustrated  in figure \ref{fig:MN_BN} 
and figure \ref{fig:Comm_AR} in appendix.
We further increase the expressiveness of the generative network by extending this probability to a mixture model with $K$ mixtures:
\begin{align}
    p(\rvu_{t} ) &= \sum_{k=1}^{K}  \vbeta_k^l \text{Bi}(\rv_{t}  ~|~ \rr_t,~ {\eta}_{t, k}^{l}) 
    \text{Mu}(\rvu_{t}~ |~ \rv_{t},~ \vlambda_{{t}, k}^{l})
    \label{eq:mix_mn_pg}\\
    \vlambda_{t, k}^{l} &=
    \softmax \left(  \mathrm{MLP}_{\vtheta}^l \big( \left[ \Delta \vh_{\hat{\gE}_t({\hat{\pg}_{i, t}^l})}~||~\mathrm{pool}(\vh_{\hat{\pg}_{i, t}^l})~||~ h_{\parents(\pg_{i}^l)} \right] \big)  \right)[k,:]
     \label{eq:p_mn} \\
    \eta_{t, k}^{l} &=
    \mathrm{sigmoid} \left( \mathrm{MLP}_{\eta}^l \big( \left[
    \mathrm{pool}(\vh_{\hat{\pg}_{i, t}^l})~||~ h_{\parents(\pg_{i}^l)} \right] \big)  \right)[k]
    \nonumber\\
    \vbeta^{l} &= \softmax \left( \mathrm{MLP}_{\beta}^l \big( \left[ \mathrm{pool}(\vh_{\hat{\pg}_{i, t}^l} ) ~||~ h_{\parents(\pg_{i}^l)} \right]\big)  \right) \nonumber %
\end{align}
Where $\Delta \vh_{\hat{\gE}_t({\hat{\pg}_{i, t}^l})}$  is a $ |\hat{\gE}_t({\hat{\pg}_{i, t}^l})| \times d_h$ dimensional matrix, consisting of the set of candidate edge \features  $ \{\Delta h_{\edge{t, s}} := h_t - h_s ~|~ \forall ~ \edge{t,s}~ \in \hat{\gE}_t({\hat{\pg}_{i, t}^l}) \} ~$,
$~\vh_{\hat{\pg}_{i, t}^l}$  is a $ t \times d_h$ matrix of node \features of $\hat{\pg}_{i,t}^l$ learned by a GNN model: 
$\vh_{\hat{\pg}_{i, t}^l} = \mathrm{GNN}^l_{com} ( \hat{\pg}_{i, t}^l )$.
Here, 
the graph level representation is obtained by the $\mathrm{addpool()}$ aggregation function and
the mixture weights are denoted by $\vbeta^{l}$. 
In order to produce $K \times |\gE_t({\pg_{i}^l})|$ dimensional matrix of multinomial probabilities, the $\mathrm{MLP}_{\vtheta}^l()$ network acts at the edge level, 
while $\mathrm{MLP}_{{\eta}{\rv}}^l()$ and $ \mathrm{MLP}_{\beta}^l()$ act at the graph level to produce the binomial probabilities and $K$ dimensional arrays for $K$ mixture models, respectively. 
All of these $\mathrm{MLP}$ networks are built by two hidden layers with $\mathrm{ReLU()}$ activation functions.

In the generative model of each community $\pg_{i}^l$, the \feature of its parent node, $ h_{\parents(\pg_{i}^l)} $,
is used as the context, and 
is concatenated to the node and edge \features at level $l$.  The operation $\big[ \vx~||~ y \big]$ concatenates vector $y$ to each row of matrix $\vx$. 
Since parent level reflects global structure of the graph, concatenating its features enriches the node and edge \features by capturing long-range interactions and  global structure of the graph, which is important for generating local components.

\subsection{Bipartite Generation\label{sec:dist_bipart}}
Once all the communities in level $l$ are generated, the edges of all bipartite graphs at that level can be predicted simultaneously.
An augmented graph $\hat{\gG}^l$ composed of all the communities, 
$\{\pg_{i}^l ~~\forall i \in \gV({\gG^{l-1}})\}$,
and the candidate edges of all bipartites, $\{\bp_{ij}^l ~~\forall \edge{i,j} \in \gE({\gG^{l-1}})  \}$.
Node and edge \features are encoded by $\mathrm{GNN}^l_{bp} (\hat{\gG}^l)$. 
We similarly extend the multinomial distribution of a bipartite, in \eq{eq:mn_bp}, using a mixture model to express its generative probability:
\begin{align}
    &p(\rvw := {\hat\gE}({\bp_{ij}^l})) = \sum_{k=1}^{K} \vbeta_k^l  \text{Mu}(\rvw~|~w^{l-1}_{ij}, \vtheta^{l}_{ij, k} )
    \nonumber\\
    &\vtheta^{l}_{ij, k} = \softmax \left(  \mathrm{MLP}_{\vtheta}^l ( \left[ \Delta \vh_{\hat{\gE}({\bp_{ij}^l})} ~||~ \Delta h_{\parents(\bp_{ij}^l)}\right])   \right) [k,:]
    \label{eq:p_mn_bp}\\
    &\vbeta^{l} = \softmax \left( \mathrm{MLP}_{\beta}^l \big( \left[\mathrm{pool}( \Delta \vh_{\hat{\gE}({\bp_{ij}^l})}  ) ~||~ \Delta h_{\parents(\bp_{ij}^l)} \right]\big)  \right) \nonumber
\end{align}
where the random vector $\rvw := [w_e]_{ e ~\in~ \hat{\gE}({\bp_{ij}^l})}$ is the set of weights of all candidate edges in bipartite $ \bp_{ij}^l $ ,
and
$\Delta \vh_{\parents(\bp_{ij}^l)}$
are the parent edge \features of the bipartite graph.
By parametrizing the distribution of bipartite graphs based on both the generated communities and the parent graph, \HiGeN  can effectively capture the interdependence between bipartites and communities. 

\paragraph{Node Feature Encoding:}
To encode node \features, we extend GraphGPS proposed by \citet{rampavsek2022GraphGPS}. 
GraphGPS combines local message-passing with global attention mechanism and uses positional and structural encoding for nodes and edges to construct a more expressive and a scalable graph transformer (GT) \citep{dwivedi2020GraphTransformer}.
We employ GraphGPS as the GNN of the parent graph, $\mathrm{GNN}^{l-1} (\gG^{l-1})$.
However, to apply GraphGPS on augmented graphs of bipartites, $\mathrm{GNN}^l_{bp} (\hat{\gG}^l)$, we use distinct initial edge features to distinguish augmented (candidate) edges from real edges. Furthermore, for bipartite generation, the attention scores in the Transformers of the augmented graph $\hat{\gG}^l$ are masked to restrict attention only to connected communities. 
Moreover, for the community generation, we employ the GNN with attentive messages model, proposed in \citep{liao2019GRAN}, as $\mathrm{GNN}^l_{com}$.
The details of model architecture are provided in appendix \ref{apdx:GNN_arch}.

\section{Related Work}\label{sec:related}
In order to deal with hierarchical structures in molecular graphs, a generative process was proposed by \citet{jin2020hierarchicalMolec} which recursively selects motifs, the basic building blocks, from a set and predicts their attachment to the emerging molecule. However, this method requires prior domain-specific knowledge and relies on molecule-specific graph motifs. Additionally, the graphs are only abstracted into two levels, and component generation cannot be performed in parallel.
In \citep{kuznetsov2021molgrow}, a hierarchical normalizing flow model for molecular graphs was introduced, where new molecules are generated from a single node by recursively dividing each node into two. 
However, the merging and splitting of pairs of nodes in this model is based on the node's neighborhood without accounting for the diverse community structure of graphs, hence the  graph generation of this model is inherently limited.
\citet{shirzad2022TD-gen} proposed a graph generation framework based on tree decomposition that reduces upper bound on the maximum number of sampling decisions. However, this model is limited to a single level of abstraction with tree structure and requires $\mathcal{O}(nk)$ generation steps where $k$ represents the width of the tree decomposition, hence its scalability is limited to medium-sized graphs. 
In contrast, \HiGeN's ability to employ different generation models for community and inter-community subgraphs at multiple levels of abstraction is a  key advantage that enhances its  expressiveness.

\section{Experiments}\label{sec:exp}
In our empirical studies,  we compare the proposed hierarchical graph generative network  against state-of-the-art autoregressive models: GRAN and
GraphRNN models, diffusion models: DiGress \citep{vignac2022digress}, GDSS \citep{jo2022scoreBased}, GraphARM \citep{kong2023autoregressiveDiff} and EDGE \citep{chen2023EDGE} , and a GAN-based model: SPECTRE \citep{martinkus2022spectre}, on a range of synthetics and real datasets of various sizes. 

\paragraph{Datasets:}
We used 5 different benchmark graph datasets:
(1) the synthetic \emph{Stochastic Block Model (SBM)} dataset consisting of 200 graphs with 2-5 communities each with 20-40 nodes \citep{martinkus2022spectre};
(2) the \emph{Protein} including 918 protein graphs, 
each has 100 to 500 nodes representing amino acids that are linked if they are closer than 6 Angstroms \citep{dobson2003distinguishing},
(3) the \emph{Enzyme} that has 587 protein graphs of 10-125 nodes, representing protein tertiary structures of the enzymes from the BRENDA database \citep{schomburg2004brenda}
and
(4) the \emph{Ego} dataset containing 757 3-hop ego networks with 50-300 nodes extracted from the CiteSeer dataset
\citep{sen2008collectiveEgo}.
(5) \emph{Point Cloud} dataset, which consists of 41  $3D$ point clouds of household objects. The dataset consists of large graphs with up to 5k nodes and approximately 1.4k nodes on average \citep{neumann2013graph}. 

\paragraph{Graph Partitioning}
Different algorithms approach the problem of graph partitioning (clustering) using various clustering quality functions. Two commonly used families of such metrics are modularity and cut-based metrics \citep{tsitsulin2020graphClustering}. 
Although optimizing modularity metric is an NP-hard problem,  it is well-studied in the literature and
several graph partitioning algorithm based on this metric have been proposed.  For example, the Louvain algorithm \citep{blondel2008Louvain} starts with each node as its community and then repeatedly merges communities based on the highest increase in modularity until no further improvement can be made. This heuristic algorithm is computationally efficient and scalable to large graphs for community detection.
Moreover, a spectral relaxation of modularity metrics has been proposed in \cite{newman2006finding, newman2006modularity} which results in an analytical solution for graph partitioning.
Additionally, an unsupervised GNN-based pooling method inspired by this spectral relaxation was proposed for partitioning graphs with node attributes \citep{tsitsulin2020graphClustering}. As the modularity metric is based on the graph structure, it is well-suited for our problem. 
Therefore, we employed the Louvain algorithm to get a hierarchical clustering of the graph nodes in the datasets and then spliced out the intermediate levels  to achieve $\gHG$s with uniform depth of $L = 2$.

\begin{table*}[t]
\centering
\vskip -20pt
  \captionof{table}{\captionsize Comparison of generation metrics on benchmark datasets. The baseline results for SBM and Protein graphs are obtained from \citep{martinkus2022spectre, vignac2022digress}, the results for enzyme graphs 
  are obtained from
  \citep{kong2023autoregressiveDiff}.
    and the scores for Ego are from \citep{chen2023EDGE}.
    For Ego we report GNN-based performance metrics: GNN RBF and Frechet Distance (FD) besides
  structure-based statistics. For all the scores, the smaller the better.  
  Best results are indicated in bold and the second best methods are
underlined. 
  "-": not applicable due to resource issue or not reported in the reference papers. 
  On the right side, the samples from \HiGeN are depicted where
    the communities are distinguished with different colors at 2 levels. %
  \label{tbl:res_merged}
  }
\begin{minipage}{0.72\textwidth}
  \centering
  \begin{adjustbox}{width=.9\textwidth,}
    \def\arraystretch{1.4} \tabcolsep=4pt
    \begin{tabular}{l || cccc|cccc}
\toprule
          & \multicolumn{4}{c}{\emph{Stochastic block model}} & \multicolumn{4}{c}{\emph{Protein}} \\
 Model    &  Deg. \darrow & Clus. \darrow & Orbit\darrow & {Spec.\,\darrow} 
            &  Deg. \darrow & Clus. \darrow & Orbit\darrow & {Spec.\,\darrow} \\ 
\midrule 
 \textbf{GraphRNN}     & {0.0055} & {0.0584} & {0.0785} & {0.0065} & \underline{0.0040} & {0.1475} & {0.5851} & {0.0152} \\
 \textbf{GRAN}         & {0.0113} & {0.0553} & {0.0540} & \underline{0.0054} & {0.0479} & {0.1234} & {0.3458} & {0.0125}  \\
 \textbf{SPECTRE}      & \underline{0.0015} & {0.0521} & \underline{0.0412} & {0.0056} & {0.0056} & \underline{0.0843} & \underline{0.0267} & \underline{0.0052}    \\
 \textbf{DiGress}      & {\textbf{0.0013}}& \textbf{0.0498}  & {0.0433} & {-}      & {-} & {-} & {-} & {-}\\
 \midrule
 \textbf{\HiGeN-m   }   & {0.0017} & \underline{0.0503} & {0.0604} & {0.0068} & {0.0041} & {0.109} & {0.0472} & {0.0061} \\ 
 \textbf{\HiGeN}        & {0.0019} & {\textbf{0.0498}} & {\textbf{0.0352}} & {\textbf{0.0046}} & \textbf{0.0012} & \textbf{0.0435} & \textbf{0.0234} & \textbf{0.0025}  \\
 \end{tabular}
  \end{adjustbox}
  \vskip 10pt
  \begin{adjustbox}{width=.41\textwidth,}
    \def\arraystretch{1.3} \tabcolsep=4pt
     
     \begin{tabular}{l || ccc}
        \toprule
          &  \multicolumn{3}{c}{\emph{Enzyme}}\\
          Model    &  Deg. \darrow &  Clus. \darrow & Orbit  \darrow \\
          \midrule
          \textbf{GraphRNN}     &  {0.017} &  {0.062} & {0.046}      \\
          \textbf{GRAN}         &  {0.054} &  {0.087} & {0.033}        \\
          \textbf{GDSS}         &  0.026 &  0.061 & 0.009         \\
            \textbf{SPECTRE}    & 0.136 & 0.195 & 0.125\\
            \textbf{DiGress}    & $\bm{0.004}$ & 0.083 &0.002 \\
            \textbf{GraphARM}   & 0.029 & \underline{0.054} &0.015\\
          \midrule
        \textbf{\HiGeN-m }    & 0.027 & 0.157 &\underline{{1.2}e-3}         \\
        \textbf{\HiGeN}        & \underline{0.012}& $\bm{0.038}$ &$\bm{7.2}$e-4   \\
     \end{tabular}

  \end{adjustbox}
  \begin{adjustbox}{width=.58\textwidth,}
    \def\arraystretch{1.4} \tabcolsep=4pt
    \begin{tabular}{l || ccccc}
\toprule
          & \multicolumn{5}{c}{\emph{Ego}}\\
 Model    &  Deg. \darrow & Clus. \darrow & Orbit\darrow & {GNN RBF \darrow} & FD \darrow \\
 \midrule 
\textbf{GraphRNN} & 0.0768 & 1.1456 & 0.1087 & 0.6827 & 90.57 \\
\textbf{GRAN}  & 0.5778 & 0.3360 & \underline{0.0406} & 0.2633 & 489.96\\
\textbf{GDSS}  &  0.8189 & 0.6032 & 0.3315 & 0.4331 & 60.61\\
\textbf{DiscDDPM}  & 0.4613 & 0.1681 & 0.0633 & 0.1561 & 42.80\\
\textbf{DiGress}  &  0.0708 & \underline{0.0092} &0.1205 &0.0489 &   18.68\\
\textbf{EDGE}     &  \underline{0.0579} & 0.1773 & 0.0519 & 0.0658 &  {15.76}  \\
\midrule
\textbf{\HiGeN-m}  & 0.114   & 0.0378 &  0.0535   &  \textbf{0.0420} & \underline{12.2} \\
\textbf{\HiGeN}  &  \textbf{0.0472}  & \textbf{0.0031} &  \textbf{0.0387}   &   \underline{0.0454} & \textbf{5.24}\\
 \end{tabular}
  \end{adjustbox}
\end{minipage}
\begin{minipage}{0.27\textwidth}
\centering
\vspace{1pt}
\begin{adjustbox}{width=1.\textwidth,}
\tabcolsep=1pt
\begin{tabular}{cc}
    \adjincludegraphics[width=.5\textwidth,
		trim={0 {0\height} 0 {.0\height}},clip]{./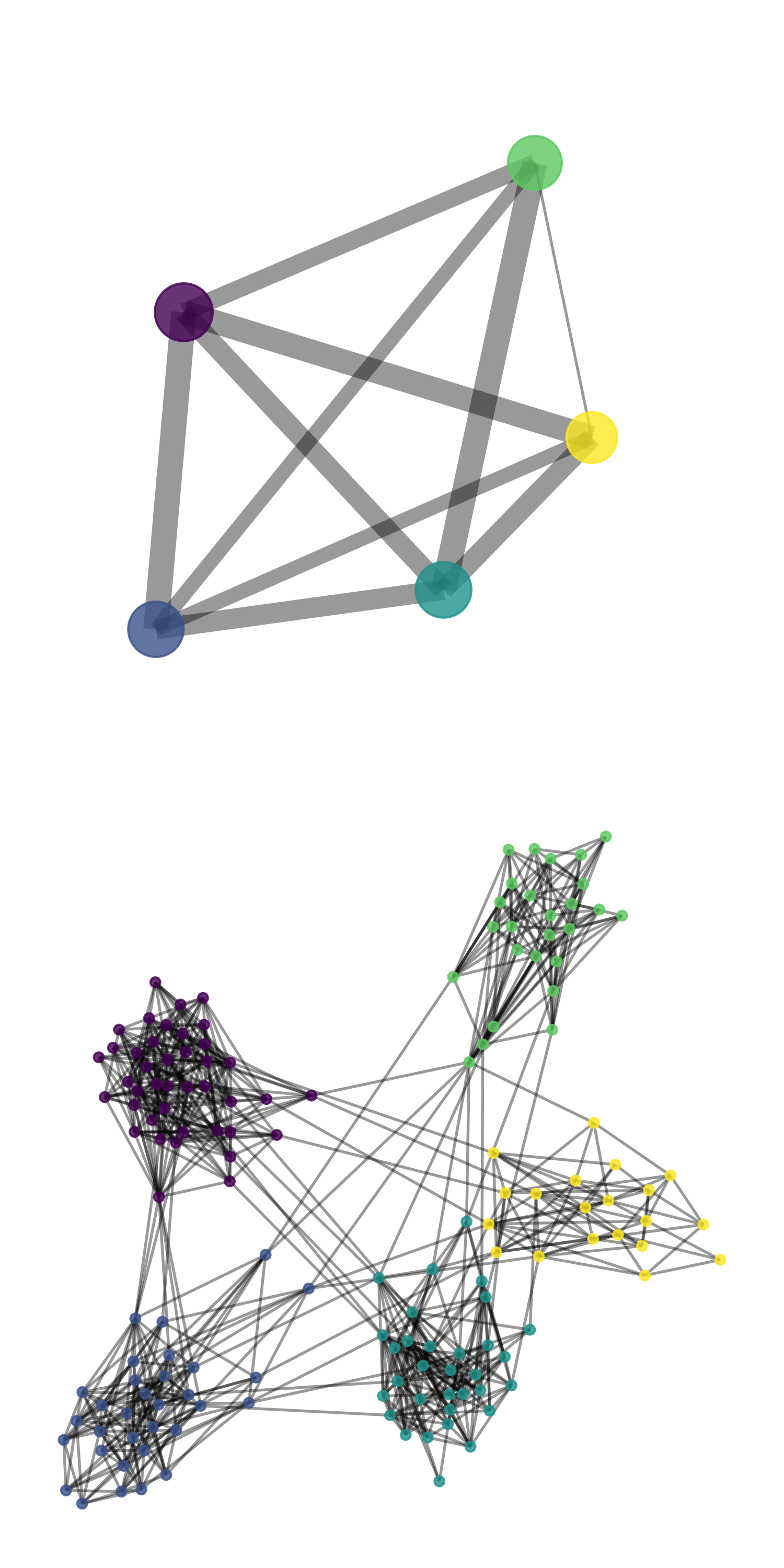}
    & %
    \adjincludegraphics[width=.5\textwidth,
		trim={0 {0\height} 0 {.05\height}},clip]{./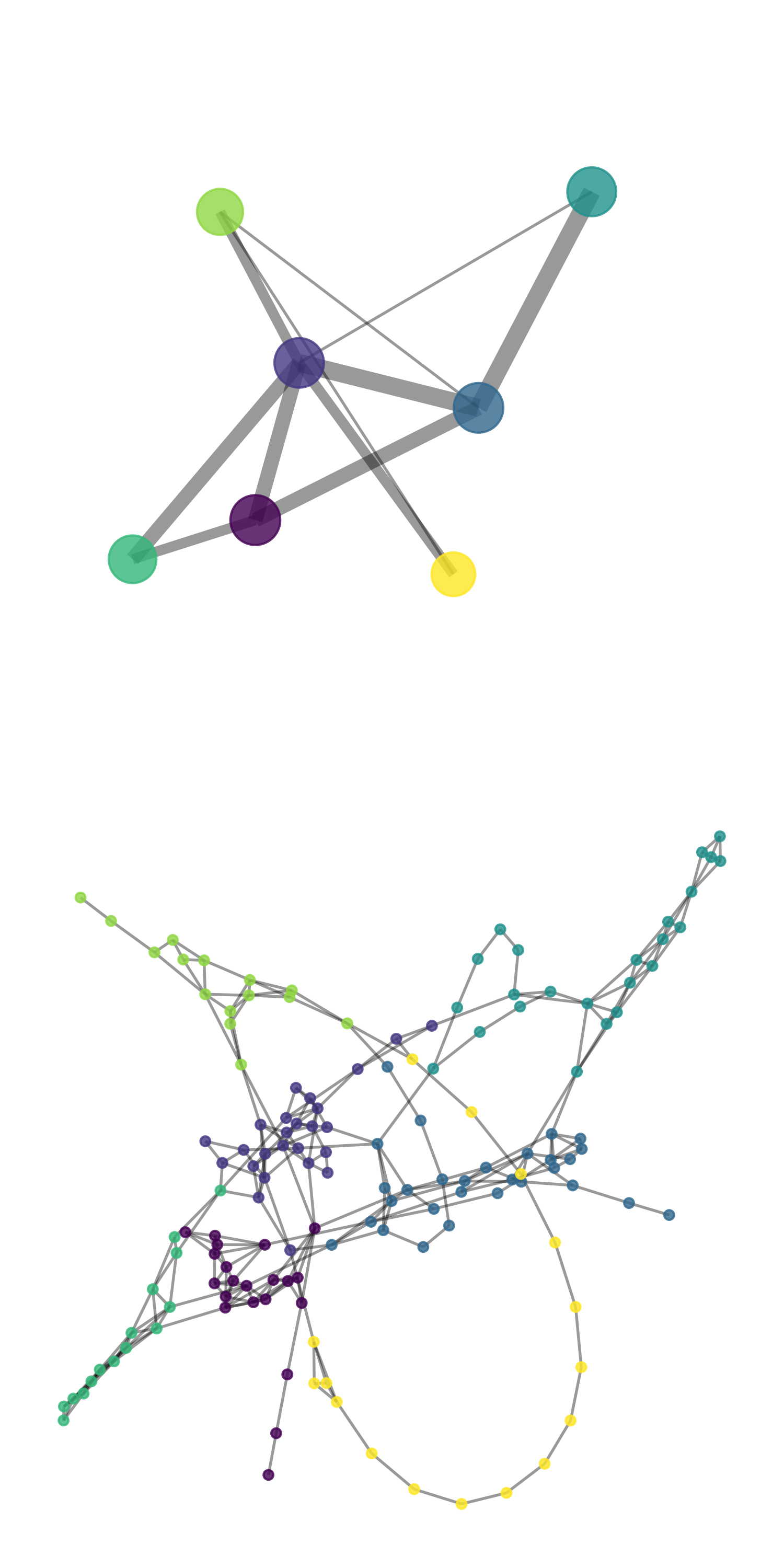}
    \vspace*{-5pt}
    \\ SBM& Protein\\
    \vspace*{-5pt}
    \adjincludegraphics[width=.5\textwidth,
		trim={0 {0\height} 0 {.05\height}},clip]{./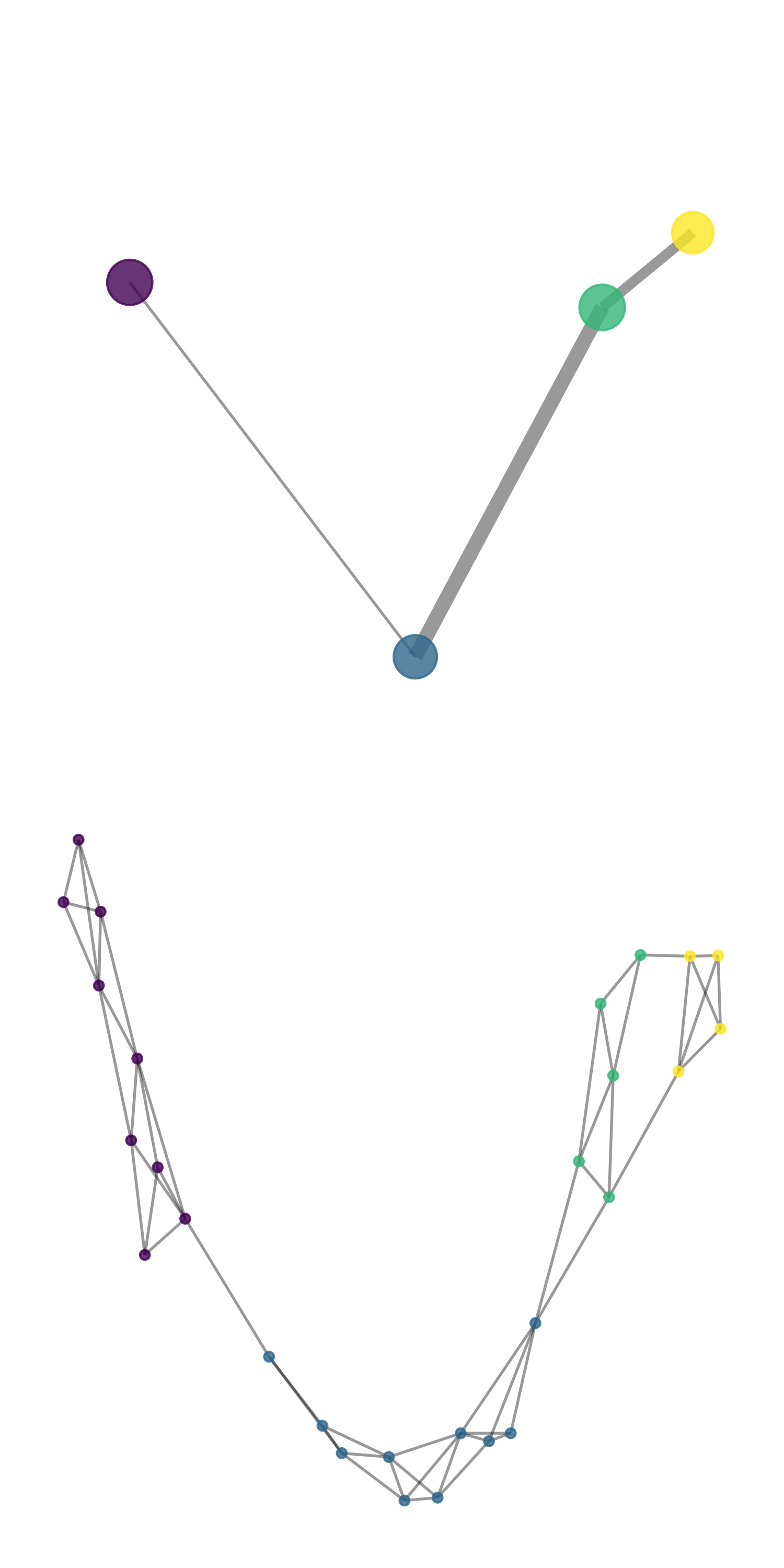}
    & %
    \adjincludegraphics[width=.5\textwidth,
		trim={0 {0\height} 0 {.05\height}},clip]{./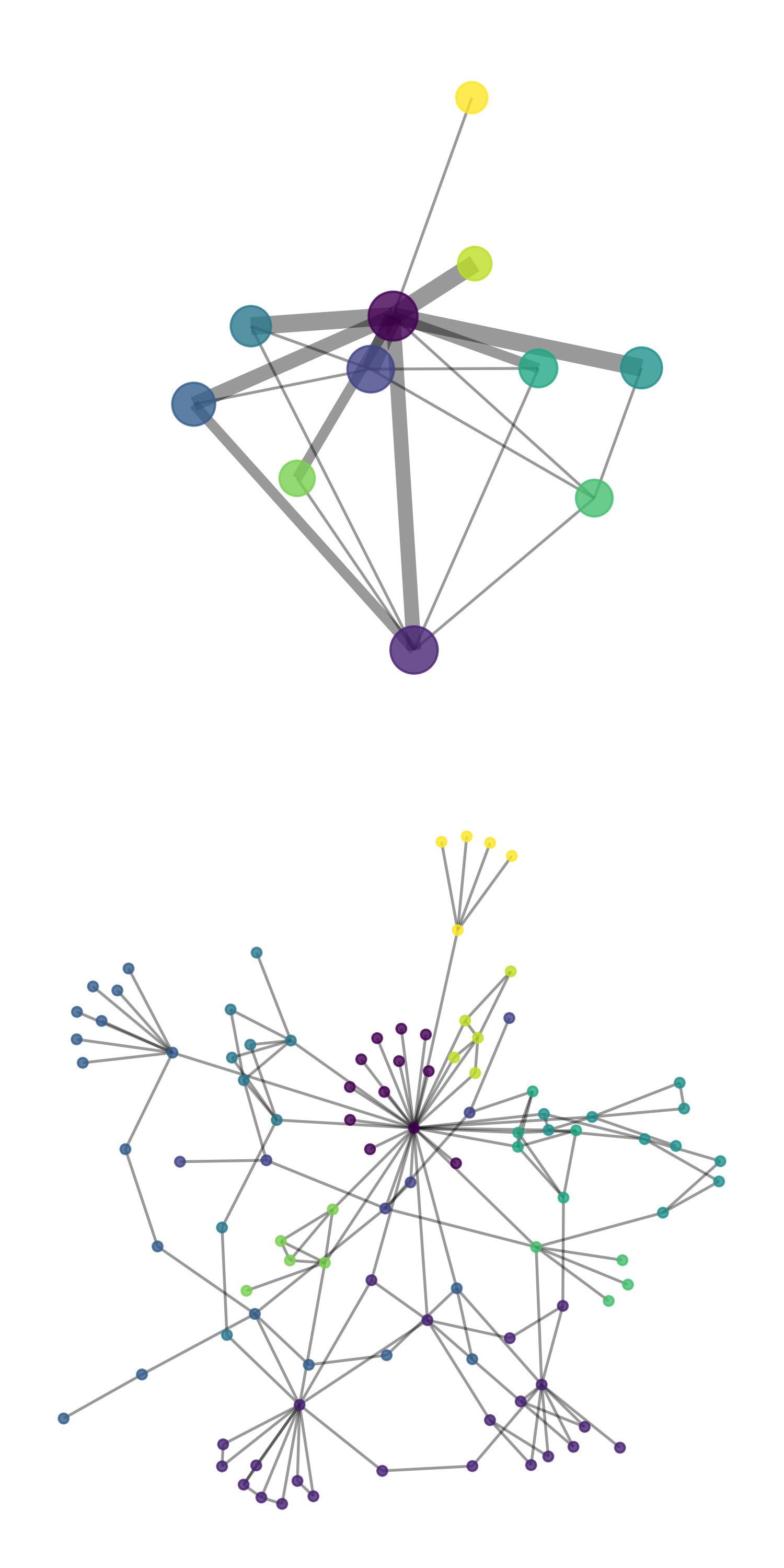}
    \\ Enzyme& Ego\\
\end{tabular}
\end{adjustbox}
    \vskip 0pt
\end{minipage}
\vskip -15pt
\end{table*}

\paragraph{Model Architecture}

In our experiments, the GNN models consist of 8 layers of GraphGPS layers, and the input node features are augmented with positional and structural encoding. This encoding includes the first 8 eigenvectors related to the smallest non-zero eigenvalues of the Laplacian and the diagonal of the random-walk matrix up to 8 steps. Each hierarchical level employs its own GNN and output models. For more architectural details, please refer to Appendix \ref{apdx:GNN_arch} and \ref{apdx:exp_details}.

We conducted experiments using the proposed hierarchical graph generative network (\HiGeN) model with two variants for the output distribution of the leaf edges:
1) \textbf{\HiGeN}: 
the probability of the community edges' weights at the leaf level are modeled by mixture of Bernoulli, using $\mathrm{sigmoid}()$ activation in \eq{eq:p_mn},  since the leaf levels in our experiments have binary edges weights. 
2)\textbf{\HiGeN-m}: the model uses a mixture of multinomial distributions \eq{eq:mix_mn_pg} to describe the output distribution for all levels.
In this case, we observed that modeling the probability parameters of edge weights at the leaf level, denoted by $\vlambda_{t, k}^L$ in \eq{eq:p_mn}, by a multi-hot activation function,
defined as %
$ \sigma(\mathbf{z})_i := \nicefrac{\mathrm{sigmoid}({z_i})}{\sum_{j=1}^K \mathrm{sigmoid}({z_j})} $ where $\sigma: \mathbb{R}^K \to (K-1)$-{simplex},
provided slightly better performance than the standard $\softmax()$ function.
However, for both {\HiGeN} and {\HiGeN-m}, the probabilities of the integer-valued edges at the higher levels are still modeled by the standard $\softmax()$ function.\footnote{
As the leaf levels have binary edge weights while the sum of their weights is determined by their parent edge, a possible extension to this work could be using the cardinality potential model \citep{hajimirsadeghi2015visual}, which is derived to model the distribution over the set of binary random variables, to model the edge weight at the leaf level.}

\paragraph{Metrics} To evaluate the graph generative models, 
we compare the distributions of four different graph structure-based statistics between the ground truth and generated graphs: (1) degree distributions, (2) clustering coefficient distributions, (3) the number of occurrences of all orbits with four nodes, and (4) the spectra of the graphs by computing the eigenvalues of the normalized graph Laplacian. 
The first three metrics capture local graph statistics, while the spectra represents global structure. 
The maximum mean discrepancy (MMD) score over these statistics are used as the metrics. 
While \citet{liu2019graph} computed MMD scores using the computationally expensive Gaussian earth mover's distance (EMD) kernel, \citet{liao2019GRAN} proposed using the total variation (TV) distance as an alternative measure. TV distance is much faster and still consistent with the Gaussian EMD kernel.
Recently, \citet{o2021evaluationMetric} suggested using other efficient kernels such as an RBF kernel, or a Laplacian
kernel, or a linear kernel.
Moreover, \citet{thompson2022OnEvaluationMetric} proposed new evaluation metrics for comparing graph sets using a random-GNN approach where GNNs are employed to extract meaningful graph features. However, in this work, we follow the experimental setup and evaluation metrics of \citep{liao2019GRAN}, except for the enzyme dataset where we use a Gaussian EMD kernel to be consistent with the results reported in \citep{jo2022scoreBased}.
GNN-based performance metrics of \HiGeN model are also reported in appendix \ref{apdx:modelANDsamples}.

The performance metrics of the proposed \HiGeN models are reported in Table \ref{tbl:res_merged}, together with generated graph samples of \HiGeN. 
The results demonstrate that \HiGeN effectively captures graph statistics and achieves state-of-the-art on all the benchmarks graphs across various generation metrics.
This improvement in both local and global properties of the generated graphs highlights the effectiveness of the hierarchical graph generation approach, which models communities and cross-community interactions separately.
A comparison of sampling times for the model can be found in Appendix \ref{apdx:sampling_time}. Additionally, Appendix \ref{apdx:modelANDsamples} contains visual comparisons of generated graphs by the \HiGeN models and an experimental evaluation of various node ordering and partitioning functions.

For \textit{point cloud dataset}, the augmented graph of bipartites, $\hat{\gG}^l$,  contains very large number of candidate edges, leading to out-of-memory during training. 
 To overcome this challenge, we adopted a sub-graph sampling strategy, allowing us to generate one or a subset of bipartites at a time to ensure memory constraints were met.
In our experiments, we sequenced the generation of bipartites based on the index of their parent edges in the parent graph.
\begin{wrapfigure}{r}{0.4\linewidth} %
    \centering
    \vskip -5pt
    \begin{minipage}{1.\linewidth}
    \captionof{table}{\captionsize Comparison of generation metrics on benchmark 3D point cloud.
    The baseline results are from \citep{liao2019GRAN}.}
    \vskip -5pt
    \centering
    \begin{adjustbox}{width=1.\linewidth,}
        \def\arraystretch{1.2} \tabcolsep=5pt
\begin{tabular}{l || cccc}
\toprule
          & \multicolumn{4}{c}{\emph{3D Point Cloud}}\\
 Model    &  Deg. $\downarrow$ & Clus. $\downarrow$ & Orbit$\downarrow$ & {Spec.\,$\downarrow$}
\\ \midrule 
\textbf{Erdos-Renyi} & 3.1e{-01} & 1.22  & 1.27   & 4.26e{-02} \\
\textbf{GRAN}  & \textbf{1.75e{-02}} & 5.1e{-01}  & 2.1e{-01}   & {7.45e{-03}} \\
\textbf{\HiGeN-s (L=2)}  &  3.48e-02  & \textbf{2.82e-01} &    {3.45e-02} &  {5.46e-03}   \\
\textbf{\HiGeN-s (L=3)}  &  4.97e-02  & {3.19e-01}      &    \textbf{1.97e-02} &  \textbf{5.2e-03}   \\
 \end{tabular}
    \end{adjustbox}
    \end{minipage}
    \vskip -10pt
\end{wrapfigure}
In this context, when generating the edges of $\bp_{ij}^l$, the augmented graph encompassed all preceding communities (${\pg_{k}^l ~~\forall k \le \max(i, j) }$), bipartites (${\bp_{xy}^l ~~\forall \edge{x,y} \le \edge{i,j}}$) and the candidate edges of $\bp_{ij}^l$.
We trained different models for hierarchical depth of $L=2$ and $L=3$.
The results of this approach, referred to as \HiGeN-s, in Table \ref{table:res_point_cloud} highlights that \HiGeN-s outperforms the baselines while other baseline models are not applicable due to out-of-memory and computational complexity. 

This modification can also address a potential limitation related to edge independence when generating all the inter-communities simultaneously. 
However, it's important to note that the significance of edge independence is more prominent in high-density graphs like community generations,  \citep{chanpuriya2021powerEdgeIndependent}, whereas its impact is less significant in sparser inter-communities of hierarchical approach. This is evident by the performance improvement observed in our experiments.

\section{Discussion and Conclusion} \label{sec:discus}

\paragraph{Node ordering sensitivity:}
The predefined ordering of dimensions can be crucial for training autoregressive (AR) models \citep{vinyals2015order}, and 
this sensitivity to node orderings is particularly pronounced in autoregressive graph generative models \citep{liao2019GRAN, chen2021orderMatters}.
However, in the proposed approach, the graph generation process is divided into the generation of multiple small partitions, performed sequentially across the levels, rather than generating the entire graph by a single AR model. Therefore, given an ordering for the parent level, the graph generation depends only on the permutation of the nodes within the graph communities rather than the node ordering of the entire graph. In other words, the proposed method is invariant to a large portion of possible node permutations, and therefore the set of distinctive adjacency matrices is much smaller in \HiGeN. 
For example, the node ordering $\pi_1 = [v_1, v_2, v_3, v_4]$ with clusters $\gV_{\gG_1}=\{v_1, v_2\}$ and $\gV_{\gG_2}=\{v_3, v_4\}$ has a similar hierarchical graph as $\pi_2 = [v_1, v_3, v_2, v_4]$, since the node ordering within the communities is preserved at all levels.
Formally, let $\{\pg_{i}^l ~~\forall i \in \gV_{\gG^{l-1}}\}$ be the set of communities at level $l$ produced by a deterministic partitioning function, where $n_i^l = |\gV(\pg_{i}^l)|$ denotes the size of each partition. The upper bound on the number of distinct node orderings in an HG generated by the proposed process is then reduced to $\prod_{l=1}^{L} \prod_{i} n_i^l!$. 
\footnote{It is worth noting that all node permutations do not result in distinctive adjacency matrices due to the automorphism property of graphs \citep{liao2019GRAN, chen2021orderMatters}. Therefore, the number of node permutations provides an upper bound rather than an exact count.} 

The proposed hierarchical model allows for highly parallelizable training and generation. Specifically, let $n_c := \max_i(|\pg_i|)$ denote the size of the largest community, then, it only requires $\mathcal{O}(n_c \log n)$ sequential steps to generate a graph of size $n$.

\paragraph{Block-wise generation:}
GRAN generates graphs one block of nodes at a time using an autoregressive approach, but its performance declines with larger block sizes. This happens because adjacent nodes in an ordering might not be related and could belong to different structural clusters.
In contrast, our method generates node blocks within communities with strong connections and predicts cross-links between communities using a separate model. This allows our approach to capture both local relationships within a community and global relationships across communities, enhancing the expressiveness of the graph generative model.

\paragraph{Conclusion} 
This work introduces a novel graph generative model, \HiGeN, which effectively captures the multi-scale and community structures inherent in complex graphs.
By leveraging a hierarchical approach that focuses on community-level generation and cross-community link prediction, \HiGeN demonstrates significant improvements in performance  and scalability  compared to existing models, bridging the gap between one-shot and autoregressive graph generative models.
Experimental results on benchmark datasets demonstrate that \HiGeN achieves state-of-the-art performance in terms of graph generation across various metrics.
The hierarchical and block-wise generation strategy of \HiGeN enables scaling up graph generative models to large and complex graphs, 
making it adaptable to emerging generative paradigms.

\section*{Reproducibility}
To ensure reproducibility, we provide comprehensive documentation of key elements in this work. The complete proofs of the theorems, community generative model, bipartite distribution, GNN architectures, and the loss function can be found in Appendices A, B, and C. Furthermore, Section 5 and Appendix D include experimental details, model architectures, benchmark graph datasets, their statistics, the computational resource and the graph partitioning function. These resources collectively facilitate the reproducibility of our research findings.

\subsubsection*{Acknowledgments}
We would like to thank Fatemeh Fani Sani for preparing the schematic figures. %

\medskip

\bibliography{references}

\begin{thebibliography}{56}
\providecommand{\natexlab}[1]{#1}
\providecommand{\url}[1]{\texttt{#1}}
\expandafter\ifx\csname urlstyle\endcsname\relax
  \providecommand{\doi}[1]{doi: #1}\else
  \providecommand{\doi}{doi: \begingroup \urlstyle{rm}\Url}\fi

\bibitem[Banerjee et~al.(2005)Banerjee, Merugu, Dhillon, Ghosh, and Lafferty]{banerjee2005Bregman}
Arindam Banerjee, Srujana Merugu, Inderjit~S Dhillon, Joydeep Ghosh, and John Lafferty.
\newblock Clustering with bregman divergences.
\newblock \emph{Journal of machine learning research}, 6\penalty0 (10), 2005.

\bibitem[Barab{\'a}si \& Albert(1999)Barab{\'a}si and Albert]{barabasi1999emergence}
Albert-L{\'a}szl{\'o} Barab{\'a}si and R{\'e}ka Albert.
\newblock Emergence of scaling in random networks.
\newblock \emph{science}, 286\penalty0 (5439):\penalty0 509--512, 1999.

\bibitem[Blei et~al.(2003)Blei, Ng, and Jordan]{blei2003LDA}
David~M Blei, Andrew~Y Ng, and Michael~I Jordan.
\newblock Latent dirichlet allocation.
\newblock \emph{Journal of machine Learning research}, 3\penalty0 (Jan):\penalty0 993--1022, 2003.

\bibitem[Blondel et~al.(2008)Blondel, Guillaume, Lambiotte, and Lefebvre]{blondel2008Louvain}
Vincent~D Blondel, Jean-Loup Guillaume, Renaud Lambiotte, and Etienne Lefebvre.
\newblock Fast unfolding of communities in large networks.
\newblock \emph{Journal of statistical mechanics: theory and experiment}, 2008\penalty0 (10):\penalty0 P10008, 2008.

\bibitem[Bojchevski et~al.(2018)Bojchevski, Shchur, Z{\"u}gner, and G{\"u}nnemann]{bojchevski2018netgan}
Aleksandar Bojchevski, Oleksandr Shchur, Daniel Z{\"u}gner, and Stephan G{\"u}nnemann.
\newblock Netgan: Generating graphs via random walks.
\newblock \emph{arXiv preprint arXiv:1803.00816}, 2018.

\bibitem[Chanpuriya et~al.(2021)Chanpuriya, Musco, Sotiropoulos, and Tsourakakis]{chanpuriya2021powerEdgeIndependent}
Sudhanshu Chanpuriya, Cameron Musco, Konstantinos Sotiropoulos, and Charalampos Tsourakakis.
\newblock On the power of edge independent graph models.
\newblock \emph{Advances in Neural Information Processing Systems}, 34:\penalty0 24418--24429, 2021.

\bibitem[Chen et~al.(2021)Chen, Han, Hu, Ruiz, and Liu]{chen2021orderMatters}
Xiaohui Chen, Xu~Han, Jiajing Hu, Francisco~JR Ruiz, and Liping Liu.
\newblock Order matters: Probabilistic modeling of node sequence for graph generation.
\newblock \emph{arXiv preprint arXiv:2106.06189}, 2021.

\bibitem[Chen et~al.(2023)Chen, He, Han, and Liu]{chen2023EDGE}
Xiaohui Chen, Jiaxing He, Xu~Han, and Li-Ping Liu.
\newblock Efficient and degree-guided graph generation via discrete diffusion modeling.
\newblock \emph{arXiv preprint arXiv:2305.04111}, 2023.

\bibitem[Choromanski et~al.(2020)Choromanski, Likhosherstov, Dohan, Song, Gane, Sarlos, Hawkins, Davis, Mohiuddin, Kaiser, et~al.]{choromanski2020Performer}
Krzysztof Choromanski, Valerii Likhosherstov, David Dohan, Xingyou Song, Andreea Gane, Tamas Sarlos, Peter Hawkins, Jared Davis, Afroz Mohiuddin, Lukasz Kaiser, et~al.
\newblock Rethinking attention with performers.
\newblock \emph{arXiv preprint arXiv:2009.14794}, 2020.

\bibitem[Dai et~al.(2020)Dai, Nazi, Li, Dai, and Schuurmans]{dai2020scalableGraphGen}
Hanjun Dai, Azade Nazi, Yujia Li, Bo~Dai, and Dale Schuurmans.
\newblock Scalable deep generative modeling for sparse graphs.
\newblock In \emph{International Conference on Machine Learning}, pp.\  2302--2312. PMLR, 2020.

\bibitem[De~Cao \& Kipf(2018)De~Cao and Kipf]{de2018molgan}
Nicola De~Cao and Thomas Kipf.
\newblock Molgan: An implicit generative model for small molecular graphs.
\newblock \emph{arXiv preprint arXiv:1805.11973}, 2018.

\bibitem[Dobson \& Doig(2003)Dobson and Doig]{dobson2003distinguishing}
Paul~D Dobson and Andrew~J Doig.
\newblock Distinguishing enzyme structures from non-enzymes without alignments.
\newblock \emph{Journal of molecular biology}, 330\penalty0 (4):\penalty0 771--783, 2003.

\bibitem[Duan et~al.(2022)Duan, Liu, Wang, Zheng, Zhou, Chen, Hu, and Wang]{duan2022comprehensive}
Keyu Duan, Zirui Liu, Peihao Wang, Wenqing Zheng, Kaixiong Zhou, Tianlong Chen, Xia Hu, and Zhangyang Wang.
\newblock A comprehensive study on large-scale graph training: Benchmarking and rethinking.
\newblock \emph{Advances in Neural Information Processing Systems}, 35:\penalty0 5376--5389, 2022.

\bibitem[Dwivedi \& Bresson(2020)Dwivedi and Bresson]{dwivedi2020GraphTransformer}
Vijay~Prakash Dwivedi and Xavier Bresson.
\newblock A generalization of transformer networks to graphs.
\newblock \emph{arXiv preprint arXiv:2012.09699}, 2020.

\bibitem[Erdos \& R{\'e}nyi(1960)Erdos and R{\'e}nyi]{erdos1960evolution}
Paul Erdos and Alfr{\'e}d R{\'e}nyi.
\newblock On the evolution of random graphs.
\newblock \emph{Publ. Math. Inst. Hung. Acad. Sci}, 5\penalty0 (1):\penalty0 17--60, 1960.

\bibitem[Goodfellow et~al.(2020)Goodfellow, Pouget-Abadie, Mirza, Xu, Warde-Farley, Ozair, Courville, and Bengio]{goodfellow2020GAN}
Ian Goodfellow, Jean Pouget-Abadie, Mehdi Mirza, Bing Xu, David Warde-Farley, Sherjil Ozair, Aaron Courville, and Yoshua Bengio.
\newblock Generative adversarial networks.
\newblock \emph{Communications of the ACM}, 63\penalty0 (11):\penalty0 139--144, 2020.

\bibitem[Haefeli et~al.(2022)Haefeli, Martinkus, Perraudin, and Wattenhofer]{haefeli2022diffusion}
Kilian~Konstantin Haefeli, Karolis Martinkus, Nathana{\"e}l Perraudin, and Roger Wattenhofer.
\newblock Diffusion models for graphs benefit from discrete state spaces.
\newblock \emph{arXiv preprint arXiv:2210.01549}, 2022.

\bibitem[Hajimirsadeghi et~al.(2015)Hajimirsadeghi, Yan, Vahdat, and Mori]{hajimirsadeghi2015visual}
Hossein Hajimirsadeghi, Wang Yan, Arash Vahdat, and Greg Mori.
\newblock Visual recognition by counting instances: A multi-instance cardinality potential kernel.
\newblock In \emph{Proceedings of the IEEE conference on computer vision and pattern recognition}, pp.\  2596--2605, 2015.

\bibitem[Jin et~al.(2020)Jin, Barzilay, and Jaakkola]{jin2020hierarchicalMolec}
Wengong Jin, Regina Barzilay, and Tommi Jaakkola.
\newblock Hierarchical generation of molecular graphs using structural motifs.
\newblock In \emph{International conference on machine learning}, pp.\  4839--4848. PMLR, 2020.

\bibitem[Jo et~al.(2022)Jo, Lee, and Hwang]{jo2022scoreBased}
Jaehyeong Jo, Seul Lee, and Sung~Ju Hwang.
\newblock Score-based generative modeling of graphs via the system of stochastic differential equations.
\newblock In \emph{International Conference on Machine Learning}, pp.\  10362--10383. PMLR, 2022.

\bibitem[Karami et~al.(2019)Karami, Schuurmans, Sohl-Dickstein, Dinh, and Duckworth]{karami2019invertible}
Mahdi Karami, Dale Schuurmans, Jascha Sohl-Dickstein, Laurent Dinh, and Daniel Duckworth.
\newblock Invertible convolutional flow.
\newblock \emph{Advances in Neural Information Processing Systems}, 32, 2019.

\bibitem[Kingma \& Ba(2014)Kingma and Ba]{kingma2014adam}
Diederik~P Kingma and Jimmy Ba.
\newblock Adam: A method for stochastic optimization.
\newblock \emph{arXiv preprint arXiv:1412.6980}, 2014.

\bibitem[Kingma \& Welling(2013)Kingma and Welling]{kingma2013auto}
Diederik~P Kingma and Max Welling.
\newblock Auto-encoding variational bayes.
\newblock \emph{arXiv preprint arXiv:1312.6114}, 2013.

\bibitem[Kipf \& Welling(2016)Kipf and Welling]{kipf2016semi}
Thomas~N Kipf and Max Welling.
\newblock Semi-supervised classification with graph convolutional networks.
\newblock \emph{arXiv preprint arXiv:1609.02907}, 2016.

\bibitem[Kong et~al.(2023)Kong, Cui, Sun, Zhuang, Prakash, and Zhang]{kong2023autoregressiveDiff}
Lingkai Kong, Jiaming Cui, Haotian Sun, Yuchen Zhuang, B.~Aditya Prakash, and Chao Zhang.
\newblock Autoregressive diffusion model for graph generation, 2023.

\bibitem[Kuznetsov \& Polykovskiy(2021)Kuznetsov and Polykovskiy]{kuznetsov2021molgrow}
Maksim Kuznetsov and Daniil Polykovskiy.
\newblock Molgrow: A graph normalizing flow for hierarchical molecular generation.
\newblock In \emph{Proceedings of the AAAI Conference on Artificial Intelligence}, volume~35, pp.\  8226--8234, 2021.

\bibitem[Leskovec et~al.(2010)Leskovec, Chakrabarti, Kleinberg, Faloutsos, and Ghahramani]{leskovec2010kronecker}
Jure Leskovec, Deepayan Chakrabarti, Jon Kleinberg, Christos Faloutsos, and Zoubin Ghahramani.
\newblock Kronecker graphs: An approach to modeling networks.
\newblock \emph{Journal of Machine Learning Research}, 11\penalty0 (Feb):\penalty0 985--1042, 2010.

\bibitem[Li et~al.(2018)Li, Vinyals, Dyer, Pascanu, and Battaglia]{li2018learning}
Yujia Li, Oriol Vinyals, Chris Dyer, Razvan Pascanu, and Peter Battaglia.
\newblock Learning deep generative models of graphs.
\newblock \emph{arXiv preprint arXiv:1803.03324}, 2018.

\bibitem[Liao et~al.(2019)Liao, Li, Song, Wang, Hamilton, Duvenaud, Urtasun, and Zemel]{liao2019GRAN}
Renjie Liao, Yujia Li, Yang Song, Shenlong Wang, Will Hamilton, David~K Duvenaud, Raquel Urtasun, and Richard Zemel.
\newblock Efficient graph generation with graph recurrent attention networks.
\newblock \emph{Advances in neural information processing systems}, 32, 2019.

\bibitem[Linderman et~al.(2015)Linderman, Johnson, and Adams]{linderman2015dependentMultinomial}
Scott Linderman, Matthew~J Johnson, and Ryan~P Adams.
\newblock Dependent multinomial models made easy: Stick-breaking with the p{\'o}lya-gamma augmentation.
\newblock \emph{Advances in Neural Information Processing Systems}, 28, 2015.

\bibitem[Liu et~al.(2019)Liu, Kumar, Ba, Kiros, and Swersky]{liu2019graph}
Jenny Liu, Aviral Kumar, Jimmy Ba, Jamie Kiros, and Kevin Swersky.
\newblock Graph normalizing flows, 2019.

\bibitem[Ma et~al.(2018)Ma, Chen, and Xiao]{ma2018constrained}
Tengfei Ma, Jie Chen, and Cao Xiao.
\newblock Constrained generation of semantically valid graphs via regularizing variational autoencoders.
\newblock \emph{arXiv preprint arXiv:1809.02630}, 2018.

\bibitem[{Manolis Savva} et~al.(2019){Manolis Savva}, {Abhishek Kadian}, {Oleksandr Maksymets}, Zhao, Wijmans, Jain, Straub, Liu, Koltun, Malik, Parikh, and Batra]{habitat19iccv}
{Manolis Savva}, {Abhishek Kadian}, {Oleksandr Maksymets}, Yili Zhao, Erik Wijmans, Bhavana Jain, Julian Straub, Jia Liu, Vladlen Koltun, Jitendra Malik, Devi Parikh, and Dhruv Batra.
\newblock Habitat: {A} {P}latform for {E}mbodied {AI} {R}esearch.
\newblock In \emph{Proceedings of the IEEE/CVF International Conference on Computer Vision (ICCV)}, 2019.

\bibitem[Martinkus et~al.(2022)Martinkus, Loukas, Perraudin, and Wattenhofer]{martinkus2022spectre}
Karolis Martinkus, Andreas Loukas, Nathana{\"e}l Perraudin, and Roger Wattenhofer.
\newblock Spectre: Spectral conditioning helps to overcome the expressivity limits of one-shot graph generators.
\newblock In \emph{International Conference on Machine Learning}, pp.\  15159--15179. PMLR, 2022.

\bibitem[Mirhoseini et~al.(2021)Mirhoseini, Goldie, Yazgan, Jiang, Songhori, Wang, Lee, Johnson, Pathak, Nazi, et~al.]{mirhoseini2021graphCircuit}
Azalia Mirhoseini, Anna Goldie, Mustafa Yazgan, Joe~Wenjie Jiang, Ebrahim Songhori, Shen Wang, Young-Joon Lee, Eric Johnson, Omkar Pathak, Azade Nazi, et~al.
\newblock A graph placement methodology for fast chip design.
\newblock \emph{Nature}, 594\penalty0 (7862):\penalty0 207--212, 2021.

\bibitem[Neumann et~al.(2013)Neumann, Moreno, Antanas, Garnett, and Kersting]{neumann2013graph}
Marion Neumann, Plinio Moreno, Laura Antanas, Roman Garnett, and Kristian Kersting.
\newblock Graph kernels for object category prediction in task-dependent robot grasping.
\newblock In \emph{International Workshop on Mining and Learning with Graphs at KDD}, 2013.

\bibitem[Newman(2006{\natexlab{a}})]{newman2006finding}
Mark~EJ Newman.
\newblock Finding community structure in networks using the eigenvectors of matrices.
\newblock \emph{Physical review E}, 74\penalty0 (3):\penalty0 036104, 2006{\natexlab{a}}.

\bibitem[Newman(2006{\natexlab{b}})]{newman2006modularity}
Mark~EJ Newman.
\newblock Modularity and community structure in networks.
\newblock \emph{Proceedings of the national academy of sciences}, 103\penalty0 (23):\penalty0 8577--8582, 2006{\natexlab{b}}.

\bibitem[O'Bray et~al.(2021)O'Bray, Horn, Rieck, and Borgwardt]{o2021evaluationMetric}
Leslie O'Bray, Max Horn, Bastian Rieck, and Karsten Borgwardt.
\newblock Evaluation metrics for graph generative models: Problems, pitfalls, and practical solutions.
\newblock \emph{arXiv preprint arXiv:2106.01098}, 2021.

\bibitem[Oord et~al.(2016)Oord, Dieleman, Zen, Simonyan, Vinyals, Graves, Kalchbrenner, Senior, and Kavukcuoglu]{oord2016wavenet}
Aaron van~den Oord, Sander Dieleman, Heiga Zen, Karen Simonyan, Oriol Vinyals, Alex Graves, Nal Kalchbrenner, Andrew Senior, and Koray Kavukcuoglu.
\newblock Wavenet: A generative model for raw audio.
\newblock \emph{arXiv preprint arXiv:1609.03499}, 2016.

\bibitem[Ramakrishnan et~al.(2021)Ramakrishnan, Gokaslan, Wijmans, Maksymets, Clegg, Turner, Undersander, Galuba, Westbury, Chang, Savva, Zhao, and Batra]{ramakrishnan2021hm3d}
Santhosh~Kumar Ramakrishnan, Aaron Gokaslan, Erik Wijmans, Oleksandr Maksymets, Alexander Clegg, John~M Turner, Eric Undersander, Wojciech Galuba, Andrew Westbury, Angel~X Chang, Manolis Savva, Yili Zhao, and Dhruv Batra.
\newblock Habitat-matterport 3d dataset ({HM}3d): 1000 large-scale 3d environments for embodied {AI}.
\newblock In \emph{Thirty-fifth Conference on Neural Information Processing Systems Datasets and Benchmarks Track (Round 2)}, 2021.

\bibitem[Ramp{\'a}{\v{s}}ek et~al.(2022)Ramp{\'a}{\v{s}}ek, Galkin, Dwivedi, Luu, Wolf, and Beaini]{rampavsek2022GraphGPS}
Ladislav Ramp{\'a}{\v{s}}ek, Michael Galkin, Vijay~Prakash Dwivedi, Anh~Tuan Luu, Guy Wolf, and Dominique Beaini.
\newblock Recipe for a general, powerful, scalable graph transformer.
\newblock \emph{Advances in Neural Information Processing Systems}, 35:\penalty0 14501--14515, 2022.

\bibitem[Reed et~al.(2017)Reed, Oord, Kalchbrenner, Colmenarejo, Wang, Chen, Belov, and Freitas]{reed2017parallel-multiscale-DE}
Scott Reed, A{\"a}ron Oord, Nal Kalchbrenner, Sergio~G{\'o}mez Colmenarejo, Ziyu Wang, Yutian Chen, Dan Belov, and Nando Freitas.
\newblock Parallel multiscale autoregressive density estimation.
\newblock In \emph{International Conference on Machine Learning}, pp.\  2912--2921. PMLR, 2017.

\bibitem[Schomburg et~al.(2004)Schomburg, Chang, Ebeling, Gremse, Heldt, Huhn, and Schomburg]{schomburg2004brenda}
Ida Schomburg, Antje Chang, Christian Ebeling, Marion Gremse, Christian Heldt, Gregor Huhn, and Dietmar Schomburg.
\newblock Brenda, the enzyme database: updates and major new developments.
\newblock \emph{Nucleic acids research}, 32\penalty0 (suppl\_1):\penalty0 D431--D433, 2004.

\bibitem[Sen et~al.(2008)Sen, Namata, Bilgic, Getoor, Galligher, and Eliassi-Rad]{sen2008collectiveEgo}
Prithviraj Sen, Galileo Namata, Mustafa Bilgic, Lise Getoor, Brian Galligher, and Tina Eliassi-Rad.
\newblock Collective classification in network data.
\newblock \emph{AI magazine}, 29\penalty0 (3):\penalty0 93--93, 2008.

\bibitem[Shi \& Malik(2000)Shi and Malik]{shi2000normalized}
Jianbo Shi and Jitendra Malik.
\newblock Normalized cuts and image segmentation.
\newblock \emph{IEEE Transactions on pattern analysis and machine intelligence}, 22\penalty0 (8):\penalty0 888--905, 2000.

\bibitem[Shirzad et~al.(2022)Shirzad, Hajimirsadeghi, Abdi, and Mori]{shirzad2022TD-gen}
Hamed Shirzad, Hossein Hajimirsadeghi, Amir~H Abdi, and Greg Mori.
\newblock Td-gen: Graph generation using tree decomposition.
\newblock In \emph{International Conference on Artificial Intelligence and Statistics}, pp.\  5518--5537. PMLR, 2022.

\bibitem[Shirzad et~al.(2023)Shirzad, Velingker, Venkatachalam, Sutherland, and Sinop]{shirzad2023exphormer}
Hamed Shirzad, Ameya Velingker, Balaji Venkatachalam, Danica~J Sutherland, and Ali~Kemal Sinop.
\newblock Exphormer: Sparse transformers for graphs.
\newblock \emph{arXiv preprint arXiv:2303.06147}, 2023.

\bibitem[Siegrist(2017)]{siegrist2017probability}
Kyle Siegrist.
\newblock \emph{Probability, Mathematical Statistics, Stochastic Processes}.
\newblock LibreTexts, 2017.
\newblock URL \url{https://stats.libretexts.org/Bookshelves/Probability_Theory/Probability_Mathematical_Statistics_and_Stochastic_Processes_(Siegrist)}.

\bibitem[Simonovsky \& Komodakis(2018)Simonovsky and Komodakis]{simonovsky2018graphvae}
Martin Simonovsky and Nikos Komodakis.
\newblock {GraphVAE}: Towards generation of small graphs using variational autoencoders.
\newblock \emph{arXiv preprint arXiv:1802.03480}, 2018.

\bibitem[Thompson et~al.(2022)Thompson, Knyazev, Ghalebi, Kim, and Taylor]{thompson2022OnEvaluationMetric}
Rylee Thompson, Boris Knyazev, Elahe Ghalebi, Jungtaek Kim, and Graham~W Taylor.
\newblock On evaluation metrics for graph generative models.
\newblock \emph{arXiv preprint arXiv:2201.09871}, 2022.

\bibitem[Tsitsulin et~al.(2020)Tsitsulin, Palowitch, Perozzi, and M{\"u}ller]{tsitsulin2020graphClustering}
Anton Tsitsulin, John Palowitch, Bryan Perozzi, and Emmanuel M{\"u}ller.
\newblock Graph clustering with graph neural networks.
\newblock \emph{arXiv preprint arXiv:2006.16904}, 2020.

\bibitem[Vignac et~al.(2022)Vignac, Krawczuk, Siraudin, Wang, Cevher, and Frossard]{vignac2022digress}
Clement Vignac, Igor Krawczuk, Antoine Siraudin, Bohan Wang, Volkan Cevher, and Pascal Frossard.
\newblock Digress: Discrete denoising diffusion for graph generation.
\newblock \emph{arXiv preprint arXiv:2209.14734}, 2022.

\bibitem[Vinyals et~al.(2015)Vinyals, Bengio, and Kudlur]{vinyals2015order}
Oriol Vinyals, Samy Bengio, and Manjunath Kudlur.
\newblock Order matters: Sequence to sequence for sets.
\newblock \emph{arXiv preprint arXiv:1511.06391}, 2015.

\bibitem[Yang et~al.(2019)Yang, Zhuang, Shi, Luu, and Li]{yang2019conditional}
Carl Yang, Peiye Zhuang, Wenhan Shi, Alan Luu, and Pan Li.
\newblock Conditional structure generation through graph variational generative adversarial nets.
\newblock \emph{Advances in neural information processing systems}, 32, 2019.

\bibitem[You et~al.(2018)You, Ying, Ren, Hamilton, and Leskovec]{you2018graphrnn}
Jiaxuan You, Rex Ying, Xiang Ren, William Hamilton, and Jure Leskovec.
\newblock Graphrnn: Generating realistic graphs with deep auto-regressive models.
\newblock In \emph{ICML}, pp.\  5694--5703, 2018.

\end{thebibliography}
\bibliographystyle{iclr2024_conference}

\clearpage
\appendix

\begin{appendices}
\newcommand{\widt}{.23\textwidth}

\onecolumn 
\section{Notation definition} \label{apdx:not_def}

\begin{table}[h]
\begin{center}
\footnotesize
\def\arraystretch{1.8} \tabcolsep=4pt
\begin{adjustbox}{width=1.\linewidth,}
\begin{tabular}{p{1.7in}p{4.25in}}
\toprule 
\textbf{Notations} & \textbf{Brief definition and interpretation} \\
\midrule
$ \gG = \left(\gV,~ \gE \right)$ & A graph with nodes (vertices) $\gV$ and edges $\gE$\\
$ \mathcal{F}: ~ \gV \rightarrow ~ \{1, ..., c \}$ & a graph partitioning function that partitions a graph into $c$ communities (a.k.a. cluster or modules) \\
$ \pg_i^l = \left(\gV(\pg_i^l),~ \gE(\pg_i^l)\right) $ & $i$-th community graph (a cluster of nodes) at level $l$ \\
$\Amat^{l}$, $\Amat_{i}^{l}$, $\Amat_{ij}^{l}$ & adjacency matrix of a graph, community and bipartite\\
$ \bp_{ij}^l = \left(\gV(\pg_i^l),~\gV(\pg_j^l),~ \gE(\bp_{ij}^l)\right) $ & a {bipartite } (or cross-community) graph composed of cross-links between neighboring communities \\
 $v^{l-1}_{i} := \parents(\pg_{i}^l) $ & parent node of $\pg_{i}^l$ at the higher level \\
 $ e^{l-1}_{i} = \parents(\bp_{ij}^l) = \edge{v^{l-1}_{i}, v^{l-1}_{j}} $ & parent  edge of $\bp_{ij}^l$ at the higher level \\
$ w^{l-1}_{ii} =\sum_{e\in \gE(\pg_i^l)} w_e$ & weight of edge $\edge{v^{l-1}_{i}, v^{l-1}_{i}}$: sum of the weights of the edges within their child community\\
$w^{l-1}_{ij} =\sum_{e\in \gE(\bp_{ij}^l)} w_e$ & weight of edge $\edge{v^{l-1}_{i}, v^{l-1}_{j}}$: sum of the weights of the edges within their child bipartite\\
$w_0 := \sum_{e\in \gE(\gG^l)} w_e = |\gE|$ & sum of all edge weights  that remains constant in all levels \\
$\gHG := \{ \gG^0,  ...., \gG^{L-1}, \gG^L \} $ &  the set of graphs in all levels, where $\gG^0$ is a single node root graph and  $\gG^L$ is the final graph at the leaf level.\\
$\text{Mu}(\rvu~ |~ \rv,~ \vlambda)$ & Multinomial distribution of random vector $\rvu$, with parameters $(\rv,~ \vlambda)$ \\
$\text{Bi}(\rv  ~|~ \rr,~ {\eta})$ & Binomial distribution of random variable $\rv$, with parameters $(\rr,~ {\eta})$ \\
$\hat{\gE}_t({\hat{\pg}_{i, t}^l}) = \{ \edge{t, j} ~|~ j<t \}$ & set of candidate (augmented) edges 
from the new node, $v_t(\pg_i^l)$, to its preceding nodes of community \\
$\hat{\pg}_{i,t}^l$ {\scriptsize $= \left(\gV(\pg_{i, t-1}^l) \cup v_t(\pg_{i}^l) ,~ \gE(\pg_{i, t-1}^l) \cup \hat{\gE}_t \right)$} 
&  an already generated community at the $t$-th step, augmented with the set of candidate edges. Its size is $t$ \\
$\rvu_{t} := [w_e]_{e ~\in~ \hat{\gE}_t({\hat{\pg}_{i, t}^l})}$ & Random vector of weights of the candidate edges in $\hat{\pg}_{i,t}^l$ (the $t$-th row of the lower triangle $\hat{\Amat}_{i}^l$) \\
$\rv_{t} = \1^{\T} \rvu_{t}$ & sum of the candidate edge weights at step $t$.\\
$\rr_t = w^{l-1}_{ii} - \sum_{i<t}\rv_i$ & remaining edges' weight at step $t$.\\ 
$~\vh_{\hat{\pg}_{i, t}^l}$  & $ t \times d_h$ matrix of node \features of $\hat{\pg}_{i,t}^l$ \\
$\Delta \vh_{\hat{\gE}_t({\hat{\pg}_{i, t}^l})}$ & $ |\hat{\gE}_t({\hat{\pg}_{i, t}^l})| \times d_h$ dimensional matrix of candidate edge \features \\
$\hat{\gG}^l$ & An augmented graph composed of all the communities, 
$\{\pg_{i}^l ~~\forall i \in \gV({\gG^{l-1}})\}$,
and the candidate edges of all bipartites, $\{\bp_{ij}^l ~~\forall \edge{i,j} \in \gE({\gG^{l-1}})  \}$. It is used for bipartite generation.\\
\bottomrule
\end{tabular}
\end{adjustbox}
\end{center}
\end{table}

\section{Probability Distribution of Communities and Bipartites\label{sec:apdx_dist}}

\begin{theorem} \label{thm:multi_level}
Given a graph $\gG$  and an ordering $\pi$, assuming there is a deterministic function that provides the corresponding high-level graphs in a hierarchical order as $\{ \gG^L, \gG^{L-1}, ... , \gG^0 \}$, then:

\begin{align}
    p(\gG = \gG^L , \pi) = p( \{ \gG^L, \gG^{L-1}, ... , \gG^0 \} , \pi)  %
    & = p(\gG^L , \pi ~|~ \{\gG^{L-1}, ... , \gG^0 \})~
     ... ~
     p(\gG^{1} , \pi ~|~ \gG^0) ~ p(\gG^{0}) \nonumber\\
    &= \prod_{l=0}^{L} p(\gG^l , \pi ~|~ \gG^{l-1}) \times p(\gG^{0})
\end{align}
\end{theorem}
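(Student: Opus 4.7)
The plan is to derive the claimed factorization from two ingredients: the standard chain rule of probability and the determinism of the coarsening map hypothesized in the theorem. The first equality in the statement is essentially just the chain rule applied to the joint distribution over the sequence $(\gG^L, \gG^{L-1}, \ldots, \gG^0, \pi)$, so the real content lies in (i) justifying that $p(\gG^L, \pi)$ may be identified with this joint distribution, and (ii) collapsing the conditionals $\{\gG^{L-1}, \ldots, \gG^0\}$ down to the single parent level $\gG^{l-1}$ in each factor.

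First I would use the deterministic partitioning function $\mathcal{F}$ to observe that the whole tower $\{\gG^{L-1}, \ldots, \gG^0\}$ is a deterministic function of $\gG^L$: iterating the coarsening map yields each parent level as $\mathcal{F}^{\circ k}(\gG^L)$. Consequently the event $\{\gG^L = g\}$ coincides (measure-theoretically) with the event $\{\gG^L = g,\, \gG^{L-1} = \mathcal{F}(g),\, \ldots,\, \gG^0 = \mathcal{F}^{\circ L}(g)\}$, which gives $p(\gG^L, \pi) = p(\gG^L, \gG^{L-1}, \ldots, \gG^0, \pi)$. Applying the chain rule to the right-hand side in the order from leaf to root, keeping $\pi$ attached to the refined level at each step since the ordering is fixed once and for all at the leaf and induced on every coarsening, immediately produces the first line of the theorem.

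Next I would collapse each conditional $p(\gG^l, \pi \mid \gG^{l-1}, \ldots, \gG^0)$ down to $p(\gG^l, \pi \mid \gG^{l-1})$ by invoking a Markov property across levels. The justification again rests on determinism: since $\gG^{l-2}, \ldots, \gG^0$ are determined by $\gG^{l-1}$ via $\mathcal{F}$, conditioning on them in addition to $\gG^{l-1}$ adds no information, so $\gG^l$ is conditionally independent of the strictly coarser levels given its immediate parent. Combining this with the chain-rule expansion yields the product form $\prod_{l=1}^{L} p(\gG^l, \pi \mid \gG^{l-1}) \cdot p(\gG^{0})$, matching the stated expression (interpreting the $l=0$ factor in the displayed product as the marginal $p(\gG^0)$).

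The main subtlety to be careful about is the treatment of $\pi$: the same symbol stands for the ordering at every level, and one must verify that the ordering induced on the parent level is a deterministic function of $\pi$ together with $\gG^l$ (inherited from how the super-nodes in $\gG^{l-1}$ are labelled by the clusters of $\gG^l$), so that carrying $\pi$ through each conditional does not introduce hidden randomness and does not spoil the Markov reduction. Once that bookkeeping is made explicit, the remainder of the argument is a direct application of the chain rule and the conditional-independence collapse, and no further technical obstacle arises.
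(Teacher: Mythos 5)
Your proof is correct and takes essentially the same route as the paper's own (very terse) argument: the chain rule of probability, plus the fact that the coarser graphs are deterministic functions of the finer ones via $\mathcal{F}$, which is exactly what justifies both identifying $p(\gG^L,\pi)$ with the joint over all levels and collapsing each conditional to the immediate parent (the ``Markovian structure'' the paper invokes). Your version simply spells out the determinism and the bookkeeping for $\pi$ in more detail than the paper does.
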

\begin{proof}
The factorization  is  derived by applying the chain rule of probability and last equality holds as the graphs at the coarser levels are produced by a partitioning function acting on the finer level graphs.
Overall, this hierarchical generative model exhibits a Markovian structure.
\end{proof}

\subsection{Proof of Theorem \ref{thm:decomp_mn_independent}} \label{proof:decomp_mn_independent}

\begin{lemma} \label{lemma:ConditionalMN}
Given the sum of counting variables in the groups, the groups are independent and each of them has  multinomial distribution: 
\begin{align} %
p(\rvw =[\rvu_1, ~ ..., \rvu_M ]| \{\rv_1, ..., \rv_M\}) &= 
\prod_{m=1}^{M} \text{Mu}( \rv_{m}, ~ \vlambda_{{m}}) \nonumber\\
\text{where: }    \vlambda_{{m}} & = \frac{\vtheta_m}{\1^{T}~ \vtheta_m} 
\nonumber
\end{align}
Here, probability vector (parameter) $\vlambda_{{m}}$ is the normalized multinomial probabilities of the counting variables in the $m$-th group.   

\begin{proof}
\begin{align} \label{eq:proof_mn2bnmn3}
p(\rvw | \{\rv_1, ..., \rv_M\}) 
&= \frac{p(\rvw)}{p(\{\rv_1, ..., \rv_M\})} I(\rv_1= \1^{T}~\rvu_1, ~ ..., \rv_M=\1^{T}~\rvu_M )  \nonumber\\
&= \frac{
\frac{w!}{\prod_{i=1}^{E} \rw_i ! } \prod_{i=1}^{E} {\vtheta_i}^{\rw_i }
}{
\frac{w!}{\prod_{i=1}^{M} \rv_i ! } \prod_{i=1}^{M} {\alpha_i}^{\rv_i }
} I(\rv_1= \1^{T}~\rvu_1, ~ ..., \rv_M=\1^{T}~\rvu_M )  \nonumber\\
&= \frac{
\frac{w!}{\prod_{i=1}^{E} \rw_i ! } \vtheta_1^{\rw_1 } ... \vtheta_E^{\rw_E }
}{
\frac{w!}{\prod_{i=1}^{M} \rv_i ! } {(\1^{T}~ \vtheta_1})^{\rv_1 } ... ({\1^{T}~ \vtheta_M})^{\rv_M }
} \nonumber\\
&= \frac{\rv_1!}{\prod_{i=1}^{E_1} \rvu_{1,i} ! } \prod_{i=1}^{E_1} {\vlambda_{1,i}}^{\rvu_{1,i}} \times ... \times
\frac{\rv_M!}{\prod_{i=1}^{E_M} \rvu_{M,i} ! } \prod_{i=1}^{E_1} {\vlambda_{M,i}}^{\rvu_{M,i}} 
 \nonumber \\
&= \text{Mu}( \rv_{1}, ~ \vlambda_{{1}}) \times ... \times \text{Mu}( \rv_{M}, ~ \vlambda_{{M}}) \nonumber
\end{align}
\end{proof} 
\end{lemma}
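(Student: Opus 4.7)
The plan is to prove the lemma by direct computation via Bayes' rule, leveraging the aggregation property of the multinomial distribution. First, I would write $p(\rvw \mid \rv_1, \ldots, \rv_M) = p(\rvw, \rv_1, \ldots, \rv_M)/p(\rv_1, \ldots, \rv_M)$. Since each group sum $\rv_m = \1^{\T}\rvu_m$ is a deterministic function of the variables in $\rvu_m$, the joint in the numerator reduces to the multinomial PMF of $\rvw$ multiplied by an indicator enforcing consistency $\rv_m = \1^{\T}\rvu_m$ for every $m$; on the consistent set, the indicator is $1$ and can be dropped.

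Next, I would establish that the marginal of the group sums is itself multinomial: $(\rv_1, \ldots, \rv_M) \sim \text{Mu}(w, [\1^{\T}\vtheta_1, \ldots, \1^{\T}\vtheta_M])$. This is the standard aggregation property, since pooling outcome categories into bins and counting the number falling into each bin yields a multinomial on the bin counts with bin probabilities $\alpha_m := \1^{\T}\vtheta_m$. One may either cite this as a known black-box result or derive it in one line by summing the joint PMF over all $\rvw$ consistent with fixed bin totals.

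With the two multinomial PMFs in hand, I would form the ratio and reorganize. The overall $w!$ cancels. The denominator of the marginal contributes $\prod_m \rv_m!$ upstairs, which combines with $\prod_i \rw_i! = \prod_m \prod_i \rvu_{m,i}!$ downstairs to produce the product of per-group multinomial coefficients $\prod_m \rv_m!/\prod_i \rvu_{m,i}!$. On the probability side, for each group $m$ the factor $\prod_i \vtheta_{m,i}^{\rvu_{m,i}}$ is divided by $(\1^{\T}\vtheta_m)^{\rv_m}$; using $\rv_m = \sum_i \rvu_{m,i}$ this distributes across the product and yields $\prod_i \vlambda_{m,i}^{\rvu_{m,i}}$ with $\vlambda_{m,i} = \vtheta_{m,i}/(\1^{\T}\vtheta_m)$. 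Multiplying across $m$ gives $\prod_m \text{Mu}(\rvu_m \mid \rv_m, \vlambda_m)$, proving both the conditional independence of the groups and their per-group multinomial forms in a single stroke.

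The main obstacle, to the limited extent one exists, is purely combinatorial bookkeeping: verifying that the factorial prefactors from joint and marginal reorganize cleanly into per-group multinomial coefficients, and that the probability exponents collapse correctly once $\rv_m = \sum_i \rvu_{m,i}$ is used. A secondary subtlety is the proper treatment of the indicator: the identity holds only on the support where the stated sum constraints are satisfied, but this is automatic since the conditional is only defined on that support.
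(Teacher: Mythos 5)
Your proposal is correct and follows essentially the same route as the paper's proof: conditioning via the ratio of the joint multinomial PMF to the marginal PMF of the group sums (obtained from the standard aggregation property, which the paper records separately as its Lemma on grouped counts), then cancelling $w!$ and reorganizing the factorials and probability exponents into per-group multinomial coefficients with normalized parameters $\vlambda_m = \vtheta_m / (\1^{\T}\vtheta_m)$. Your explicit handling of the indicator/support issue is slightly more careful than the paper's, but the argument is the same.
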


In a hierarchical graph, the edges has non-negative integer valued weights while the sum of all the edges in community $ \pg_i^l $ and bipartite graph $ \bp_{ij}^l $ are determined by their corresponding edges in the parent graph, \ie $w^{l-1}_{ii}$ and $w^{l-1}_{ij}$ respectively.
Let the random vector $\rvw := [w_e]_{e ~\in~ \gE({\gG^l})}$ denote the set of weights of all edges of $\gG^l$  such that $w_0 = \1^{T}~\rvw$, 
its joint probability can be described as a multinomial distribution:
\begin{align}
    \rvw \sim \text{Mu}(\rvw~|~w_0, \vtheta^{l} ) %
= \frac{w_0 !}{\prod_{e=1}^{|\gE({\gG^l})|} \rvw[e] ! } \prod_{e=1}^{|\gE({\gG^l})|} {(\vtheta^{l} [e])}^{\rvw[e] },
\end{align}  
where
$\{ \vtheta^{l}[e] \in [0, 1], ~ \text{s.t.} ~  \1^T \vtheta^{l} = 1 \}$ are the parameters of the multinomial distribution.\footnote{
It is analogous to the random trial of putting $n$ balls into $k$ boxes, where the joint probability of the number of balls in all the boxes follows the multinomial distribution.
}
Therefore, based on lemma \ref{lemma:ConditionalMN}
these components are conditionally independent and each of them has a multinomial distribution:
\begin{align} %
    p(\gG^l ~|~ \gG^{l-1}) 
    \sim  \prod_{i ~ \in ~ \gV({\gG^{l-1}})} \text{Mu} ( [w_e]_{e ~\in~ \pg_{i}^l}  ~|~ w^{l-1}_{ii}, \vtheta^{l}_{ii}) 
    \times
    \prod_{{\edge{i,j}} \in ~ \gE({\gG^{l-1}})} \text{Mu} ( [w_e]_{e ~\in~ \bp_{ij}^l}  ~|~ w^{l-1}_{ij}, \vtheta^{l}_{ij}) \nonumber
\end{align}
where
$\{ \vtheta^{l}_{ij}[e] \in [0, 1], ~ \text{s.t.} ~  \1^T \vtheta^{l}_{ij} = 1 ~ |~ \forall ~ \edge{i,j}  \in ~ \gE({\gG^{l-1}}) \}$ are the parameters of the model.

Therefore, the log-likelihood of $\gG^l$ can be decomposed as the log-likelihood of its sub-structures:
\begin{align} \label{eq:logliklehood_decomp}
\log p_{\phi^l}(\gG^l ~|~ \gG^{l-1}) =
\sum_{i \in \gV_{\gG^{l-1}}} \log p_{\phi^l} (\pg_{i}^l  ~|~ \gG^{l-1}) %
+
\sum_{\edge{i,j} \in \gE_{\gG^{l-1}}} \log p_{\phi^l} (\bp_{ij}^l  ~|~ \gG^{l-1})
\end{align}

\qed

\paragraph{Bipartite distribution:} Let's denote the set of weights of all candidate edges of the bipartite $ \bp_{ij}^l $ by a random vector
$\rvw := [w_e]_{e ~\in~ \gE({\bp_{ij}^l})}$ , its probability can be described as
\begin{align} \label{eq:mn_bp}
\rvw & \sim \text{Mu}(\rvw~|~w^{l-1}_{ij}, \vtheta^{l}_{ij} ) %
= \frac{w^{l-1}_{ij} !}{\prod_{e=1}^{|\gE(\bp_{ij}^l)|} \rvw[e] ! } \prod_{e=1}^{|\gE(\bp_{ij}^l)|} {(\vtheta^{l}_{ij} [e])}^{\rvw[e] }
\end{align}
where
$\{\vtheta^{l}_{ij}[e] ~|~ \vtheta^{l}_{ij}[e]\ge 0, ~ \sum \vtheta^{l}_{ij}[e] = 1 \}$ are the parameter of the distribution,
and
the multinomial coefficient $\frac{n!}{\prod \rvw[e]  ! }$ is the number of ways to distribute the total weight $w^{l-1}_{ij} = \sum_{e=1}^{|\gE(\bp_{ij}^l)|}\rvw[e]  $ into all candidate edges of $\bp_{ij}^l$.

\paragraph{Community distribution:} Similarly, the probability distribution of the set of candidate edges for each community can
be modeled jointly by a multinomial distribution
but as our objective is to model the generative probability of communities in each level as an autoregressive process we are interested to decomposed this probability distribution accordingly.

\subsection{Generating a community as an edge-by-edge autoregressive process} \label{apdx:mn2bn}
\begin{lemma} \label{thm:mn2bn}
A random counting vector $\rvw \in \ZP^E$ with a multinomial distribution %
can be recursively decomposed into a sequence of binomial distributions as follows:
\begin{align}
    \text{Mu}(\rw_1, ~ ..., \rw_E ~ | ~ w,~ [\theta_1, ~ ..., \theta_E ]) %
    &= \prod_{e=1}^{E} \text{Bi}(\rw_{e} ~ | ~ w - \sum\nolimits_{i<e}\rw_i, \hat{\theta}_e), \\
    \text{where: } \hat{\theta}_e &= \frac{\theta_{e}}{1-\sum_{i<e}\theta_i} \nonumber
\end{align}
This decomposition is known as a \textit{stick-breaking} process, where $\hat{\theta}_e$ is the fraction of the remaining probabilities we take away every time and allocate to the $e$-th component \citep{linderman2015dependentMultinomial}.
\end{lemma}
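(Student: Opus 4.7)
The plan is to obtain this lemma as an immediate corollary of Theorem \ref{thm:mn2bnmn}, which already establishes a stick-breaking decomposition of the multinomial into a chain of binomials and multinomials. I would apply that theorem with $T = E$ and take each group $\rvu_t$ to be the single scalar $\rw_t$ (so $E_t = 1$ for every $t$). Under this choice, the group sum $\rv_t = \rw_t$, the within-group probability vector collapses to $\vlambda_t = 1$, and the multinomial factor $\text{Mu}(\rvu_t \mid \rv_t, \vlambda_t)$ becomes a trivial point mass of probability one. Substituting into \eqref{eq:mn2bnmn} leaves only the product $\prod_{t=1}^{E} \text{Bi}(\rw_t \mid w - \sum_{i<t}\rw_i,\, \eta_t)$ with $\eta_t = \theta_t/(1 - \sum_{i<t}\theta_i)$, which is exactly $\hat{\theta}_e$ as defined in the statement.

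As an alternative self-contained route, I would proceed by induction on $E$. The base case $E = 1$ is immediate, since the multinomial degenerates to a point mass at $w$, matching $\text{Bi}(w,1)$. For the inductive step, I would factor $p(\rw_1,\ldots,\rw_E) = p(\rw_1)\, p(\rw_2,\ldots,\rw_E \mid \rw_1)$. Computing the marginal of $\rw_1$ by summing the multinomial PMF over all completions with $\sum_{i\ge 2}\rw_i = w - \rw_1$ and invoking the multinomial theorem on the remaining $E-1$ categories (whose probabilities sum to $1 - \theta_1$) yields $\text{Bi}(\rw_1 \mid w, \theta_1)$. Dividing the joint by this marginal and cancelling the shared factorial and power terms shows that the conditional distribution of $(\rw_2,\ldots,\rw_E)$ given $\rw_1$ is again multinomial, now with total $w - \rw_1$ and probabilities $\theta_e/(1-\theta_1)$ for $e \ge 2$; applying the inductive hypothesis to this reduced multinomial completes the factorization.

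The main obstacle, in either route, is purely bookkeeping: verifying that the renormalizations $\theta_e/(1 - \theta_1)$ produced at each inductive step, when iterated, telescope to give the claimed denominator $1 - \sum_{i<e}\theta_i$. This is a routine computation with no probabilistic subtlety, so the only real work is setting up notation carefully enough that the telescoping is visible.
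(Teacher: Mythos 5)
Your first route is circular as the paper stands. The paper never proves Lemma \ref{thm:mn2bn} itself: it is stated as a known stick-breaking fact with a citation to \citet{linderman2015dependentMultinomial}, and it is then used as an ingredient in the paper's own proof of Theorem \ref{thm:mn2bnmn} in appendix \ref{apdx:proof_mn2bnmn} --- there the joint law is factored as $p(\rvw) = p(\rvw \mid \{\rv_1,\dots,\rv_M\})\,p(\{\rv_1,\dots,\rv_M\})$, the law of the group sums is identified as multinomial (Lemma \ref{lemma:groupsMN}) and then decomposed into binomials precisely by invoking Lemma \ref{thm:mn2bn}. So specializing Theorem \ref{thm:mn2bnmn} to singleton groups ($T=E$, $E_t=1$) derives the lemma from a theorem whose proof already assumes it. The specialization itself is mechanically fine (each factor $\text{Mu}(\rvu_t \mid \rv_t, \vlambda_t)$ with $\vlambda_t = 1$ is a point mass, and $\eta_{\rv_t}$ reduces to $\hat{\theta}_t$), but it cannot serve as a proof unless Theorem \ref{thm:mn2bnmn} is first established without the lemma, which is not how the paper establishes it.

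Your second route is correct and self-contained, and it is essentially the argument the paper leaves implicit behind the citation. Marginalizing the multinomial PMF over completions with $\sum_{i\ge 2}\rw_i = w - \rw_1$ gives $\text{Bi}(\rw_1 \mid w, \theta_1)$ by the multinomial theorem; dividing the joint by this marginal exhibits the conditional of $(\rw_2,\dots,\rw_E)$ given $\rw_1$ as a multinomial with total $w-\rw_1$ and probabilities $\theta_e/(1-\theta_1)$; and the renormalizations telescope as required, since
\begin{align*}
\frac{\theta_e/(1-\theta_1)}{1-\sum_{2\le i<e}\theta_i/(1-\theta_1)}
= \frac{\theta_e}{(1-\theta_1)-\sum_{2\le i<e}\theta_i}
= \frac{\theta_e}{1-\sum_{i<e}\theta_i}.
\end{align*}
Note that the division step is the same conditional-multinomial computation the paper carries out in Lemma \ref{lemma:ConditionalMN} (for the split into $\{\rw_1\}$ and its complement), so your induction also re-derives that ingredient along the way, in a form compatible with the paper's later use of the lemma. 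Present the induction as the proof and drop the first route, or keep it only as a consistency remark.
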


\subsection{Proof of Theorem \ref{thm:mn2bnmn}} \label{apdx:proof_mn2bnmn}

For a random counting vector $\rvw \in \ZP^E$ with multinomial distribution $\text{Mu}(w, \vtheta)$, %
let's split it into $M$ disjoint groups $\rvw=[\rvu_1, ~ ..., \rvu_M ]$ where $\rvu_m \in \ZP^{E_m} ~,~ \sum_{m=1}^M {E_m} = E $, 
and also split the probability vector as $\vtheta=[\vtheta_1, ~ ..., \vtheta_M ]$.
Additionally, let's define sum of all weights in $m$-th group by a random variable $\rv_m := \sum_{e=1}^{E_m} \ru_{m,e}$. 

\begin{lemma} \label{lemma:groupsMN}
Sum of the weights in the groups, $\rvu_m \in \ZP^{E_m} ~,~ \sum_{m=1}^M {E_m} = E $ has multinomial distribution:
\begin{align} \label{eq:proof_mn2bnmn2}
p(\{\rv_1, ..., \rv_M\}) &=  \text{Mu}(w, ~[\alpha_1, ..., \alpha_M]) \nonumber\\
\text{where: }\alpha_m &= \sum \vtheta_m[i].
\end{align}
In the other words, the multinomial distribution is preserved when its counting variables are combined \cite{siegrist2017probability}.
\end{lemma}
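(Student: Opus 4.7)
The plan is to prove the lemma by direct marginalization: write out the joint PMF of $\rvw$, group the factors by which group each index belongs to, and sum over all configurations of $\rvw$ consistent with the prescribed group sums $\rv_1, \dots, \rv_M$. The key identity powering the argument is the multinomial theorem, $(\1^{\T}\vtheta_m)^{\rv_m} = \sum_{\rvu_m:\,\1^{\T}\rvu_m=\rv_m} \frac{\rv_m!}{\prod_i \ru_{m,i}!} \prod_i \vtheta_m[i]^{\ru_{m,i}}$, which collapses the internal sum over each group to a clean power of $\alpha_m := \1^{\T}\vtheta_m$.

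Concretely, I first write the marginal of the group sums as
\[
p(\rv_1,\dots,\rv_M)\;=\;\sum_{\substack{\rvu_1,\dots,\rvu_M\\ \1^{\T}\rvu_m=\rv_m\,\forall m}} \frac{w!}{\prod_{m}\prod_{i}\ru_{m,i}!}\;\prod_{m}\prod_{i}\vtheta_m[i]^{\ru_{m,i}},
\]
using the fact that the groups partition the coordinates of $\rvw$. Next, I pull the $w!$ outside and multiply and divide the summand by $\prod_m \rv_m!$, so that each group-$m$ factor takes the multinomial shape $\frac{\rv_m!}{\prod_i \ru_{m,i}!} \prod_i \vtheta_m[i]^{\ru_{m,i}}$. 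Because the constraints $\1^{\T}\rvu_m = \rv_m$ are separate across $m$, the outer sum factorizes into a product of per-group sums, and applying the multinomial theorem to each group yields $\alpha_m^{\rv_m}$.

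Putting the pieces together gives
\[
p(\rv_1,\dots,\rv_M)\;=\;\frac{w!}{\prod_m \rv_m!}\prod_m \alpha_m^{\rv_m},
\]
which is exactly $\text{Mu}(w,[\alpha_1,\dots,\alpha_M])$, provided $\sum_m \alpha_m = \1^{\T}\vtheta = 1$ (true by hypothesis) and $\sum_m \rv_m = w$ (automatic since $\sum_m \rv_m = \1^{\T}\rvw = w$). An alternative, and arguably more illuminating, proof is the trial-based argument: interpret $\text{Mu}(w,\vtheta)$ as $w$ i.i.d.\ categorical draws from $\{1,\dots,E\}$ with probabilities $\vtheta$, observe that the induced grouping assigns each draw to group $m$ with probability $\alpha_m$, and conclude that the group counts are multinomial with parameters $(w,[\alpha_1,\dots,\alpha_M])$ by definition.

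There is no real obstacle here; the lemma is a classical ``aggregation'' or ``lumping'' property of the multinomial distribution. The only place to be careful is bookkeeping in the double-indexed sum and ensuring the multinomial coefficient factors cleanly across groups, which the insertion of $\prod_m \rv_m!/\rv_m!$ handles.
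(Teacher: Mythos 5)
Your proof is correct. Note, however, that the paper does not actually prove this lemma at all: it is stated as a known ``aggregation'' (lumping) property of the multinomial distribution, with a citation to a standard probability reference, and is then used as an ingredient in the proof of Theorem \ref{thm:mn2bnmn}. So your write-up supplies a self-contained derivation where the paper relies on an external source. Your main argument --- marginalizing the joint PMF over all configurations consistent with the group sums, inserting $\prod_m \rv_m!/\prod_m \rv_m!$ so each group factor takes multinomial shape, factorizing the constrained sum across groups, and collapsing each per-group sum to $\alpha_m^{\rv_m} = (\1^{\T}\vtheta_m)^{\rv_m}$ via the multinomial theorem --- is exactly the computational counterpart of the style the paper uses for its other lemma (Lemma \ref{lemma:ConditionalMN}), and the bookkeeping is handled correctly, including the two side conditions $\sum_m \alpha_m = 1$ and $\sum_m \rv_m = w$. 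Your alternative trial-based argument (interpret $\text{Mu}(w,\vtheta)$ as $w$ i.i.d.\ categorical draws, observe that each draw lands in group $m$ with probability $\alpha_m$, and read off the group counts as multinomial by definition) is essentially the argument the cited textbook gives, and is arguably preferable here since it makes the ``preserved under combining counting variables'' intuition immediate with no combinatorial manipulation. Either of your two arguments would serve as a valid proof to inline into the appendix.
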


\begin{theorem} 
Given the aforementioned grouping of counts variables, the multinomial distribution can be modeled as a chain of binomials and multinomials:   
\begin{align} \label{eq:apdix_mn2bnmn}
    \text{Mu}(w, \vtheta=[\vtheta_1, ..., \vtheta_M ]) & = \prod_{m=1}^{M} \text{Bi}( w - \sum_{i<m}\rv_i, ~ {\eta}_{\rv_{m}}) ~
    \text{Mu}( \rv_{m}, ~ \vlambda_{{m}}),  \\
    \text{where: } \eta_{\rv_m } & = \frac{\1^{T}~ \vtheta_m}{1-\sum_{i<m} \1^{T}~ \vtheta_i}, ~~ %
    \vlambda_{{m}}  = \frac{\vtheta_m}{\1^{T}~ \vtheta_m} \nonumber
\end{align}
\begin{proof} 
Since sum of the weights of the groups, $\rv_m$, are functions of the weights in the group:
\begin{align} \label{eq:proof_mn2bnmn1}
p(\rvw) = p(\rvw, \{\rv_1, ..., \rv_M\}) = p(\rvw | \{\rv_1, ..., \rv_M\}) p(\{\rv_1, ..., \rv_M\}) \nonumber
\end{align}
According to lemma \ref{lemma:groupsMN}, sum of the weights of the groups is a multinomial and by lemma \ref{thm:mn2bn}, it can be decomposed to a sequence of binomials:
\begin{align} %
p(\{\rv_1, ..., \rv_M\}) 
&=  \text{Mu}(w, ~[\alpha_1, ..., \alpha_M]) %
= \prod_{m=1}^{M} \text{Bi}( w - \sum\nolimits_{i<m}\rv_i, \hat{\eta}_m), \nonumber\\
\text{where: }\alpha_m &= \1^{T}~ \vtheta_m, ~   \hat{\eta}_e = \frac{\alpha_{e}}{1-\sum_{i<e}\alpha_m} \nonumber
\end{align}
Also based on lemma \ref{lemma:ConditionalMN}, given the sum of the wights of all groups, the groups are independent and has  multinomial distribution: 
\begin{align} %
p(\rvw | \{\rv_1, ..., \rv_M\}) &= 
\prod_{m=1}^{M} \text{Mu}( \rv_{m}, ~ \vlambda_{{m}}) \nonumber\\
\text{where: }    \vlambda_{{m}} & = \frac{\vtheta_m}{\1^{T}~ \vtheta_m} 
\nonumber
\end{align}
\end{proof} 
\end{theorem}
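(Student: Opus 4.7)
The plan is to decompose the joint pmf of $\rvw$ by conditioning on the group sums, using the identity
\[
p(\rvw) \;=\; p(\rvw \mid \rv_1, \ldots, \rv_T) \cdot p(\rv_1, \ldots, \rv_T),
\]
which is valid because each $\rv_t = \1^\T \rvu_t$ is a deterministic function of $\rvw$. I will handle the two factors separately and then apply the stick-breaking reduction from Lemma \ref{thm:mn2bn} to finish.

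First, I would show that the marginal distribution of the tuple $(\rv_1, \ldots, \rv_T)$ is itself multinomial with parameters $w$ and $[\alpha_1,\ldots,\alpha_T]$ where $\alpha_t := \1^\T\vtheta_t$. This is the standard ``merging'' property of multinomials and can be verified by grouping the terms in the multinomial pmf of $\rvw$ along the group partition and summing via the multinomial theorem over all $\rvu_t$ with $\1^\T\rvu_t = \rv_t$. The net effect is to replace $\vtheta_t$-products within a group by $(\1^\T\vtheta_t)^{\rv_t}$ and to replace the $E_t$-way multinomial coefficient by $\binom{\rv_t}{\rvu_t}$ which cancels upon summation.

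Second, I would compute the conditional $p(\rvw \mid \rv_1,\ldots,\rv_T)$ directly as a ratio of the two pmfs. The $w!$ factors cancel, the factorial denominators regroup as $\prod_t \rv_t!/\prod_{t,e}\ru_{t,e}!$, and the probability factors split as $\prod_{t}(\1^\T\vtheta_t)^{\rv_t}\cdot\prod_{t,e} \vlambda_{t,e}^{\ru_{t,e}}$ with the first product cancelling against the group-sum multinomial. What remains is exactly $\prod_{t=1}^T \text{Mu}(\rvu_t \mid \rv_t,\vlambda_t)$, so conditional on the group sums the groups are independent multinomials with renormalized probabilities $\vlambda_t = \vtheta_t/(\1^\T\vtheta_t)$.

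Third, I apply the stick-breaking decomposition (Lemma \ref{thm:mn2bn}) to the marginal $\text{Mu}(w,[\alpha_1,\ldots,\alpha_T])$, which rewrites it as the telescoping product
\[
\prod_{t=1}^{T}\text{Bi}\!\left(\rv_t \,\Big|\, w - \sum_{i<t}\rv_i,\; \frac{\alpha_t}{1-\sum_{i<t}\alpha_i}\right),
\]
matching the claimed $\eta_{\rv_t}$. Multiplying this chain of binomials by the product of conditional multinomials obtained in the previous step yields the desired factorization. The main obstacle I anticipate is the algebraic bookkeeping in the second step: one needs to be careful with the double indexing and verify that the group-sum pmf correctly cancels the $(\1^\T\vtheta_t)^{\rv_t}$ factors so that only the normalized $\vlambda_t$ terms survive. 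The stick-breaking step itself is routine and can be done by induction on $T$.
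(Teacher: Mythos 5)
Your proposal is correct and follows essentially the same route as the paper's proof: both factor $p(\rvw)$ into the marginal of the group sums times the conditional given those sums, identify the marginal as $\text{Mu}(w,[\alpha_1,\ldots,\alpha_M])$ via the merging property, obtain the conditional as a product of independent multinomials with renormalized parameters $\vlambda_m=\vtheta_m/(\1^{\T}\vtheta_m)$ by taking the ratio of pmfs, and finish by applying the stick-breaking lemma to the marginal. The only difference is cosmetic: you sketch a direct proof of the merging property by summing over group configurations, whereas the paper cites it as a known lemma.
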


\begin{figure}[t]
    \centering
    \subfloat[][\centering \label{fig:ADJ_apdx}]{\includegraphics[width=0.65\linewidth]{./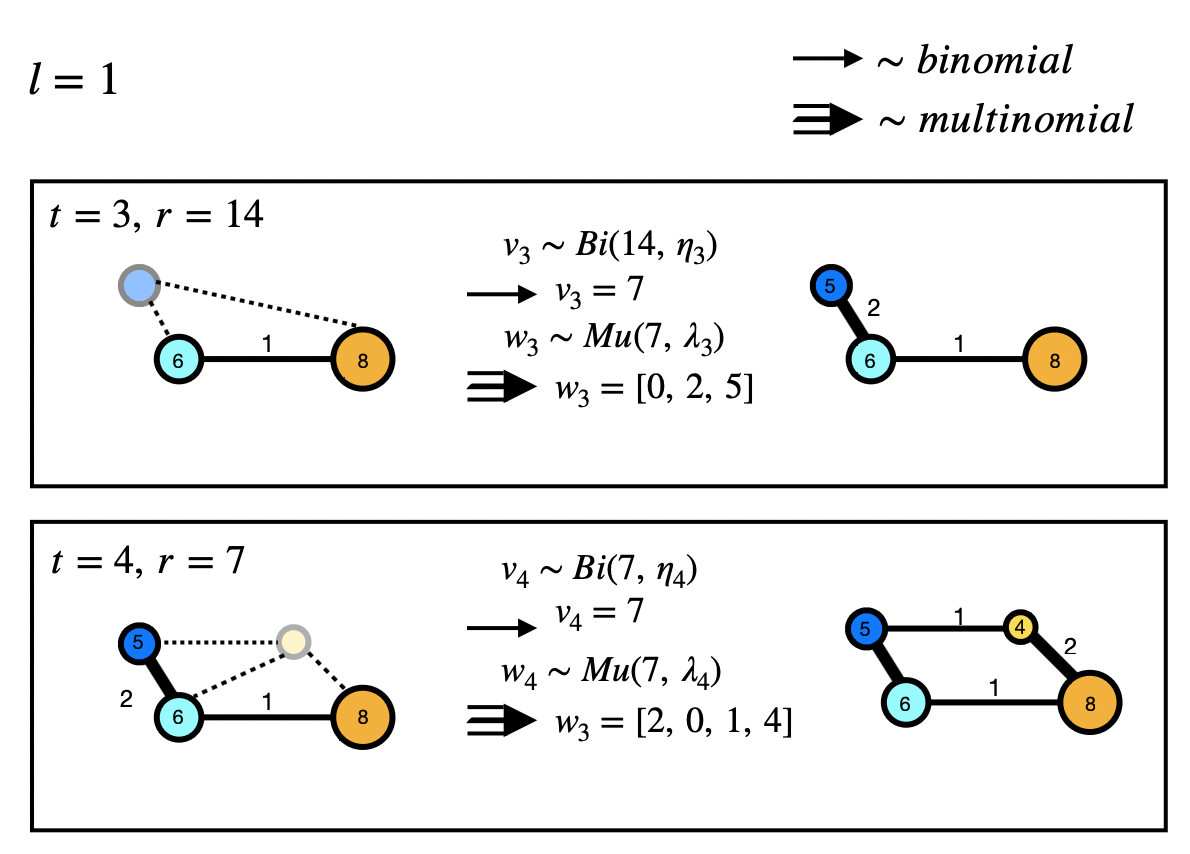}} \hfill
   	\subfloat[][\centering \label{fig:MN_BN_apdx}]{\includegraphics[width=0.27\linewidth]{./FigureTable/mnbn.png}} 
    \caption{An illustration of the generation process of the single community in level $l=1$ of $\gHG$ in Figure \ref{fig:HG} according to Theorem \ref{thm:mn2bnmn}, and equation \eq{eq:mix_mn_pg}. The total weight of this community graph is 29, determined by the parent node of this community. Consequently, the edge probabilities of this community follow a Multinomial distribution. This Multinomial is formed as an autoregressive (AR) process and decomposed to a sequence of Binomials and Multinomials, as outlined in \ref{thm:mn2bnmn}.
    At each iteration of this \textit{stick-breaking} process, first a fraction of the remaining weights $\rr_t$ is allocated to the $t$-th row (corresponding to the $t$-th node in the sub-graph) and then this fraction, $\rv_t$, is distributed among that row of lower triangular adjacency matrix, $\hat{A}$. 
    }
    \label{fig:Comm_AR}
\end{figure}

\subsection{Training Loss} \label{apdx:training_loss}
According to equations \eq{eq:conditional_multi} and \eq{eq:decomp}, the log-likelihood for a graph sample can be written as

\begin{align}
    \log p(\gG^L) = \sum_{l=1}^{L} (
    \sum_{i ~ \in ~ \gV({\gG^{l-1}})} \log p(\pg_{i}^l  ~|~ \gG^{l-1}) 
    +
    \sum_{{\edge{i,j}} \in ~ \gE({\gG^{l-1}})} \log p(\bp_{ij}^l  ~|~ \gG^{l-1}, \{ \pg_{k}^l\}_{\pg_{k}^l \in \gG^{l} })
    ) 
\end{align}

Therefore, one need to compute the log-likelihood of the communities and cross-community components. The log-likelihood of the cross-communities are straightforward using equation \eq{eq:p_mn_bp}. For the log-likelihood of the communities, as we break it into subsets of edges for each node in an autoregressive manner, with probability of each subset modeled in equation \eq{eq:mix_mn_pg}, therefore, the log-likelihood is reduced to
\[
\log p(\pg_{i}^l  ~|~ \gG^{l-1}) = \sum_{t=1}^{ | \pg_{i}^l |} \log p(\rvu_{t} (\hat{\pg}_{i,t}^l) )
\]
where $p(\rvu_{t} (\hat{\pg}_{i,t}^l)$ is defined in  \eqref{eq:mix_mn_pg} as a mixture of product of binomial and multinomial. Since the binomial and multinomial are from the exponential family  distribution, their log-likelihood reduces to Bregman divergence \cite{banerjee2005Bregman}.  The binomial log likelihood is a general form of Bernoulli log likelihood, binary cross entropy, and multinomial log likelihood is a general form of Multinoulli (categorical) log likelihood, categorical cross entropy. 

For training of generative model for the communities  at  level $l$ , we randomly sample total of $s$ augmented communities, so given  $s_i$ of these sub-graphs are from community $\pg_{i}^l$  then, we estimate the conditional generative probability for this community by averaging the loss function over all the subgraphs in that community multiplied by the size of the community:
\begin{align}  \label{eq:loss_comm_train}
\log p(\pg_{i}^l  ~|~ \gG^{l-1}) = | \pg_{i}^l | * mean( [ \log p(\rvu_{t} (\hat{\pg}_{i,t}^l) ),  ~ \forall ~ t \in s_i ] )
\end{align}

\section{Model architecture}
\subsection{Graph Neural Network (GNN) architectures} \label{apdx:GNN_arch}

To overcome limitations in the sparse message passing mechanism, Graph Transformers (GTs) \citep{dwivedi2020GraphTransformer} have emerged as a recent solution. One key advantage of GTs is the ability for nodes to attend to all other nodes in a graph, known as global attention, which addresses issues such as over-smoothing, over-squashing, and expressiveness bounds \citet{rampavsek2022GraphGPS}.
GraphGPS provide a recipe for creating a more expressive and scalable graph transformer by making a hybrid message-passing graph neural networks (MPNN)+Transformer architecture. 
Additionally, recent GNN models propose to address the limitation of standard MPNNs in detecting simple substructures by adding features that they cannot capture on their own, such as the number of cycles. 
A framework for selecting and categorizing different types of positional and structural encodings, including \textit{local, global, and relative} is provided in \cite{rampavsek2022GraphGPS}. 
Positional encodings, such as eigenvectors of the adjacency or Laplacian matrices, aim to indicate the spatial position of a node within a graph, so nodes that are close to each other within a graph or subgraph should have similar positional encodings. 
On the other hand, structural encodings, such as degree of a node, number of k-cycles a node belong to or the diagonal of the $m$-steps random-walk matrix, 
 aim to represent the structure of graphs or subgraphs, so nodes that share similar subgraphs or similar graphs should have similar structural encodings.

In order to encode the node features of the augmented graphs of bipartites, $\mathrm{GNN}^l_{bp} (\hat{\gG}^l)$, we customized GraphGPS in various ways. 
We incorporated distinct initial edge features to distinguish augmented (candidate) edges from real edges. 
Furthermore, for bipartite generation, we apply a mask on the attention scores of the transformers of the augmented graph $\hat{\gG}^l$ to restrict attention only to connected communities. 
Specifically, the $i$-th row of the attention mask matrix is equal to $1$ only for the index of the nodes that belong to the same community or the nodes of the neighboring communities that are linked by a bipartite, and $0$ (i.e., no attention to those positions) otherwise.

The time and memory complexity of GraphGPS can be reduced to $\mathcal{O}(n+m)$ per layer by using \textit{linear Transformers} such as Performer \citep{choromanski2020Performer} 
or Exphormer, a sparse attention mechanism for graph, \cite{shirzad2023exphormer}
for global graph attention, while they can be as high quadratic in the number of nodes if the original Transformer architecture is employed. 
We leverage the original Transformer architecture for the experiments on all graphs datasets that are smaller than 1000 nodes except for point cloud dataset where Performer is used. 

We also employed GraphGPS as the GNN of the parent graph, $\mathrm{GNN}^{l-1} (\gG^{l-1})$.
On the other hand, for the community generation, we employed the GNN with attentive messages model, proposed in \citep{liao2019GRAN}, as $\mathrm{GNN}^l_{com}$. 
Additionally, we conducted experiments for the Enzyme dataset,  using GraphGPS with the initial features as used in \citep{liao2019GRAN} for $\mathrm{GNN}^l_{com}$ which resulted in comparable performance in the final graph generation.

\subsection{Complexity Analysis}
Given the linear complexity of $\mathcal{O}(n+m)$ for the Graph Neural Network (GNN) building blocks (GraphGPS and GAT), we can analyse the complexity of the proposed hierarchical graph generation. 
Let's denote the  size of the largest community at level $l$ as $n_c^l := \max_i(|\pg_i^l|)$ and $n_c := \max_{i, l}(|\pg_i^l|)$.
As explained in section \ref{sec:HG_gen}, and illustrated in Figure \ref{fig:HG_gen} and algorithms (\ref{alg:training}, \ref{alg:Sampling}), each level $l$ of hierarchical generation is composed of:
\begin{itemize}
    \item[0)] Parent node embedding, $\mathrm{GNN}^{l-1} (\gG^{l-1})$, with $\mathcal{O}(n^{l-1}+m^{l-1})$.
    \item[1)] Parallel community generation for $\{\pg_{i}^l ~\forall i \in \gV({\gG^{l-1}})\}$, which require $n_c^l$ generation steps.  For each community $\hat{\pg}_{i}^l$, node embedding computation, $\mathrm{GNN}_{com}^l(\hat{\pg}_{i}^l)$, requires $\mathcal{O}(n_i^{l}+m_i^{l})$ operations, resulting in $n_c^l \sum_i \mathcal{O}(n_i^{l-1}+m_i^{l-1}) = n_c^l \mathcal{O}(n^{l}+m^{l}) $. 
    In training all of these can be performed in parallel on the sampled batch.   
    \item[2)] Bipartite generation require $\mathcal{O}(n^{l} + \hat{m}^{l})$ for node embedding computation, $\mathrm{GNN}^l_{bp} (\hat{\gG}^l)$, where $\hat{m}^{l} = |\gE(\hat{\gG}^l)| = \mathcal{O}(m^{l-1} (n_c^l)^2) \overset{(*)}{=} \mathcal{O}(n^{l} n_c^l) $. \footnote{(*) We make an assumption that the graph is not very dense such that the number of edges is at the order of number of nodes, \ie  $ m = \mathcal{O}(n)$.} 
\end{itemize}
So, each level of graph generation requires $ \mathcal{O}(n_c^l(n^{l}+m^{l})) $ computations, and consequently, the overall complexity of \HiGeN is $ \sum_l^L \mathcal{O}(n_c^l(n^{l}+m^{l})) = \mathcal{O}(n_c (n+m) ~L) $.
Moreover, since most of the computations are parallelizable, graph sampling requires 
$\mathcal{O}(n_c ~L) = \mathcal{O}(n_c \log_{n_c} n)$ 
sequential steps to generate a graph of size $n$.

The complexity analysis for training follows a similar approach, but with the advantage that all steps can be executed in parallel.  For batch training, we can adopt either subgraph-wise sampling or node-wise sampling, ensuring that each batch meets to GPU memory constraints \citep{duan2022comprehensive}. 
As detailed in Section \ref{apdx:training_loss}, the proposed model enables the random sampling of a total of $s$ augmented communities, eliminating the need to load the entire graph into memory, a distinction from diffusion models like DiGress.

\paragraph{Complexity of partitioning algorithm:} Although optimizing modularity metric is an NP-hard problem in general, we used the Louvain algorithm, which is a greedy optimization method with $\mathcal{O}(n \log n)$ complexity.   
However, the graph partitioning is only applied once on the training data as a pre-processing step and its results  are cached to be used throughout the entire training.

The pseudocodes for training and graph sampling using \HiGeN are presented in algorithms (\ref{alg:training}, \ref{alg:Sampling}).

\begin{table}[H]
    \centering
    \captionof{table}{\captionsize Complexity of different graph generative models. Here, $n$ is the number of nodes, $m$ is number of edges $n_c$ the  size of the largest cluster graph and $L$ is the number of hierarchical levels. 
    In the baseline models, $T$ is the number of diffusion steps, $K$ is the maximum number of active nodes during the diffusion process of EDGE \citep{chen2023EDGE}.
    }
    \label{tbl:complexity}
    \begin{minipage}{.6\linewidth}
        \centering
        \begin{adjustbox}{width=1.\textwidth,}
                \begin{tabular}{c|cc}
    \toprule
        Model & Runtime & Sampling steps\\
    \midrule
         \textbf{GraphRNN} &  $\mathcal{O}(n^2)$& $\mathcal{O}(n^2)$\\
         \textbf{GRAN} &  $\mathcal{O}(n^2)$& $\mathcal{O}(n)$\\
         \textbf{SPECTRE}&  $\mathcal{O}(n^3)$& $\mathcal{O}(1)$\\
         \textbf{GDSS}&  $\mathcal{O}(T~ n^2)$& $\mathcal{O}(T)$\\
         \textbf{DiGress}&  $\mathcal{O}(T~ n^2)$& $\mathcal{O}(T)$\\
         \textbf{EDGE}&  $\mathcal{O}(T~ \max (m, K))$& $\mathcal{O}(T)$\\
         \textbf{\HiGeN}&  $\mathcal{O}(n_c (n+m) ~L)$& $\mathcal{O}(n_c ~L)$\\
    \end{tabular}
        \end{adjustbox}
    \end{minipage}
\end{table}

\begin{figure}[t!]
\footnotesize
\begin{algorithm}[H]
\footnotesize
\caption{Training step of \HiGeN} \label{alg:training}
\begin{algorithmic}[1]
\State {\textbf{Input:} A hierarchical graph $\gHG := \{ \gG^0,  ...., \gG^{L-1}, \gG^L \} $}
\ForAll{$l = 1$ to $L$} \Comment{Can be done in parallel}
    \State $\hat{\sC}^l  \gets$ {\scriptsize sample  $s$ augmented communities $\hat{\pg}_{i,t}^l$ from $\{ \hat{\pg}_{i,t}^l~|~ i\le n_c^l ,~ t \le |\pg_{i}^l| \}$ } \Comment{{\scriptsize $\pg_{i}^l$ : $i$th community at level $l$}}
    \State ${\hat{\pg}^l } \gets \text{batch}(\hat{\sC}^l )$ 
    \State  $\hat{\gG}^l \gets$  join($\{\pg_{i}^l ~\forall i \in \gV({\gG^{l-1}})\} ~\cup~ \{\hat{\bp}_{ij}^l ~\forall~ \edge{i,j} \in \gE({\gG^{l-1}})  \})$ \Comment{{\scriptsize $\hat{\bp}_{ij}^l$ : all the candidate edges of ${\bp}_{ij}^l$ r.t. sec. \ref{sec:dist_bipart}. }}
\EndFor
\ForAll{$l = 1$ to $L$} \Comment{Can be done in parallel}
    \State $h_{\gG^{l-1}} \gets \mathrm{GNN}^{l-1} (\gG^{l-1})$ \Comment{get parent node embeddings}
    \State $h_{\hat{\pg}^l } \gets \mathrm{GNN}_{com}^{l} (\hat{\pg}^l )$
    \State $h_{\hat{\gG}^l } \gets \mathrm{GNN}_{bp}^{l} (\hat{\gG}^l )$ \Comment{skipped for $l=1$ since it has no BP}
    \State $loss^{l} \gets    \sum_{i ~ \in ~ \gV({\gG^{l-1}})} \log p(\pg_{i}^l  ~|~ \gG^{l-1}) + \sum_{{\edge{i,j}} \in ~ \gE({\gG^{l-1}})} \log p(\bp_{ij}^l  ~|~ \gG^{l-1}) $ \Comment{{\scriptsize using eqn. \eq{eq:mix_mn_pg}, \eq{eq:loss_comm_train}, \eq{eq:p_mn_bp}}}
\EndFor

\State $\operatorname{optimizer.step}(\sum_{l=1}^{L} loss^{l})$ 
\end{algorithmic}
\end{algorithm}
\vspace{-0.40cm}
\begin{algorithm}[H]
\footnotesize
\caption{Sampling from \HiGeN}\label{alg:Sampling}
\begin{algorithmic}[1]
    \State $w_0 \sim p_{\rvw^0}(w_0)$ \Comment{ $p_{\rvw^0}$ is the empirical distribution of the number of edge in training data} \\
    \For{$l = 1$ to $L$} 
        \State $h_{\gG^{l-1}} \gets \mathrm{GNN}^{l-1} (\gG^{l-1})$ \Comment{\textcolor{blue}{0) Get parent node embeddings}}
        
        \State $\hat{\sC} \gets \emptyset$ \hfill \textcolor{blue}{{$\triangledown$ 1) Generation of all communities}}
        \ForAll{$i = 1$ to $n_c^l=\gV({\gG^{l-1}})$} \Comment{in parallel for all communities}
            \State ${\hat{\pg}_i^l } \gets (\emptyset, \emptyset;~ r_i=w^{l-1}_{ii}) $ \Comment{{\scriptsize Initialize with an empty graph and remaining edges’ weight = the weight of the parent node}}
            \State $\hat{\sC} \gets \hat{\sC} ~\cup~ {\hat{\pg}_i^l }$
        \EndFor
        \While{$\hat{\sC} \ne \emptyset $}  \Comment{grow all communities autoregressively}       
            \State ${\hat{\pg}^l } \gets \text{batch}(\hat{\sC} )$  
            \State $h_{\hat{\pg}^l } \gets \mathrm{GNN}_{com}^{l} (\hat{\pg}^l )$
            \State $(\vbeta^{l}, \eta_{t}^{l}, \vlambda_{t}^{l}) \gets f(h_{\hat{\pg}^l}, h_{\gG^{l-1}})$ \Comment{{\scriptsize using eqn. \eq{eq:p_mn}}}
            \ForAll{$i = 1$ to $n_c^l=\gV({\gG^{l-1}})$} \Comment{in parallel for all communities}
                \State $k \sim Cat(\vbeta_i^{l})$ \Comment{{\scriptsize sample mixture index from a categorical dist.}}
                \State $v  \sim \text{Bi}(r_i,~ {\eta}_{t,k}^{l}[i]) $ \Comment{{\scriptsize sample $v$: sum of candidate edges for node (step) $t$}}
                \State $\vu \sim \text{Mu}(v,~ \vlambda_{t, k}^{l}[i])$ \Comment{{\scriptsize sample $\vu$: weights of the candidate edges for node (step) $t$}}
                \State $r_i  \gets r_i - v $ \Comment{{\scriptsize update remaining edges’ weight of $\hat{\pg}_i^l$ }}
                \If{$r_i = 0$} \Comment{{\scriptsize termination condition for generation of ${\pg}_i^l$}}
                    \State ${{\pg}_i^l } \gets \text{update}(\hat{\pg}_i^l, \vu) $ %
                    \State $\hat{\sC} \gets \hat{\sC} \setminus {\hat{\pg}_i^l }$
                \Else
                    \State 
                    $\hat{\pg}_i^l  \gets \text{update}(\hat{\pg}_i^l, \vu, r_i) $ 
                    \Comment{{\scriptsize update $\hat{\pg}_i^l$ with new sampled edges $\vu$ for $t$th node}}
                \EndIf
            \EndFor
        \EndWhile 
        
        \State  \null\hfill {\textcolor{blue}{{$\triangledown$ 2) Bipartite generation (for $l \ge 2$)}}} 
        \State  $\hat{\gG}^l \gets$  join($\{\pg_{i}^l ~\forall i \in \gV({\gG^{l-1}})\} ~\cup~ \{\hat{\bp}_{ij}^l ~\forall~ \edge{i,j} \in \gE({\gG^{l-1}})  \})$ \Comment{{\scriptsize $\hat{\bp}_{ij}^l$ : all the candidate edges of ${\bp}_{ij}^l$ r.t. sec. \ref{sec:dist_bipart}. }}
        \State $h_{\hat{\gG}^l } \gets \mathrm{GNN}_{bp}^{l} (\hat{\gG}^l )$  %
        \State $(\vbeta^{l}, \vtheta^{l}) \gets f(h_{\hat{\gG}^l}, h_{\gG^{l-1}})$ \Comment{{\scriptsize using eqn. \eq{eq:p_mn_bp}}}
        \State $k \sim Cat(\vbeta_i^{l})$ \Comment{{\scriptsize sample mixture index from a categorical dist.}}
        \ForAll{$\edge{i,j} \in ~ \gE({\gG^{l-1}})$} \Comment{in parallel for all BPs}
            \State $\vw_{ij}^l \sim \text{Mu}(w_{ij}^{l-1}, \vtheta_{ij, k}^{l})$ \Comment{{\scriptsize sample weights of $\bp_{ij}^l$}}
        \EndFor
        
        \State ${\gG}^l \gets$ join($\{\pg_{i}^l ~\forall i \in \gV({\gG^{l-1}})\} ~\cup~
    \{\bp_{ij}^l ~\forall \edge{i,j} \in \gE({\gG^{l-1}})  \})$ \Comment{{{ Join all components}}}
    \EndFor
    \State \Return ${\gG} = {\gG}^L$
\end{algorithmic}
\end{algorithm}
\vspace{-0.4cm}
\end{figure}

\subsection{Connected graph generation}
An advantage of the proposed model is its ability to enforce connected graph generation by constraining the total sum of the candidate edge weights to $\rv_{t} >0$ in the recursive \emph{stick-breaking} process. This is accomplished by modeling and sampling an auxiliary variable $\hat{\rv}_{t} \ge 0 $ from the binomial distribution in theorem 3.2 
as $\text{Bi}(\hat{\rv}_{t}  ~|~ \rr_t -1,~ {\eta}_{t, k}^{l}) $.
Subsequently, its effective value is set as $\rv_{t} = \hat{\rv}_{t} +1$, guaranteeing the existence of at least one edge among the candidate edges connecting the new node and the previously generated community in the autoregressive community generation process.
This technique was particularly employed in our experiments with connected graphs.

It's noteworthy that the proposed method is not restricted to connected graphs and has the capability to model and generate graphs with disconnected components as well. 
HiGen, in essence, learns to reverse a graph coarsening algorithm, such as Louvain. 
In cases where a graph is disconnected, the coarsening algorithm produces an $\mathcal{HG}$ with a disconnected graph at the top level ($l=1$ in Figure \ref{fig:HG}). 
To illustrate, consider the example graph in Figure \ref{fig:HG} is split into two left and right components (by removing the edge between the yellow and blue communities and the edge between the cyan and orange communities, and subsequently removing their corresponding edge at the top level ($l=1$)), therefore the coarsened graph at the top level  becomes a disconnected community composed of two components. 
Consequently, to potentially generate disconnected graphs, 
we would need to relax the community generation at level $l=1$ to include disconnected communities. Meanwhile, we can maintain the constraint of connected community generation for the subsequent levels ($l>1$).

\subsection{Generating Graph with node and edge attributes}
Adapting HiGen to handle attributed graphs involves reversing the partitioning (coarsening) algorithms tailored for clustering attributed graphs like molecular structures. For example a GNN-based clustering method inspired by the spectral relaxation of modularity metric for graphs with node attributes proposed by \citet{tsitsulin2020graphClustering}. Toward that goal, we need to modify the proposed community generation and cross-community predictor to learn attributed edges or use the graph generation models with such capability. 

Extending HiGen for graphs with edge types involves assigning a weight vector $\vw_{i, j}^l \in \mathbb{Z}^d$ to each edge $e_{i, j}^l = \edge{i, j}$, where $d$ is the number of edge types and each feature indicates the number of one specific edge type that the edge $e_{i,j}^l$ at a level $l$ is representing. 
The autoregressive multinomial and binomial generative models offered in this work can then be applied to each dimension so that the attributed graph at level $l+1$ is generated based on its parent attributed graph $\gG^l$. 
Note that, in this approach, the attribute of node $v_i$ can be added as a self loop edge $e_{i,i} = (i, i)$ with weight $\vw_{i,i}=\vx_i$. 
Consequently,  the model only needs to predict the edge attributes (or edge types), aligning with HiGen's edge weight generation capabilities.

Another approach is training an additional predictor, such as a GNN model for edge/node classification, dedicated to predicting edge and node types based on the graph structure. 
This model does not have the difficulties associated with graph topology generation – such as graph isomorphism and edge independence – focusing solely on predicting edge/node attributes. This additional model can be tailored to specific application requirements and characteristics. 
Addressing attributed graphs is acknowledged as a potential future avenue for research.

\section{Experimental details} \label{apdx:exp_details}

\paragraph{Datasets: }
For the benchmark datasetst, graph sizes, denoted as $D_{dataset} = (|\gV|_{max}, |\gV|_{avg}, |\gE|_{max}, |\gE|_{avg})$, are:
$D_{protein} ={(500, 258, 1575, 646)}$, 
$D_{Ego} ={(399, 144, 1062, 332)}$,
$D_{Point-Cloud} = {(5.03k, 1.4k, 10.9k, 3k)}$, 

Before training the models, we applied Louvain algorithm to obtain hierarchical graph structures for all of datasets and then trimmed out the intermediate levels to achieve uniform depth of $L=2$.
In case of $\gHG$ s with varying heights, empty graphs can be added at the root levels of those $\gHG$s with lower heights to avoid sampling them during training. 
Table \ref{table:stat_datasets}  summarizes some statistics of the hierachical graph datasets.
An 80$\%$-20$\%$ split was randomly created for training and testing and 20$\%$ of the training data was used for validation purposes. 

\begin{table}[h]
\begin{center}
\caption{ \captionsize
Summary of some statistics of the benchmark graph datasets, 
Where $n_c= max(|\pg|)$ denotes the size of largest cluster at the leaf level,
$num_c$ is the number of clusters in each graph and $avg(mod_{~test})$ and $avg(mod_{~gen})$ are the modularity score of the test set and the generated samples by \HiGeN. 
\label{table:stat_datasets}
}
\begin{minipage}{1.\linewidth}
    \centering
    \begin{adjustbox}{width=.9\textwidth,}
        \begin{tabular}{
l |cccccc}

dataset & $max(n)$  & $avg(n)$  & $avg(n_c)$  & $avg(num_{c})$ & $avg(mod_{~test})$ & $avg(mod_{~gen})$ 
\\ \toprule
\textbf{Enzyme} & 125 & 33 & 9.8 & 4.62 & 0.59 & 0.62 \\
\textbf{Ego} & 399 & 144 & 37.52 & 8.88 & 0.56 & 0.66  \\
\textbf{Protein} & 500 & 258 & 26.05 & 13.62 & 0.77 & 0.8 \\
\textbf{SBM} & 180 & 105 & 31.65 & 3.4 & 0.6 & 0.59 \\
\textbf{3D point Cloud }& 5K & 1.4K & 97.67 & 18.67 & 0.85 & 0.88\\
\hline
\end{tabular}

    \end{adjustbox}
\end{minipage}
\end{center}
\end{table}

\paragraph{Model Architecture: }\label{apdx:model_arch}
In our experiments, the GraphGPS models consisted of 8 layers, while each level of hierarchical model has its own GNN parameters. The input node features were augmented with positional and structural encodings, which included the first 8 eigenvectors corresponding to the smallest non-zero eigenvalues of the Laplacian matrices and the diagonal of the random-walk matrix up to 8 steps.
We leverage the original Transformer architecture for all detests except Point Cloud dataset which use Performer. 
The hidden dimensions were set to 64 for the Protein, Ego, and Point Cloud datasets, and 128 for the Stochastic Block Model and Enzyme datasets. The number of mixtures was set to K=20.

In comparison, the GRAN models utilized 7 layers of GNNs with hidden dimensions of 128 for the Stochastic Block Model, Ego, and Enzyme datasets, 256 for the Point Cloud dataset, and 512 for the Protein dataset. Despite having smaller model sizes, \HiGeN achieved better performance than GRAN.

For training, the \HiGeN models used the Adam optimizer \cite{kingma2014adam} with a learning rate of 5e-4 and default settings for $\beta_1$ (0.9), $\beta_2$ (0.999), and $\epsilon$ (1e-8).

The experiments for the Enzyme and Stochastic Block Model datasets were conducted on a MacBook Air with an M2 processor and 16GB RAM, while the rest of the datasets were trained using an NVIDIA L4 Tensor Core GPU with 24GB RAM as an accelerator.

\subsection{Model Size Comparison}
Here, we compare the model size, number of trainable parameters, against GRAN, the main baseline of our study. 
To ensure the \HiGeN model of the same size or smaller size than GRAN, we conducted the experiments for SBM and Enzyme datasets with reduced sizes. For the SBM dataset, we set the hidden dimension to 64, and for the Enzyme dataset, it was set to 32. The resulting model sizes and performance metrics are presented in Tables \ref{table:model_size} and \ref{table:enz_SBM_small_Higen}. 

\begin{table}[H]
\begin{center}
\caption{ \captionsize
Comparison of model sizes (number of trainable parameters) of \HiGeN  vs GRAN.}
\label{table:model_size}
\begin{minipage}{.7\linewidth}
    \centering
    \begin{adjustbox}{width=1.\textwidth,}
        
\begin{tabular}{
l |ccccc}
                    & {\textbf{Protein}} & {\textbf{3D Point Cloud}} & {\textbf{Ego}} & {\textbf{Enzyme}} & {\textbf{SBM}}    \\ \toprule
\textbf{GRAN}       & 1.75e+7  & 5.7e+6 & 1.5e+7  & 1.54e+6 & 3.16e+6 \\
\textbf{\HiGeN}     & 4.00e+6 & 6.26e+6 & 3.96e+6 & 1.48e+6 & 3.19e+6  \\
\hline
\end{tabular}

    \end{adjustbox}
\end{minipage}
\end{center}
\end{table}

\begin{table}[H]
\begin{center}
\caption{ \captionsize
Comparison of generation metrics for Enzyme and Stochastic Block Model (SBM). Here, the hidden dimension of \HiGeN is set to 32 and 62 for Enzyme and SBM, respectively.}
\label{table:enz_SBM_small_Higen}
\begin{minipage}{.47\linewidth}
    \centering
    \begin{adjustbox}{width=.9\textwidth,}    \begin{tabular}{l || cccc}
\toprule
          & \multicolumn{4}{c}{\emph{Enzyme}} \\
 Model    &  Deg. $\downarrow$ & Clus. $\downarrow$ & Orbit $\downarrow$ 
\\ \midrule
Training set  &  0.0011 & 0.0025 & 3.7e-4    \\
 GraphRNN     & {0.017} & {0.062} & {0.046}       \\
 GRAN         & {0.054} & {0.087} & {0.033}   \\
 GDSS         & 0.026 & \textbf{0.061} & 0.009 \\
 \HiGeN       & $\bm{6.8}$e-3 & {0.067} & $\bm{1.3}$e-3 \\ 
 \end{tabular}
    \end{adjustbox}
\end{minipage}
\hfill
\begin{minipage}{.5\linewidth}
    \vskip 4pt
    \centering
    \begin{adjustbox}{width=1.\textwidth,}    \begin{tabular}{l || cccc}
\toprule
          & \multicolumn{4}{c}{\emph{Stochastic block model}} \\
 Model    &  Deg. $\downarrow$ & Clus. $\downarrow$ & Orbit$\downarrow$ & {Spec.\,$\downarrow$} 
\\ \midrule 
Training set  & {0.0008} & {0.0332} & {0.0255} & {0.0063} \\
 GraphRNN     & {0.0055} & {0.0584} & {0.0785} & {0.0065} \\
 GRAN         & {0.0113} & {0.0553} & {0.0540} & \underline{0.0054} \\
 SPECTRE      & \underline{0.0015} & {0.0521} & \underline{0.0412} & {0.0056}   \\
 DiGress      & {\textbf{0.0013}}& \textbf{0.0498}  & {0.0433} & {-}   \\
 \HiGeN        & {0.0015} & {0.0520} & {\textbf{0.0370}} & {\textbf{0.0049}} \\
 \end{tabular}
    \end{adjustbox}
\end{minipage}
\end{center}
\end{table}

These results highlight that despite smaller or equal model sizes, \HiGeN outperforms GRAN's performance. This emphasizes the efficacy of hierarchically modeling communities and cross-community interactions as distinct entities. This results can be explained by the fact that the proposed model needs to learn smaller community sizes compared to GRAN, allowing for the utilization of more compact models.

\subsection{Sampling Speed Comparison} \label{apdx:sampling_time}

In table \ref{table:sampling_time}, we present a comparison of the sampling times between the proposed method and its primary counterpart, GRAN, measured in seconds. The sampling processes were carried out on a server machine equipped with a 32-core AMD Rome 7532 CPU and 128 GB of RAM.

\begin{table}[H]
\begin{center}
\caption{ \captionsize
Sampling times in seconds.
\label{table:sampling_time}
}
\begin{minipage}{.45\linewidth}
    \centering
    \begin{adjustbox}{width=1.\textwidth,}    \begin{tabular}{
l |ccccc}
                    & {\textbf{Protein}} & 
                    {\textbf{Ego}} & {\textbf{SBM}} & {\textbf{Enzyme}}   \\ \toprule
\textbf{GRAN}       & 46.04 &	2.145 &	1.5873 &	0.2475 \\
\textbf{\HiGeN}     &  1.33 &	0.528 &	0.4653 &	0.1452\\
\hline
\end{tabular}

    \end{adjustbox}
\end{minipage}
\end{center}
\end{table}

\begin{table}[H]
\begin{center}
\caption{ \captionsize
Sampling speedup factor of generative model vs GRAN: ${t_{GRAN}(s)}/{t_{model}(s)}$. The speedup factor of baseline models are obtained from \cite{martinkus2022spectre}.
\label{table:sampling_time}
}
\begin{minipage}{.35\linewidth}
    \centering
    \begin{adjustbox}{width=1.\textwidth,}    
        \begin{tabular}{c|cc}
         &  \textbf{Protein} & \textbf{SBM} \\ \toprule
         \textbf{GRAN}&  1& 1\\
         \textbf{GraphRNN}&  0.32& 0.37\\
         \textbf{SPECTRE}&  23.04& 25.54\\
         \textbf{\HiGeN} &  34.62& 3.41\\
         \hline
    \end{tabular}
    \end{adjustbox}
\end{minipage}
\end{center}
\end{table}

As expected by the model architecture and is evident from the table \ref{table:sampling_time}, \HiGeN demonstrates a significantly faster sampling, particularly for larger graph samples.

\section{Additional Results} \label{apdx:modelANDsamples}

Table \ref{table:res_allGraphs_allMetrics} presents the results of various metrics for \HiGeN models on all benchmark datasets. The structural statistics are evaluated using the Total Variation kernel as the Maximum Mean Discrepancy (MMD) metric.

In addition, the table includes the average of random-GNN-based metrics \citep{thompson2022OnEvaluationMetric} over 10 random Graph Isomorphism Network (GIN) initializations.
The reported metrics are 
MMD with RBF kernel (GNN RBF),
the harmonic mean of improved precision+recall (GNN F1 PR)
and harmonic mean of density+coverage (GNN F1 PR).

\begin{table}[H]
\begin{center}
\caption{ \captionsize
Various graph generative performance metrics  for \HiGeN models on all benchmark datasets.}
\label{table:res_allGraphs_allMetrics}
\begin{minipage}{.9\linewidth}
    \centering
    \begin{adjustbox}{width=.9\textwidth,}
        \def\arraystretch{1.2} \tabcolsep=5pt
\begin{tabular}{l cccc ccc}
\toprule
 Model    &  Deg. $\downarrow$ & Clus. $\downarrow$ & Orbit$\downarrow$ & {Spec.\,$\downarrow$} & GNN RBF $\downarrow$ & GNN F1 PR $\uparrow$ & GNN F1 DC $\uparrow$
\\ \midrule 
\multicolumn{8}{l}{\emph{Enzyme}}\\
\textbf{GRAN}         & 8.45e-03   &  2.62e-02  &  2.11e-02 & 3.46e-02      & 0.0663 & 0.950 & 0.832 \\
\textbf{\HiGeN -m}         & 6.61e-03   &  2.65e-02  &  2.15e-03 & 8.75e-03 & 0.0215 & 0.970 & 0.897 \\
\textbf{\HiGeN}            & 2.31e-03  &  2.08e-02 & 1.51e-03  & 9.56e-03   & 0.0180 & 0.978   & 0.983 \\
\midrule
\multicolumn{8}{l}{\emph{Protein}}\\
\textbf{\HiGeN -m}          & {0.0041} & {0.109} & {0.0472} & 0.0061 & 0.167 & 0.912 & 0.826\\
\textbf{\HiGeN}            & {0.0012} & {0.0435} & {0.0234} & 0.0025 & 0.0671 & 0.979 & 0.985\\
\midrule
\multicolumn{8}{l}{\emph{Stochastic block model}}\\
\textbf{GRAN}            & {0.0159} &  {0.0518} & {0.0462}   & 0.0104 & 0.0653 & 0.977 & 0.86 \\ 
\textbf{\HiGeN -m}          & {0.0017} & {0.0503} & {0.0604} & 0.0068 & 0.154 & 0.912 & 0.83 \\ 
\textbf{\HiGeN}            & {0.0019} &  {0.0498} & {0.0352} & 0.0046 & 0.0432 & 0.986 & 1.07 \\ 
\midrule
\multicolumn{8}{l}{\emph{Ego}}\\
 \textbf{GraphRNN} & 9.55e-3 & 0.094 & 0.048 & 0.025 & 0.0972 & 0.86 &0.45\\
\textbf{GRAN}            & 7.65e-3 & 0.066 & 0.043 & 0.026   & 0.0700 & 0.76 & 0.50 \\
\textbf{\HiGeN -m}          & 0.011  & 0.063   & 0.021  & 0.013   & 0.0420 & 0.87 & 0.68\\
\textbf{\HiGeN}            & 1.9e-3 & 0.049 & 0.029 & 0.004   & 0.0520 & 0.88 & 0.69 \\
 \end{tabular}
    \end{adjustbox}
\end{minipage}
\end{center}
\end{table}

\begin{table}[H]
\begin{center}
\caption{ \captionsize
Performance comparison of \HiGeN model on Ego datasets against the baselines reported in \citep{chen2023EDGE}. Gaussian EMD kernel was used for structure-based statistics together with GNN-based performance metrics: GNN RBF and Frechet Distance (FD) }
\label{table:res_Ego_new}
\begin{minipage}{.9\linewidth}
    \centering
    \begin{adjustbox}{width=.7\textwidth,}
        
    \end{adjustbox}
\end{minipage}
\end{center}
\end{table}
As the results show, \HiGeN outperforms other baseline, particularly improving in terms of degree statistic compared to EDGE which is explicitly a degree-guided graph generative model by design.

\subsection{Point Cloud}

We also evaluated \HiGeN on the \emph{Point Cloud} dataset, which consists of 41 simulated 3D point clouds of household objects. This dataset consists of large graphs of approximately 1.4k nodes on average with maximum of over 5k nodes. 
In this dataset, each point is mapped to a node in a graph, and edges are connecting the k-nearest neighbors based on Euclidean distance in 3D space \citep{neumann2013graph}.
We conducted the experiments for hierarchical depth of $L=2$ and $L=3$.
In the case of $L=2$,  we spliced out the intermediate levels of Louvain algorithm's output.  
For the $L=3$ experiments, we splice out all the intermediate levels except for the one that is two level  above the leaf level when partitioning function's output had more than 3 levels 
to achieve final $\gHG$s with uniform depth of $L = 3$. For all graphs, Louvain's output had at least 3 levels.

However, the quadratic growth of the number of candidate edges in the augmented graph of bipartites $\hat{\gG}^l$ -- the graph composed of all the communities 
and the candidate edges of all bipartites 
used in section \ref{sec:dist_bipart} for bipartite generation
-- can cause out of memory issue 
when training large graphs in the point cloud dataset.
To address this issue, we can sample sub-graphs and generate one (or a subset of) bipartites at a time to fit the available memory. 
In our experimental study, we generated bipartites sequentially, sorting them based on the index of their parent edges in the parent level. In this case, the augmented graph $\hat{\gG}^l$ used for generating the edges of $\bp_{ij}^l$ consists of all the communities $\{\pg_{k}^l ~~\forall k \le \max(i, j) \}$ and all the bipartites $\{\bp_{mn}^l ~~\forall \edge{m,n} \le \edge{i,j}\}$, augmented with the candidate edges of $\bp_{ij}^l$. 
This model is denoted by \HiGeN-s in table \ref{table:res_point_cloud}.

This modification can also address a potential limitation related to edge independence when generating all the inter-communities simultaneously. 
However, it's important to note that the significance of edge independence is more prominent in high-density graphs like community generations,  \citep{chanpuriya2021powerEdgeIndependent}, whereas its impact is less significant in sparser inter-communities of hierarchical approach. This is evident by the performance improvement observed in our experiments.

The GraphGPS models that was used for this experiment have employed Performer \citep{choromanski2020Performer}
which offers linear time and memory complexity.
The results  in Table \ref{table:res_point_cloud} highlights the performance improvement of \HiGeN-s in both local and global properties of the generated graphs.

\begin{table}[H]
\begin{center}
\caption{ \captionsize
Comparison of generation metrics on benchmark 3D point cloud for $L=2$ and deeper hierarchical model of $L=3$. The baseline results are obtained from \citep{liao2019GRAN}.
\label{table:res_point_cloud}
}
\begin{minipage}{.9\linewidth}
    \centering
    \begin{adjustbox}{width=.6\textwidth,}
        
    \end{adjustbox}
\end{minipage}
\end{center}
\end{table}

An alternative approach is to sub-sample a large graph such that each augmented sub-graph consists of a bipartite $\bp_{ij}^l$ and its corresponding pair of communities ${\pg_{i}^l, ~ \pg_{j}^l }$. This approach allows for parallel generation of bipartite sub-graphs on multiple workers but does not consider the edge dependece between neighboring bipartites.

\subsection{Ablation studies}\label{subsec:abl}
In this section, two ablation studies were conducted to evaluate the sensitivity of \HiGeN with different node orderings and graph partitioning functions.

\paragraph{Node Ordering}
In our experimental study, the nodes in the communities of all levels are ordered using breadth first search (BFS) node ordering while the BFS queue are sorted by the total weight of edges between a node in the queue and predecessor nodes plus its self-edge.
To compare the sensitivity of the \HiGeN model against GRAN versus different node orderings, we trained the models with default node ordering and random node ordering.
The performance results, presented in Table \ref{table:res_abl_ordering}, confirm that the proposed model is significantly less sensitive to the node ordering whereas the performance of GRAN drops considerably with non-optimal orderings. 

\begin{table}[H]
\begin{center}
\caption{ \captionsize
Ablation study on node ordering. 
Baseline \HiGeN used the
BFS ordering and baseline GRAN used DFS ordering.
$\pi_1$, $\pi_2$ and $\pi_3$ are default, random and $\pi_3$ node ordering, respectively. 
Total variation kernel is used as MMD metrics of structural statistics.
Also, the average of random-GNN-based metrics aver 10 random GIN initialization are reported for 
MMD with RBF kernel (GNN RBF),
the harmonic mean of improved precision+recall (GNN F1 PR)
and harmonic mean of density+coverage (GNN F1 PR).
} 
\label{table:res_abl_ordering}
\begin{minipage}{.9\linewidth}
    \centering
    \begin{adjustbox}{width=.9\textwidth,}
        \def\arraystretch{1.2} \tabcolsep=5pt
\begin{tabular}{l || cccc ccc}
\toprule
          & \multicolumn{7}{c}{\emph{Enzyme}}\\
 Model    &  Deg. $\downarrow$ & Clus. $\downarrow$ & Orbit$\downarrow$ & {Spec.\,$\downarrow$} & GNN RBF $\downarrow$ & GNN F1 PR $\uparrow$ & GNN F1 DC $\uparrow$
\\ \midrule 
\textbf{GRAN}              &  8.45e-03  &  2.62e-02 & 3.46e-02  &  2.11e-02 & 6.63e-02 &  9.50e-01  & 8.32e-01 \\
\textbf{GRAN ($\pi_1$)}    & 1.75e-02   & 2.89e-02  & 3.78e-02  & 2.03e-02  & 6.51e-02 & 8.24e-01   & 6.69e-01 \\
\textbf{GRAN ($\pi_2$)}    & 3.90e-02   & 3.24e-02  & 3.81e-02  & 2.38e-02  & 1.26e-01 &  8.31e-01  & 6.72e-01  
\\ \midrule 
\textbf{\HiGeN}            & 2.31e-03   &  2.08e-02 & 1.51e-03  & 9.56e-03  & 1.80e-02 & 9.78e-01   & 9.83e-01 \\
\textbf{\HiGeN ($\pi_1$)}  & 1.83e-03   &  2.21e-02 & 6.75e-04  & 7.08e-03  & 1.78e-02 & 9.84-01   & 9.77e-01 \\
\textbf{\HiGeN ($\pi_2$)}  & 3.31e-03   &  2.34e-02 & 2.06e-03  & 9.10e-03  & 2.04e-02 & 9.47-01   & 8.81e-01 \\ 
\textbf{\HiGeN ($\pi_3$)}  & 1.34e-03 &  2.13e-02 & 6.94e-04 & 6.56e-03 & 1.90e-02 & 9.61e-01 & 9.74e-01
\end{tabular}
    \end{adjustbox}
\end{minipage}
\end{center}
\end{table}

\paragraph{Different Graph Partitioning}
In this experimental study, we evaluated the performance of \HiGeN using different graph partitioning functions.  
Firstly, to assess the sensitivity of the hierarchical generative model to random initialization in the Louvain algorithm, we conducted the \HiGeN experiment three times with different random seeds on the Enzyme dataset. The average and standard deviation of performance metrics are reported in Table \ref{table:res_diff_partitioning}  which demonstrate that \HiGeN consistently achieves almost similar performance across different random initializations.

Additionally, we explored spectral clustering (SC), which is a relaxed formulation of \textit{k}-min-cut partitioning \citep{shi2000normalized},  as an alternative partitioning method. 
To determine the number of clusters, we applied SC to partition the graphs over a range of $0.7\sqrt{n} \leq k \leq 1.3\sqrt{n}$, where $n$ represents the number of nodes in the graph. We computed the modularity score of each partition and selected the value of $k$ that yielded the maximum score.

The results presented in Table \ref{table:res_diff_partitioning} demonstrate the robustness of \HiGeN  against different graph partitioning functions.

\begin{table}[H]
\begin{center}
\caption{ \captionsize
Multiple initialization of Louvain partitioning algorithm and also min-cut partitioning}
\label{table:res_diff_partitioning}
\begin{minipage}{1.\linewidth}
    \centering
    \begin{adjustbox}{width=1.\textwidth,}
        \def\arraystretch{1.2} \tabcolsep=5pt
\newcommand{\std}[1]{$\pm$\scriptsize{#1}}
\begin{tabular}{l || cccc ccc}
\toprule
          & \multicolumn{7}{c}{\emph{Enzyme}}\\
 Model    &  Deg. $\downarrow$ & Clus. $\downarrow$ & Orbit$\downarrow$ & {Spec.\,$\downarrow$} & GNN RBF $\downarrow$ & GNN F1 PR $\uparrow$ & GNN F1 DC $\uparrow$
\\ \midrule 
\textbf{\HiGeN} 
& 2.64e-03\std{4.7e-4} &	2.09e-02\std{4.0e-4} & 7.46e-04\std{4.4e-4} & 1.74e-02\std{1.5e-3} &	2.00e-02\std{3.1e-3} &   .98\std{4.6e-3}	& .96\std{1.0e-2} \\
\textbf{\HiGeN (SC)}  & 2.24e-03 & 2.10e-02 &  5.59e-04 & 8.30e-03 & 2.00e-02 & .98 & .94 ,  \\
 \end{tabular}
    \end{adjustbox}
\end{minipage}
\end{center}
\end{table}

\subsection{Graph Samples} \label{apdx:GraphSamples}

Generated hierarchical graphs sampled from \HiGeN models are presented in this section. 

\begin{figure}[ht] 
\renewcommand{\widt}{.12\textwidth}
\centering
\begin{minipage}{1.\linewidth}
\def\arraystretch{1} \tabcolsep=0pt
\begin{tabular}{l cccc | cccc}
    & \multicolumn{4}{c}{\emph{Protein}} & \multicolumn{4}{c}{\emph{Stochastic Block Model}}\\
    \begin{turn}{90} ~~~~~~ Train  \end{turn} 
    & \adjincludegraphics[width=\widt,
		trim={0 {0\height} 0 {0.\height}},clip]{./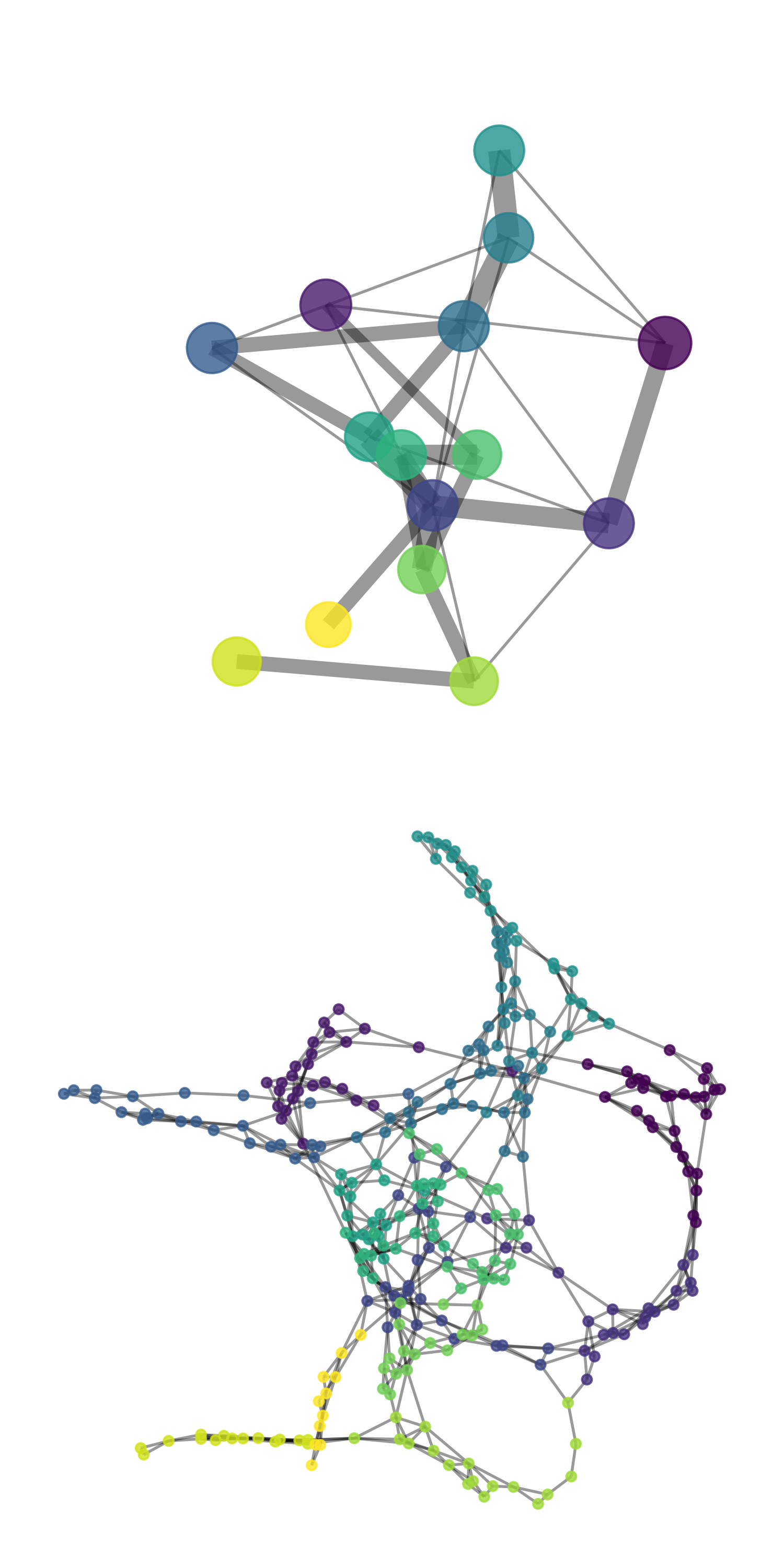} 
    & \adjincludegraphics[width=\widt,
		trim={0 {0\height} 0 {0.\height}},clip]{./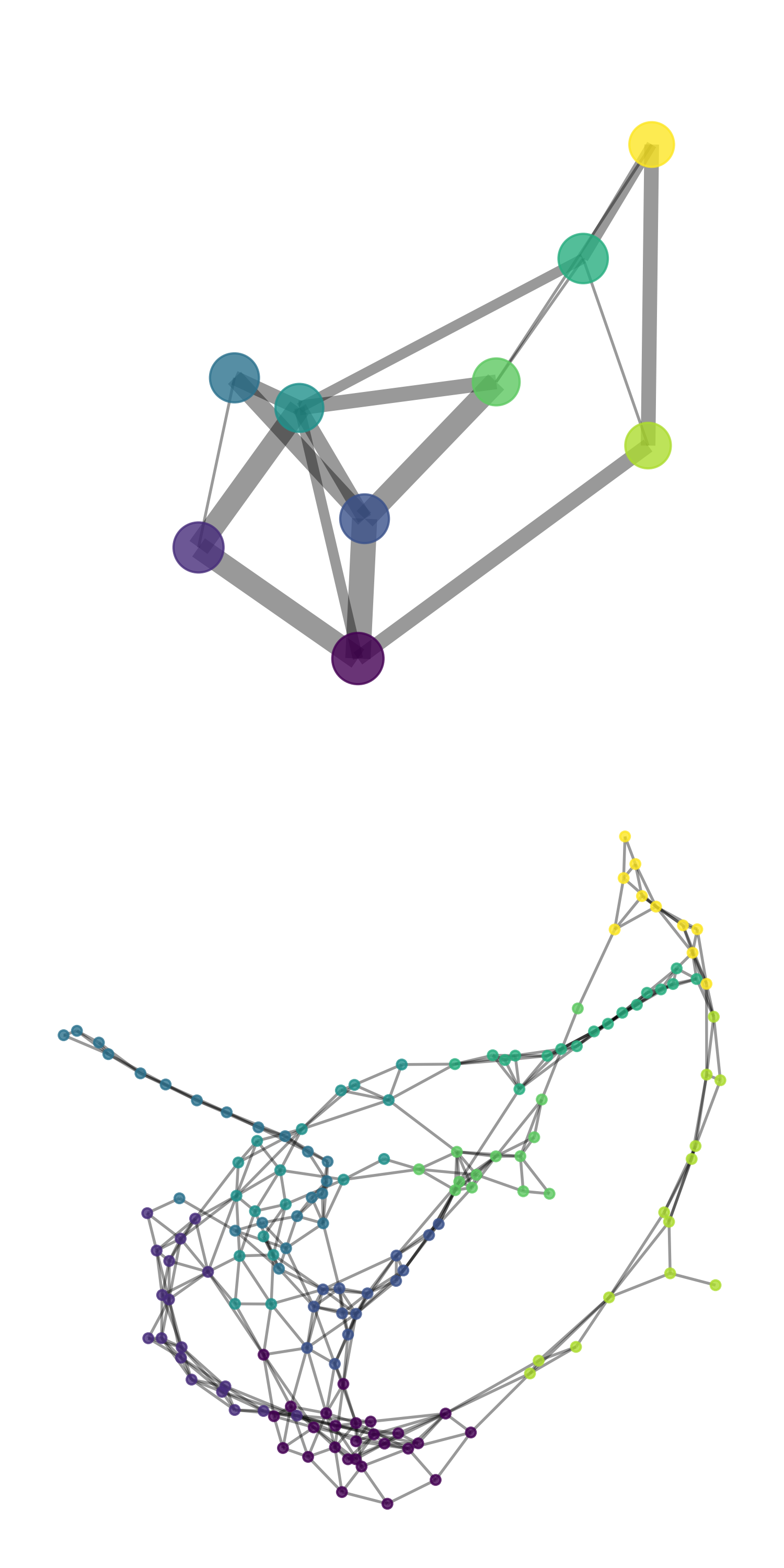} 
    & \adjincludegraphics[width=\widt,
		trim={0 {0\height} 0 {0.\height}},clip]{./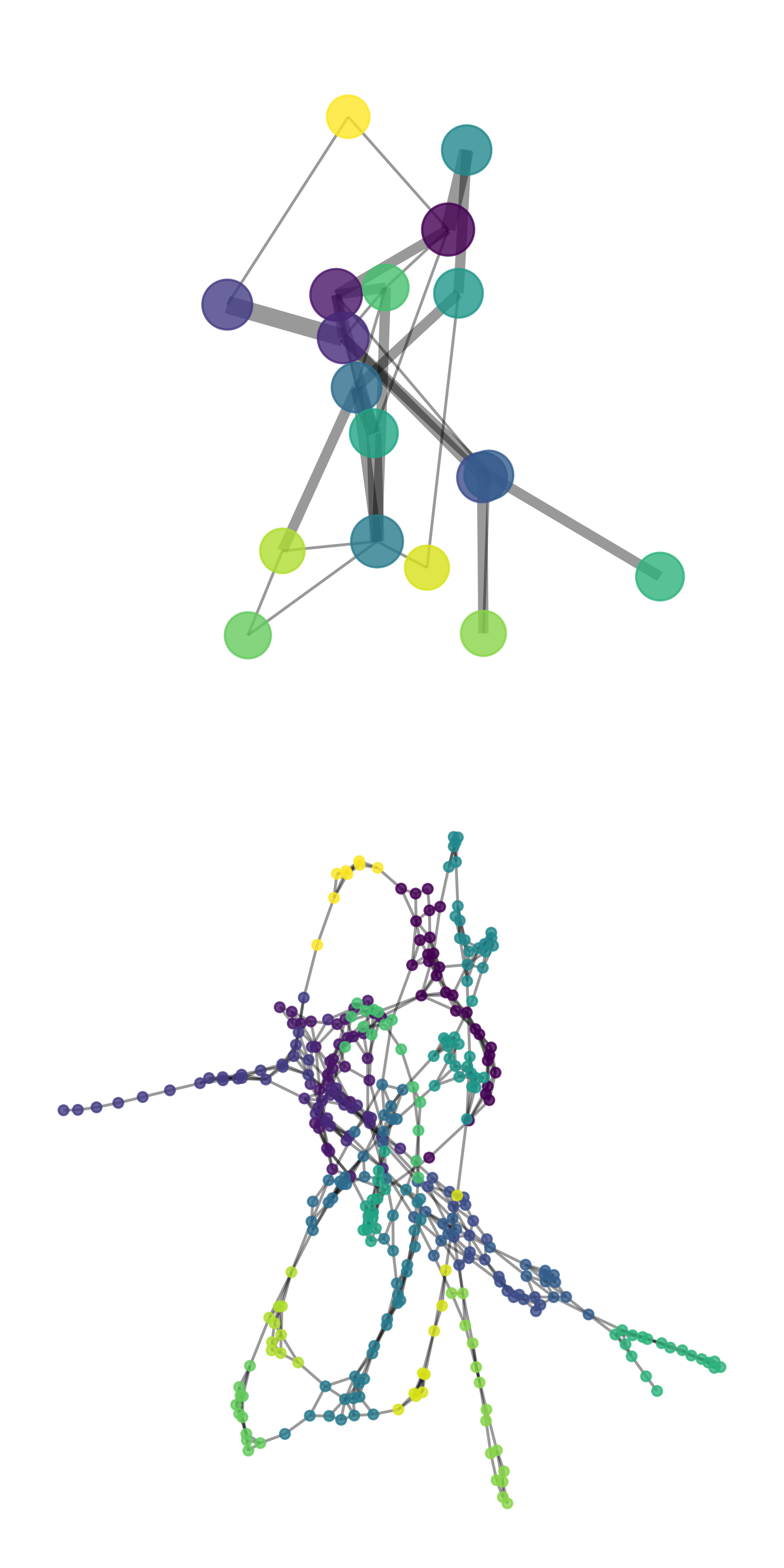} 
    & \adjincludegraphics[width=\widt,
		trim={0 {0\height} 0 {0.\height}},clip]{./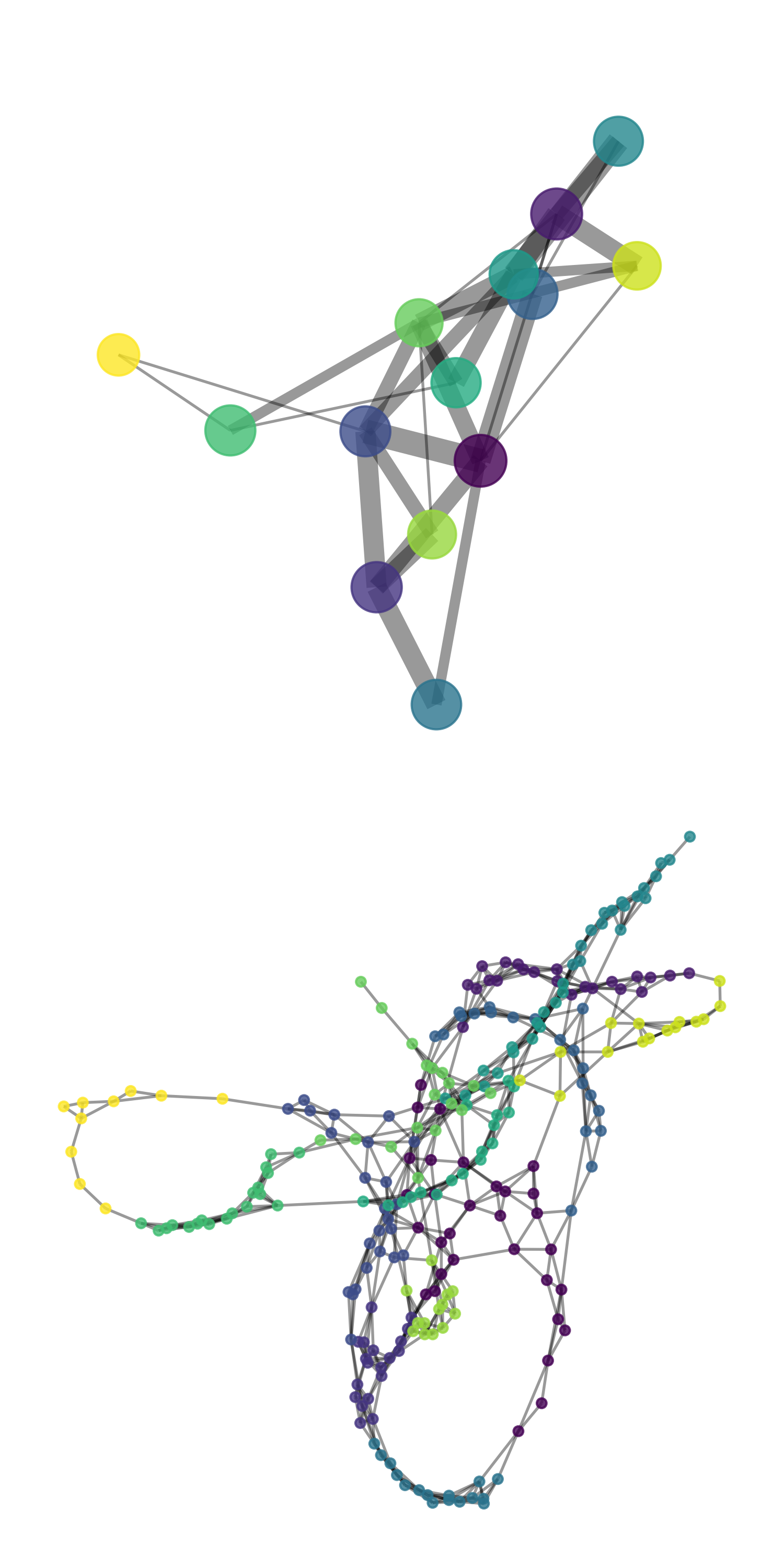}
      & \adjincludegraphics[width=\widt,
		trim={0 {0\height} 0 {0.\height}},clip]{./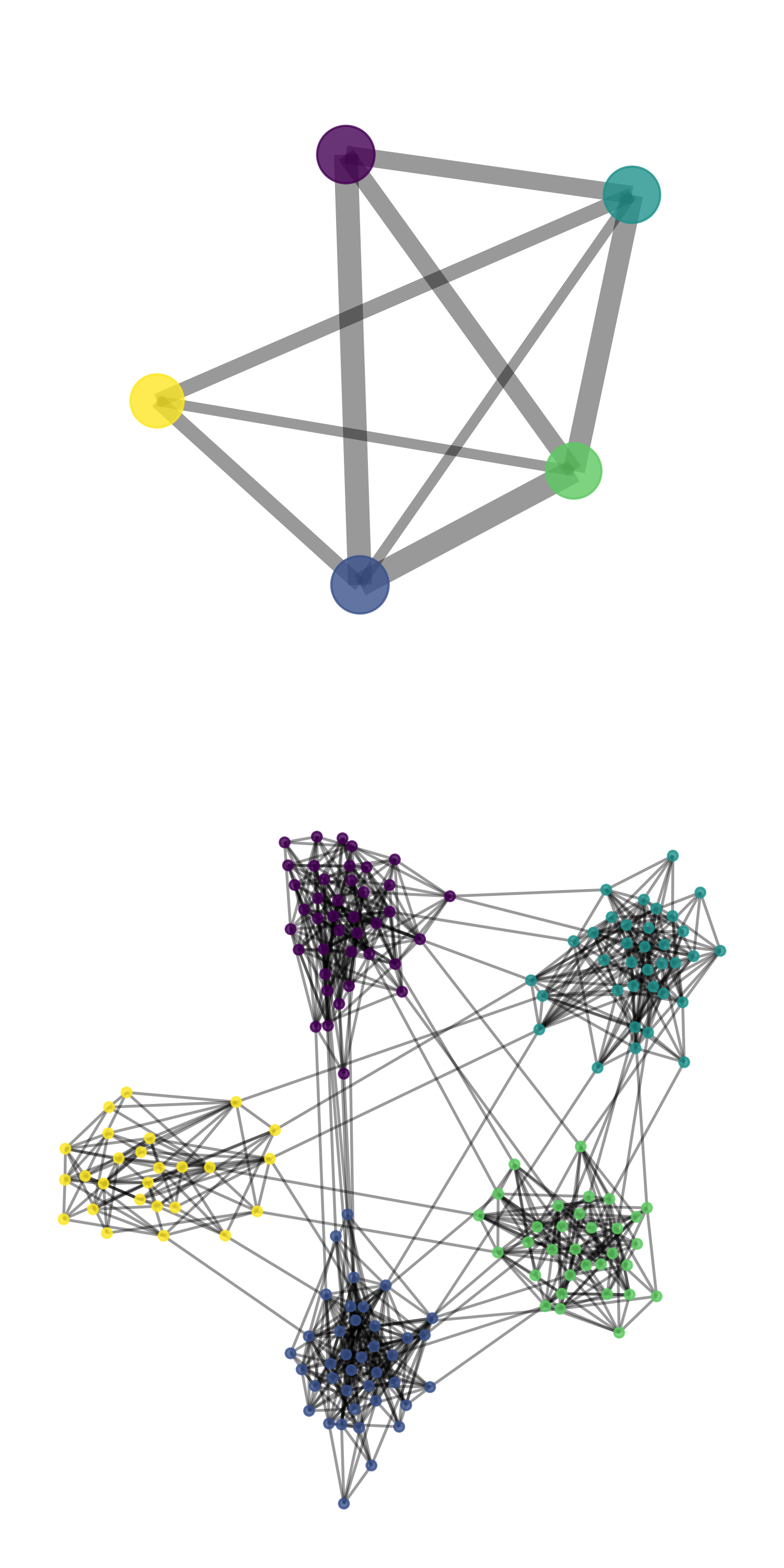} 
    & \adjincludegraphics[width=\widt,
		trim={0 {0\height} 0 {0.\height}},clip]{./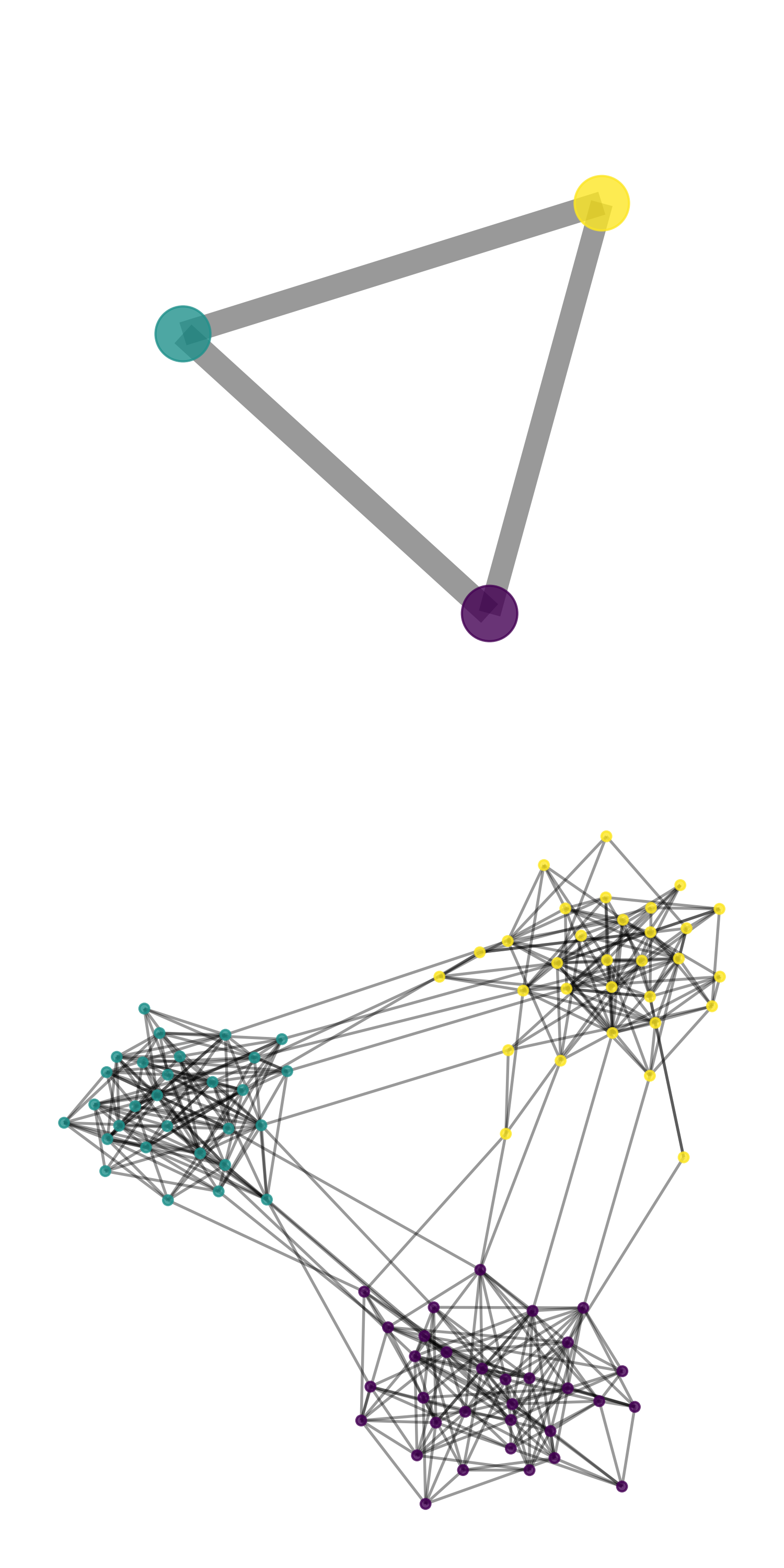} 
    & \adjincludegraphics[width=\widt,
		trim={0 {0\height} 0 {0.\height}},clip]{./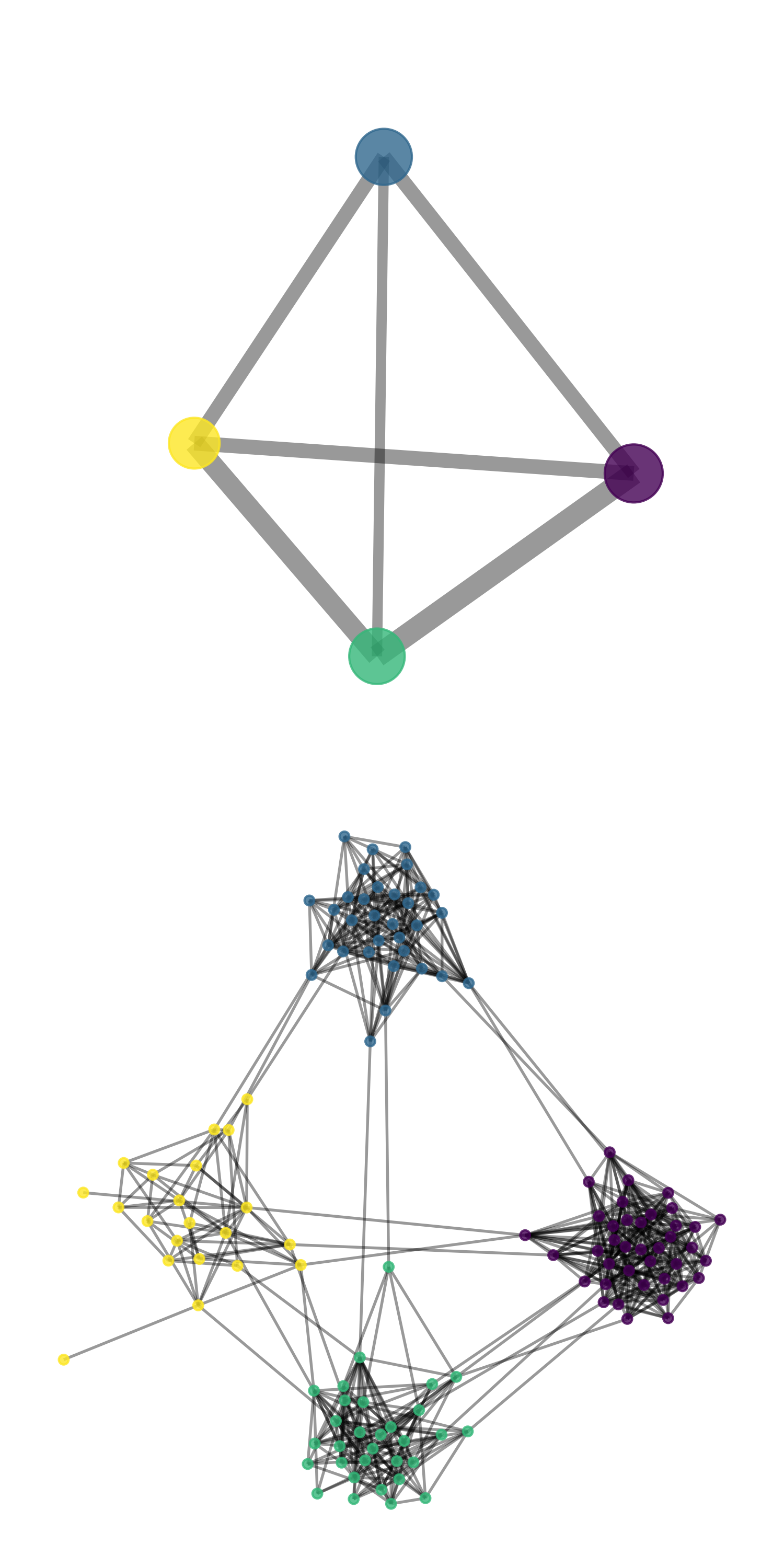} 
    & \adjincludegraphics[width=\widt,
		trim={0 {0\height} 0 {0.\height}},clip]{./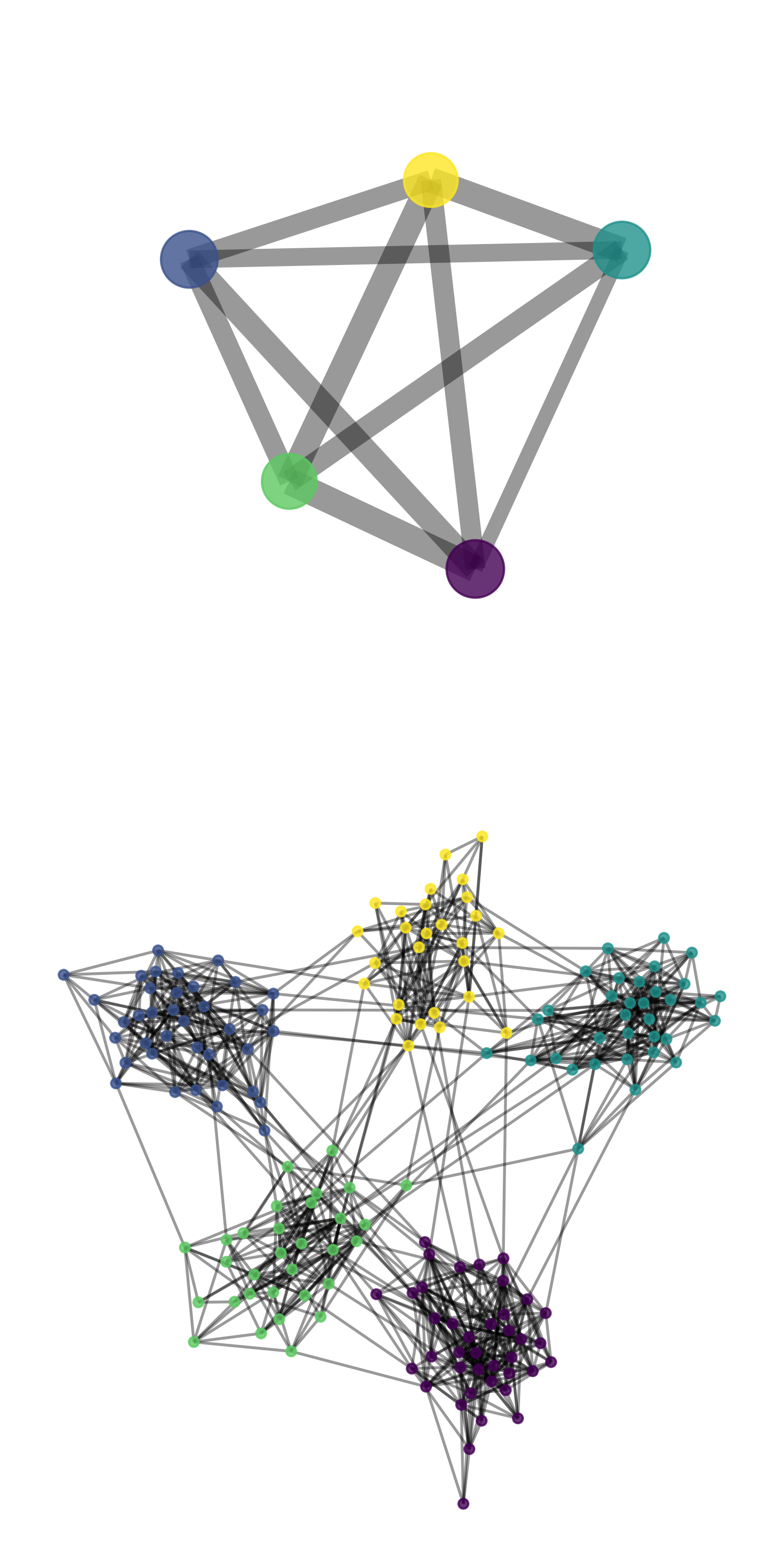}\\
  \midrule 
    \begin{turn}{90} ~ GRAN  \end{turn} 
    & \adjincludegraphics[width=\widt,
		trim={0 {0\height} 0 {0.\height}},clip]{./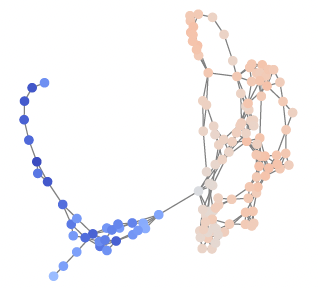} 
    & \adjincludegraphics[width=\widt,
		trim={0 {0\height} 0 {0.\height}},clip]{./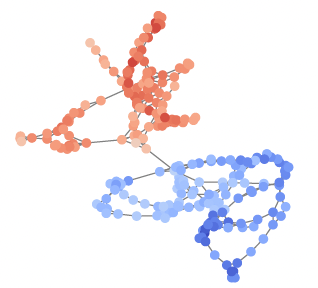} 
    & \adjincludegraphics[width=\widt,
		trim={0 {0\height} 0 {0.\height}},clip]{./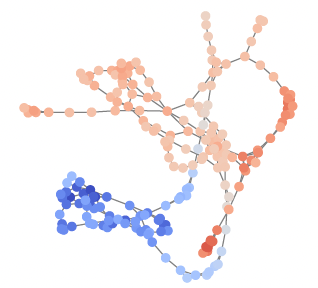} 
  & \adjincludegraphics[width=\widt,
    trim={0 {0\height} 0 {0.\height}},clip]{./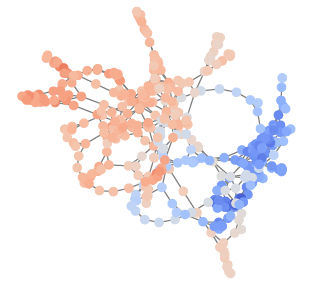} 
    & \adjincludegraphics[width=\widt,
		trim={0 {0\height} 0 {0.\height}},clip]{./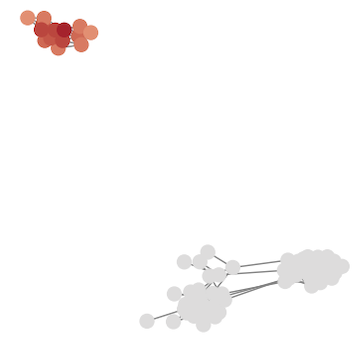} 
    & \adjincludegraphics[width=\widt,
		trim={0 {0\height} 0 {0.\height}},clip]{./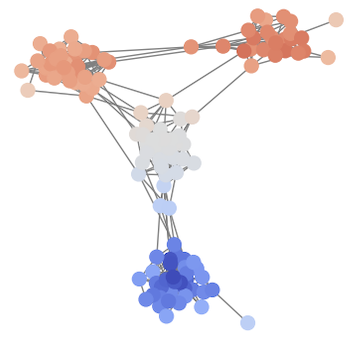} 
    & \adjincludegraphics[width=\widt,
		trim={0 {0\height} 0 {0.\height}},clip]{./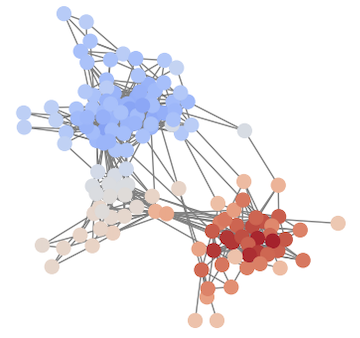} 
  & \adjincludegraphics[width=\widt,
    trim={0 {0\height} 0 {0.\height}},clip]{./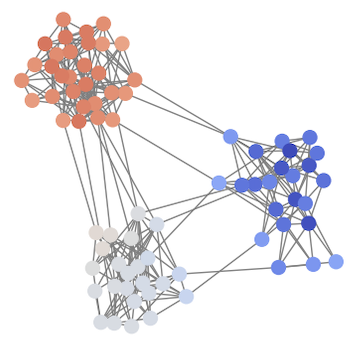} 
  \\ \midrule 
    \begin{turn}{90}  SPECTRE  \end{turn} 
    & \adjincludegraphics[width=\widt,
		trim={0 {0\height} 0 {0.\height}},clip]{./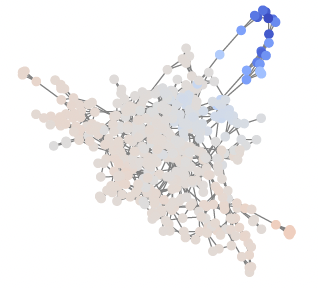} 
    & \adjincludegraphics[width=\widt,
		trim={0 {0\height} 0 {0.\height}},clip]{./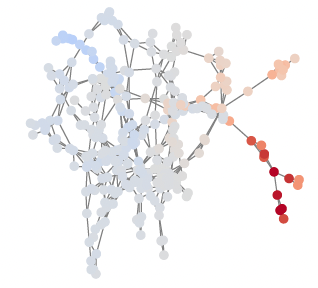} 
    & \adjincludegraphics[width=\widt,
		trim={0 {0\height} 0 {0.\height}},clip]{./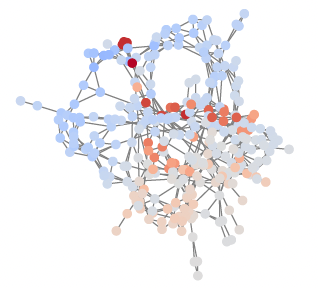} 
  & \adjincludegraphics[width=\widt,
    trim={0 {0\height} 0 {0.\height}},clip]{./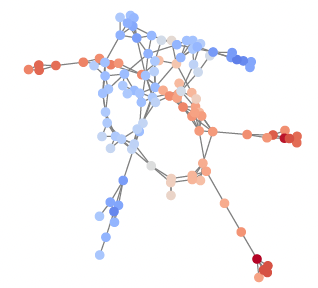} 
    & \adjincludegraphics[width=\widt,
		trim={0 {0\height} 0 {0.\height}},clip]{./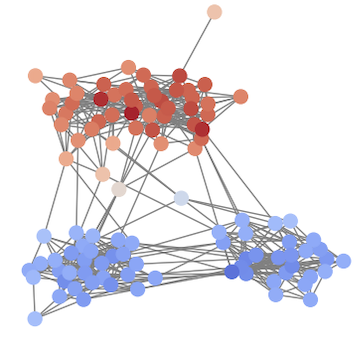} 
    & \adjincludegraphics[width=\widt,
		trim={0 {0\height} 0 {0.\height}},clip]{./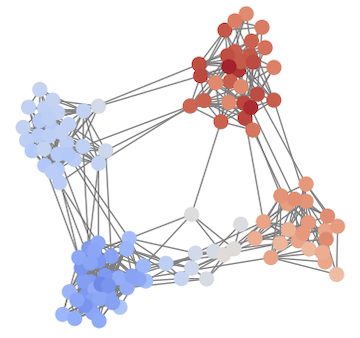} 
    & \adjincludegraphics[width=\widt,
		trim={0 {0\height} 0 {0.\height}},clip]{./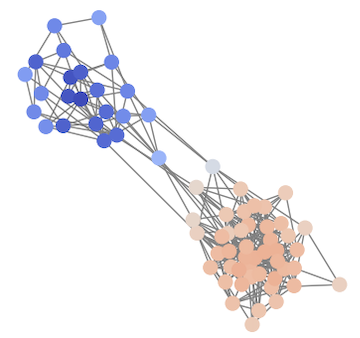} 
  & \adjincludegraphics[width=\widt,
    trim={0 {0\height} 0 {0.\height}},clip]{./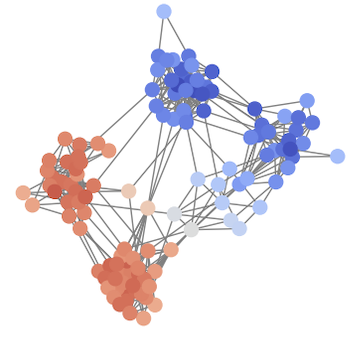} 
  \\ \midrule 
    \begin{turn}{90} ~~~~~~ \HiGeN  \end{turn} 
    & \adjincludegraphics[width=\widt,
		trim={0 {0\height} 0 {0.\height}},clip]{./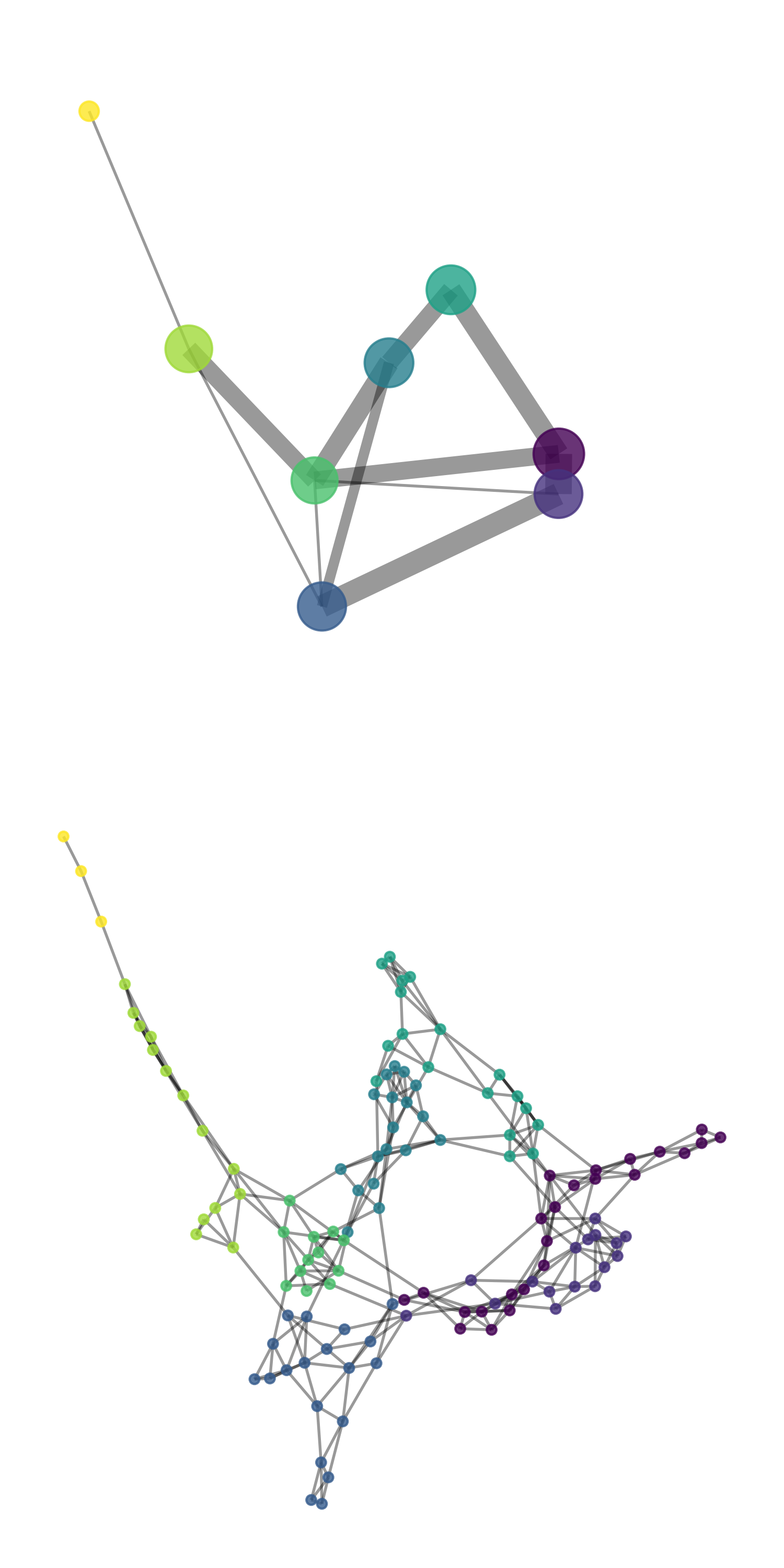} 
    & \adjincludegraphics[width=\widt,
		trim={0 {0\height} 0 {0.\height}},clip]{./FigureTable/DD_higen_0.png} 
    & \adjincludegraphics[width=\widt,
		trim={0 {0\height} 0 {0.\height}},clip]{./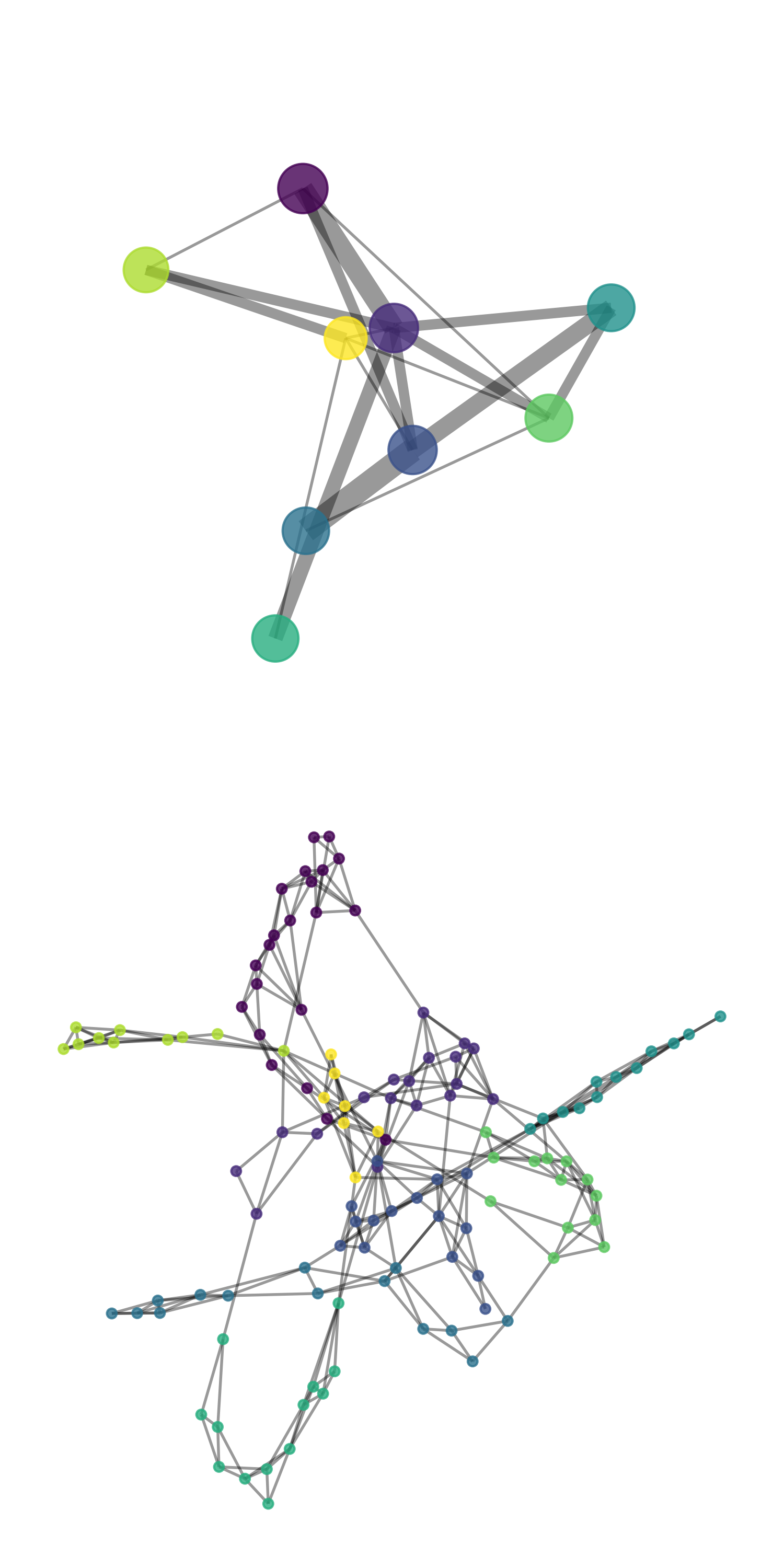}
    & \adjincludegraphics[width=\widt,
		trim={0 {0\height} 0 {0.\height}},clip]{./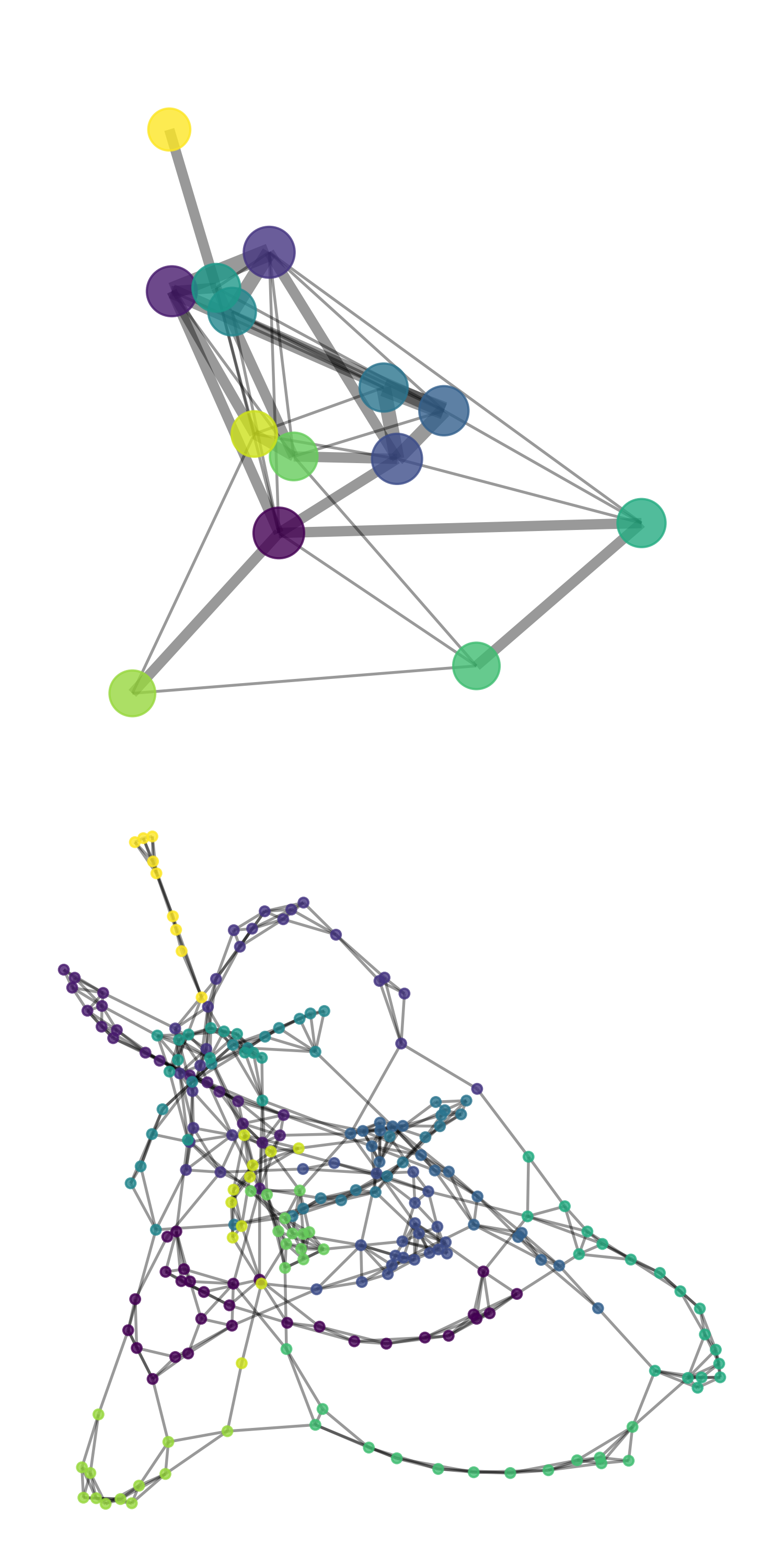}
    & \adjincludegraphics[width=\widt,
		trim={0 {0\height} 0 {0.\height}},clip]{./FigureTable/SBM_higen_1.png} 
    & \adjincludegraphics[width=\widt,
		trim={0 {0\height} 0 {0.\height}},clip]{./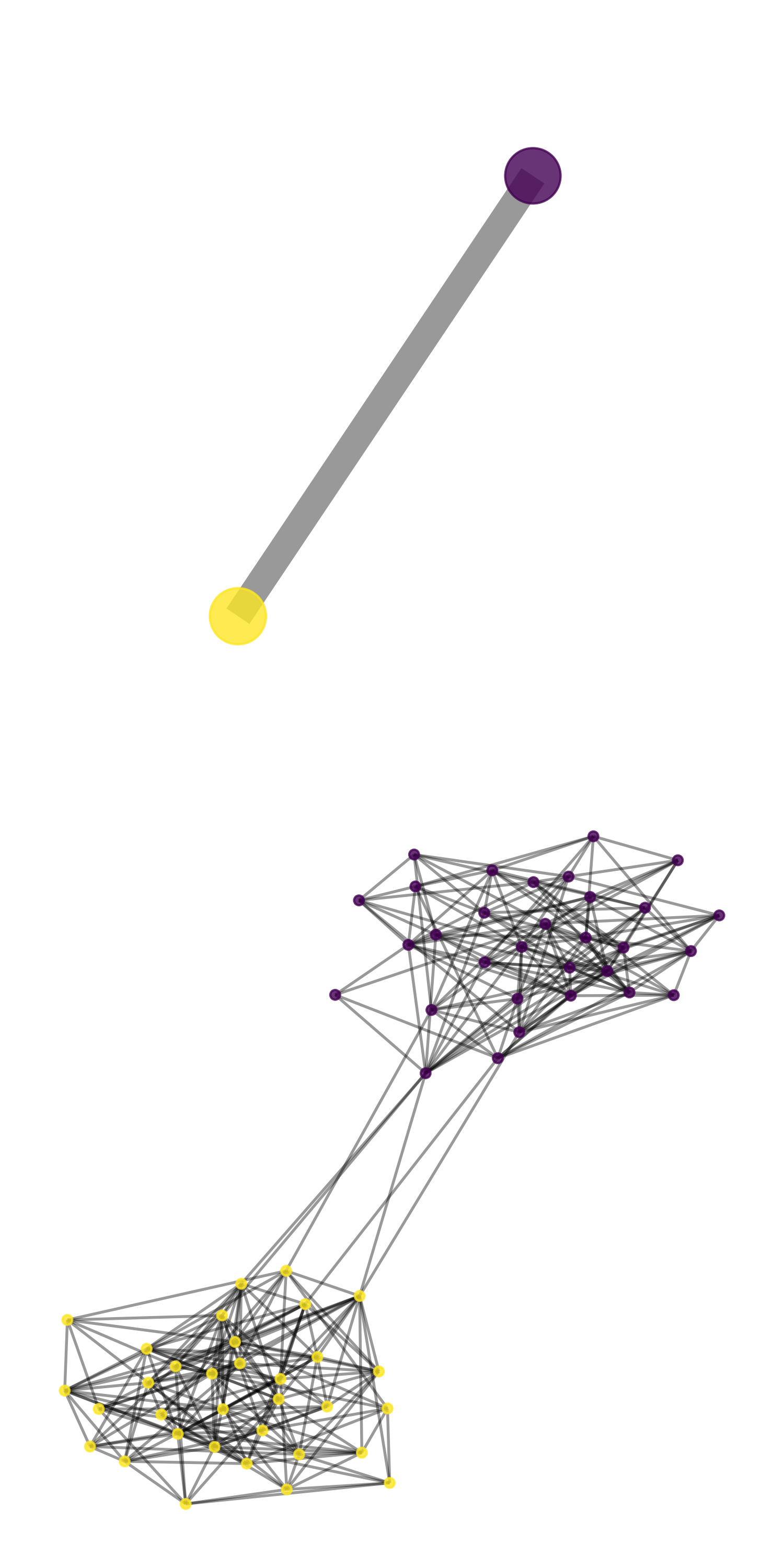} 
    & \adjincludegraphics[width=\widt,
		trim={0 {0\height} 0 {0.\height}},clip]{./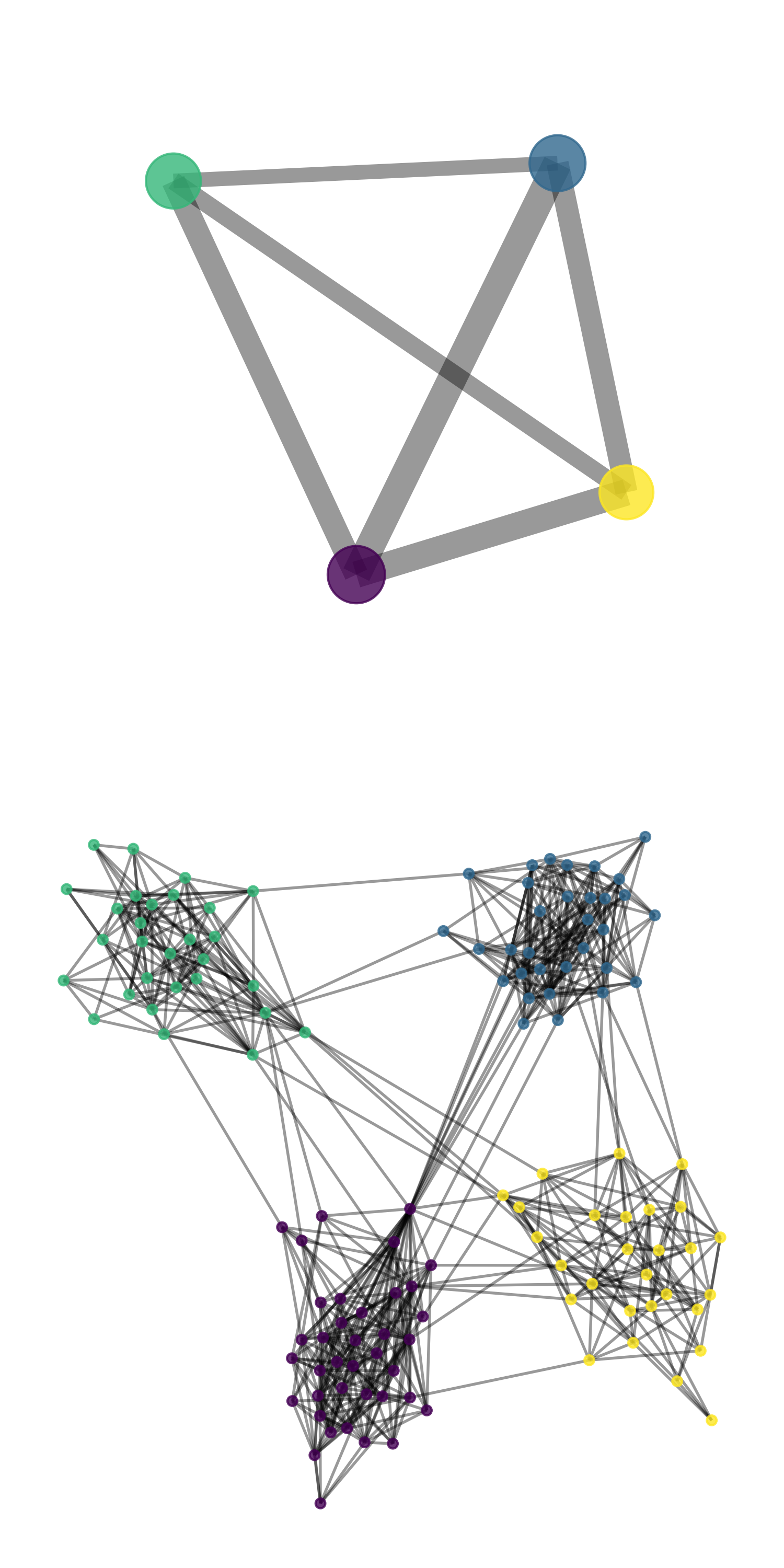}
    & \adjincludegraphics[width=\widt,
		trim={0 {0\height} 0 {0.\height}},clip]{./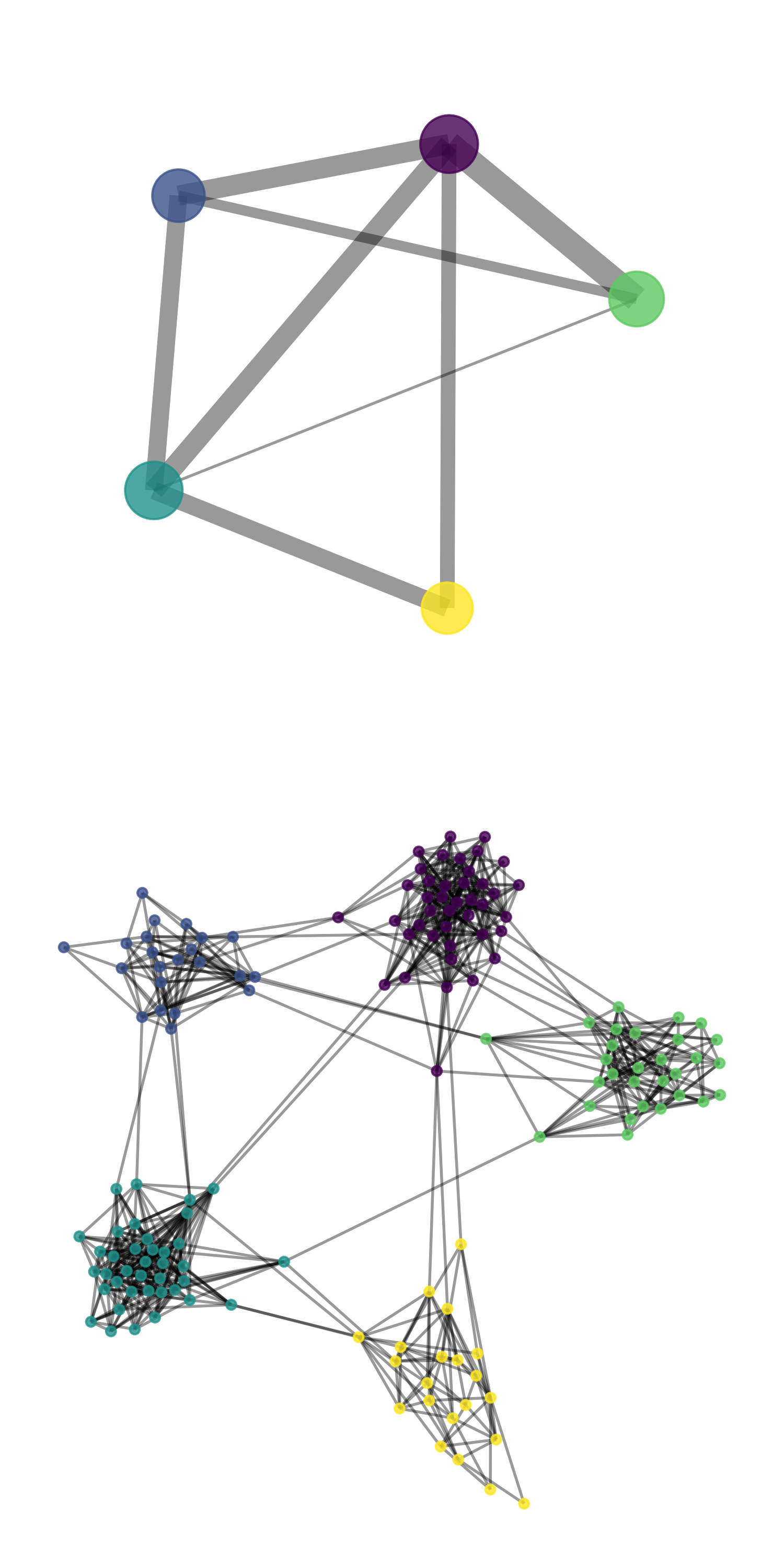}
\end{tabular}
\end{minipage}
    \captionof{figure}{\captionsize Samples from \HiGeN trained on \emph{Protein} and \emph{SBM}.
    Communities are distinguished with different colors and both levels are depicted. 
    The samples for GRAN and SPECRE are obtained from \citep{martinkus2022spectre}.
    \label{fig:sample_SBM_protein}}
\end{figure}

\begin{figure}[ht] 
\renewcommand{\widt}{.12\textwidth}
\centering
\begin{minipage}{1.\linewidth}
\def\arraystretch{1} \tabcolsep=0pt
\begin{tabular}{l cccc | cccc}
    & \multicolumn{4}{c}{\emph{Ego}} & \multicolumn{4}{c}{\emph{Enzyme}}\\
    \begin{turn}{90} ~~~~~~ Train  \end{turn} 
    & \adjincludegraphics[width=\widt,
		trim={0 {0\height} 0 {0.\height}},clip]{./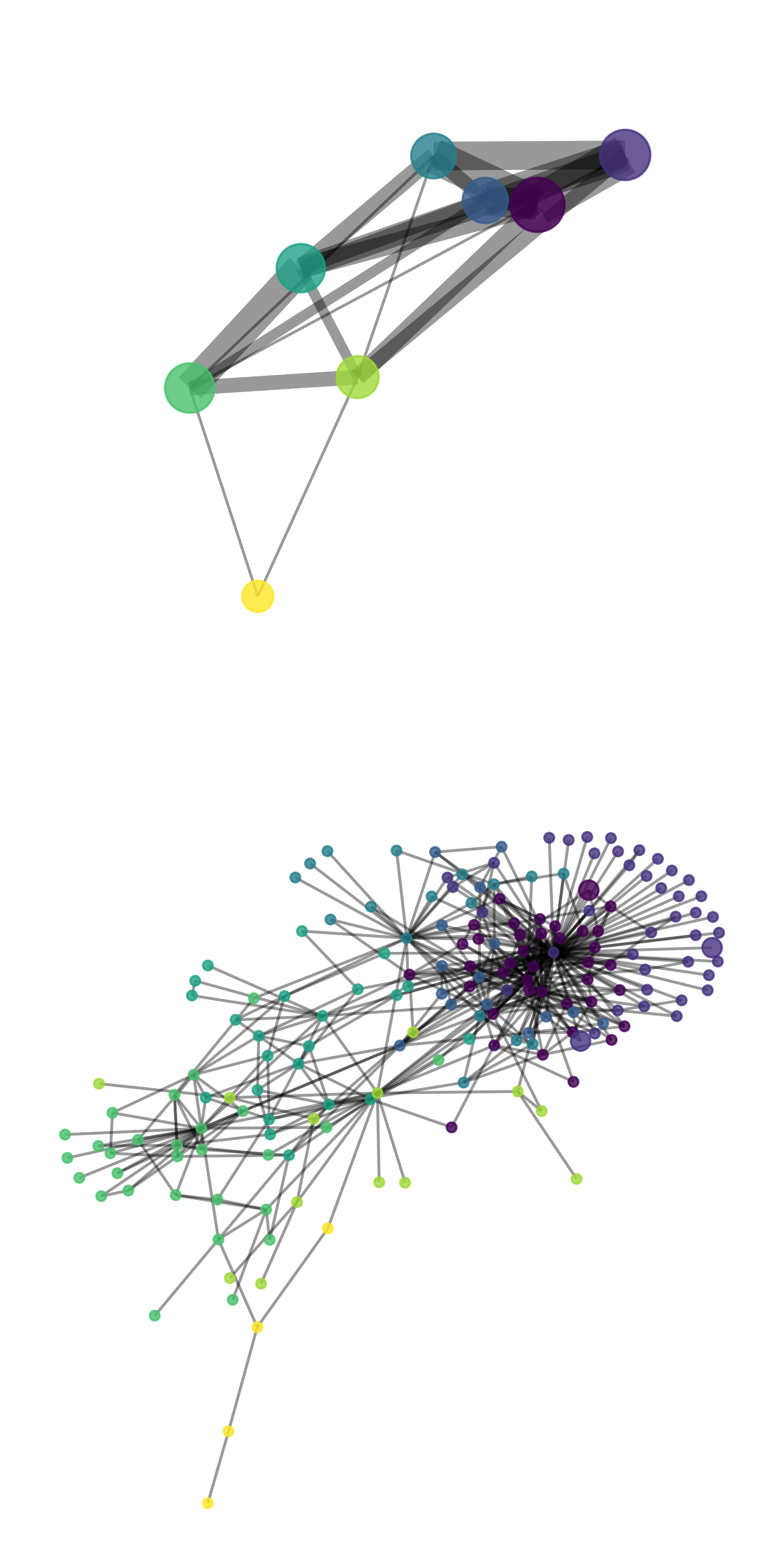} 
    & \adjincludegraphics[width=\widt,
		trim={0 {0\height} 0 {0.\height}},clip]{./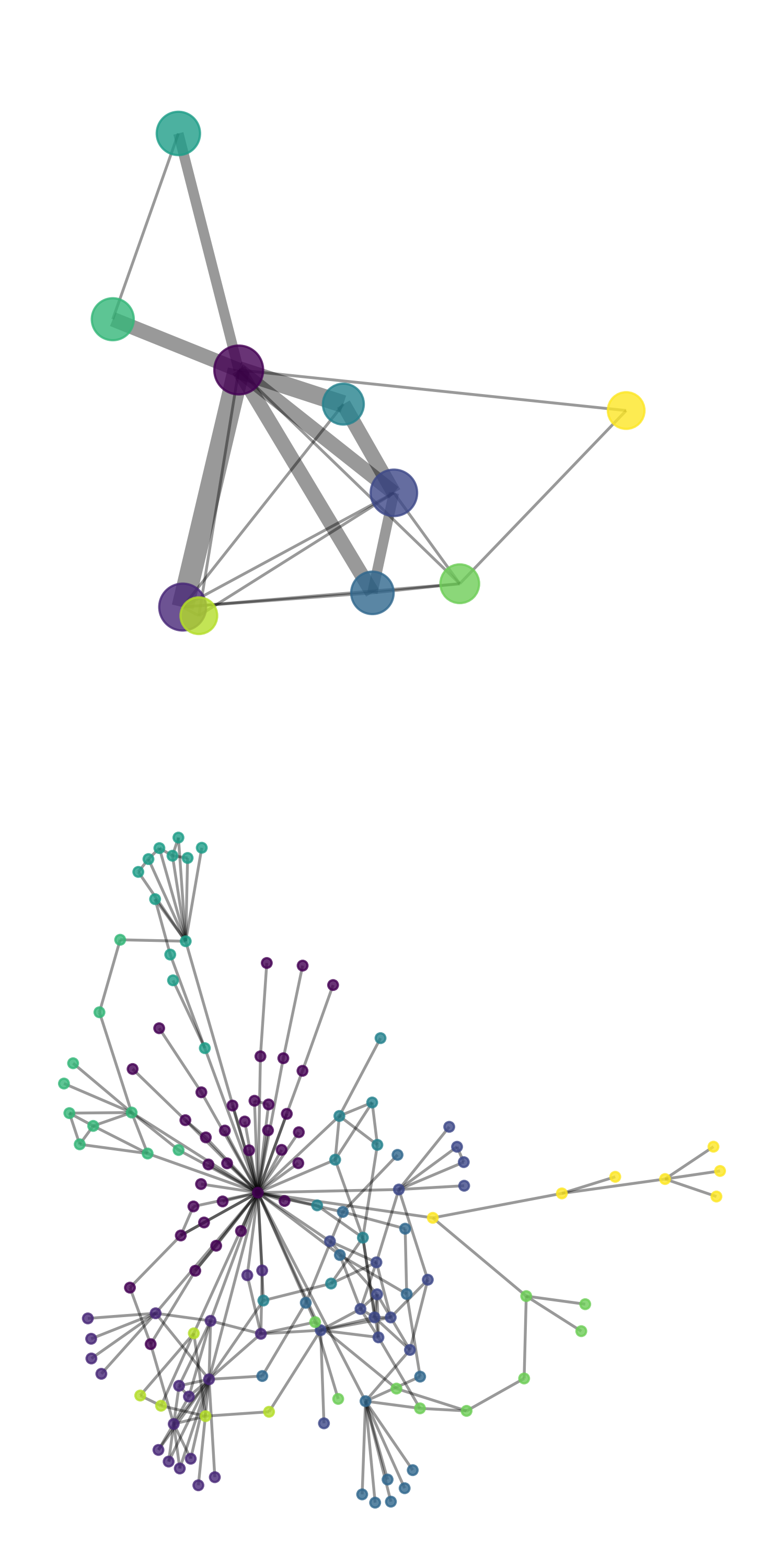} 
    & \adjincludegraphics[width=\widt,
		trim={0 {0\height} 0 {0.\height}},clip]{./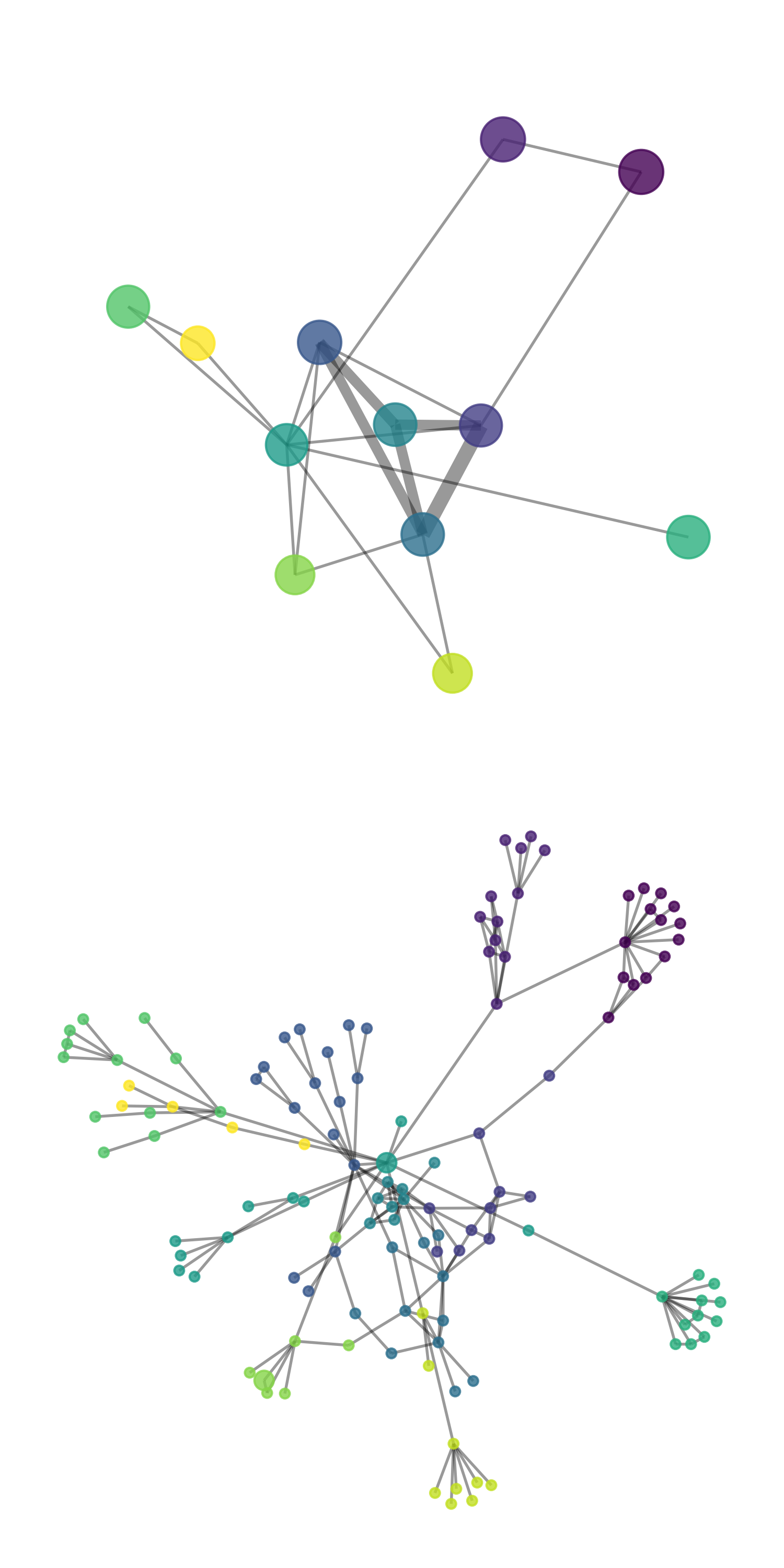} 
    & \adjincludegraphics[width=\widt,
		trim={0 {0\height} 0 {0.\height}},clip]{./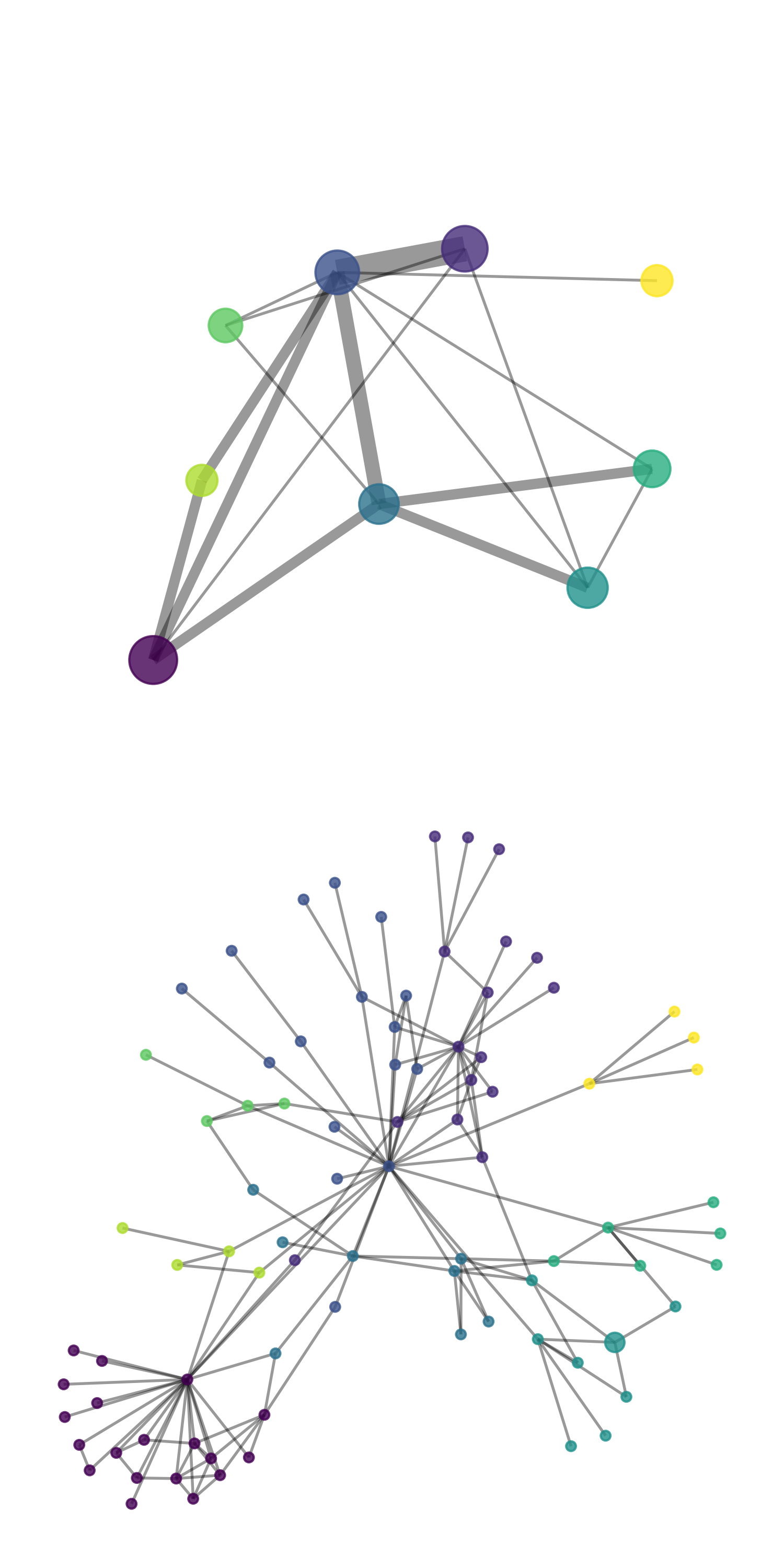}
      & \adjincludegraphics[width=\widt,
		trim={0 {0\height} 0 {0.\height}},clip]{./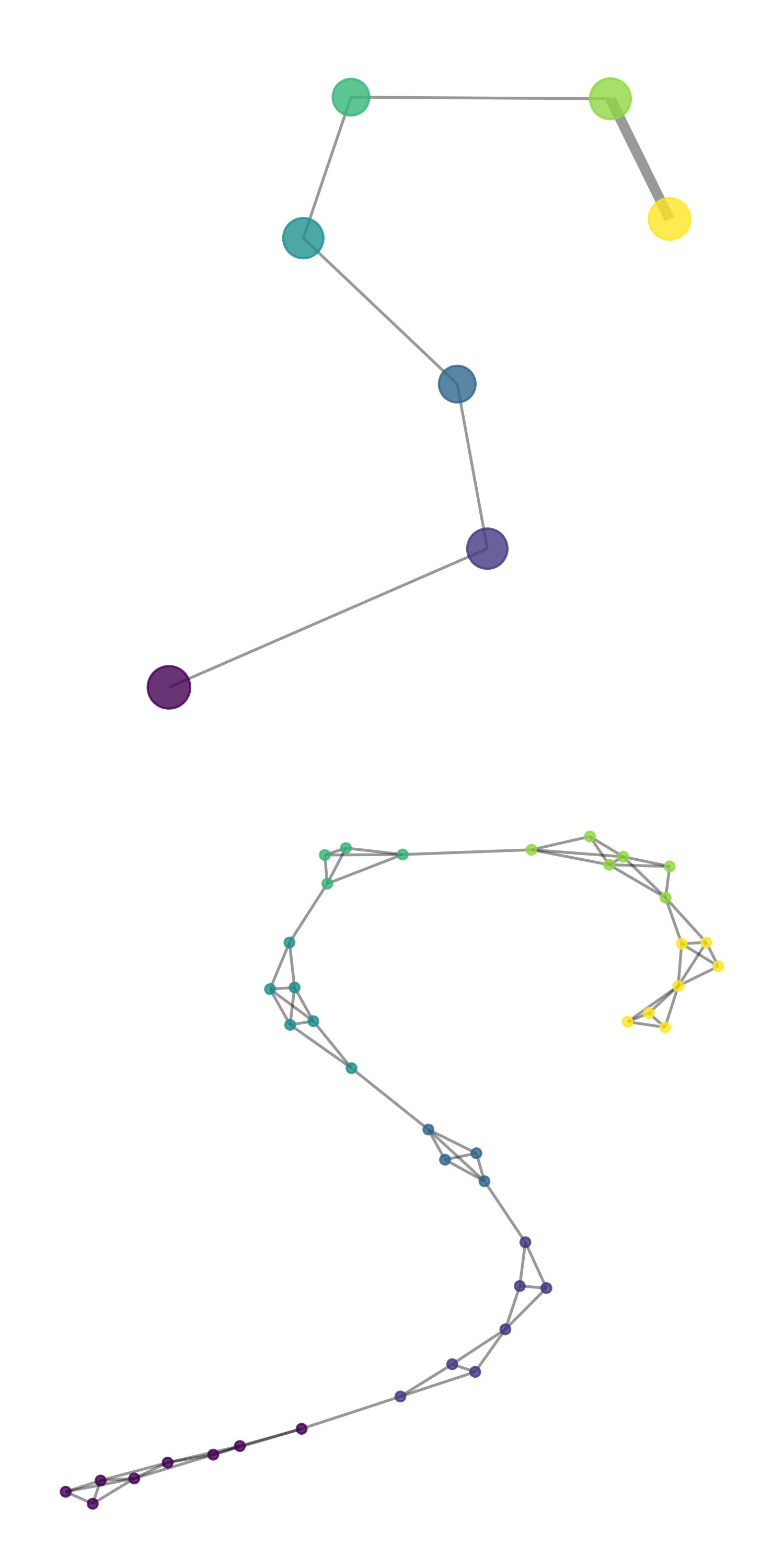} 
    & \adjincludegraphics[width=\widt,
		trim={0 {0\height} 0 {0.\height}},clip]{./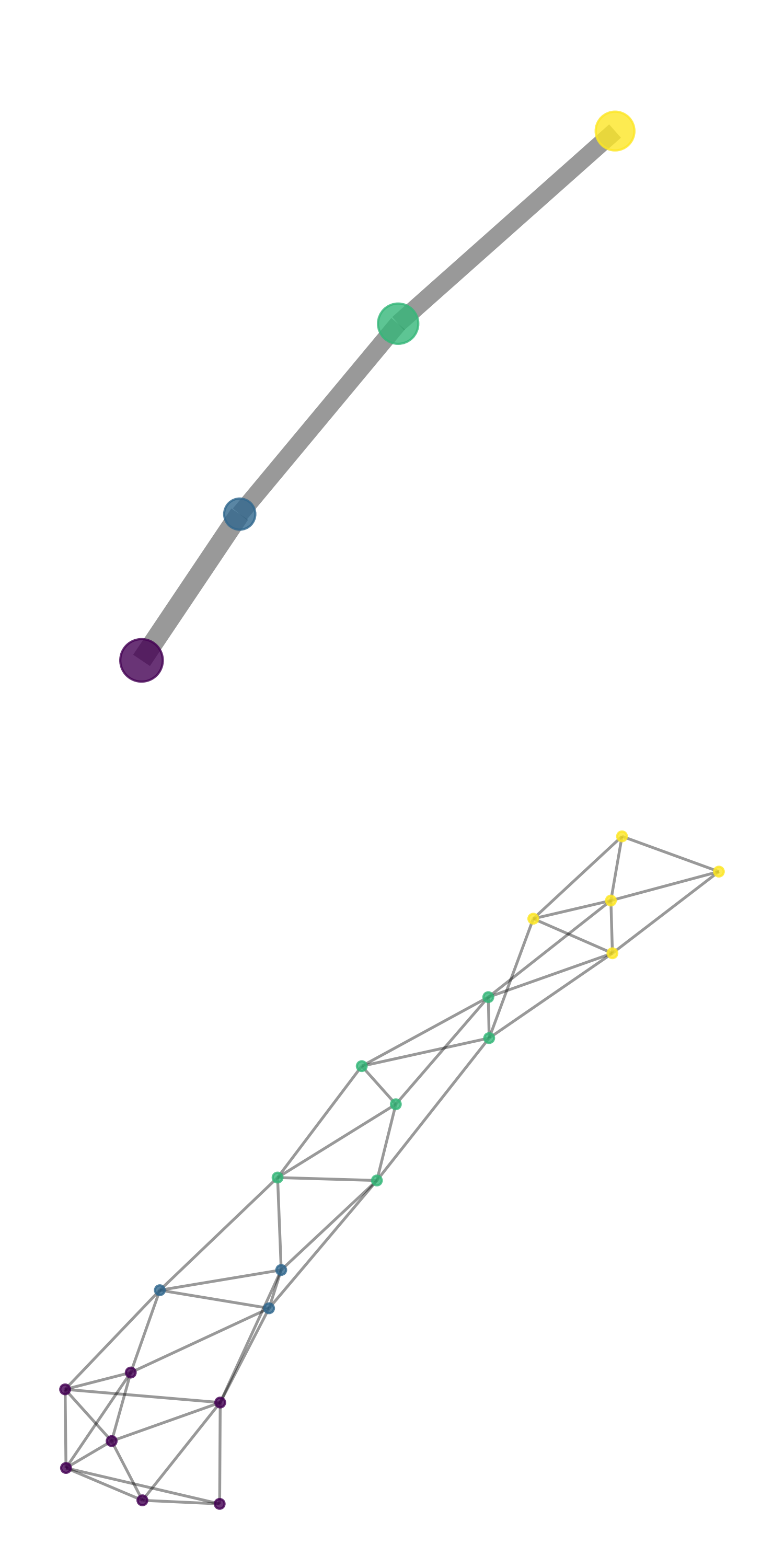} 
    & \adjincludegraphics[width=\widt,
		trim={0 {0\height} 0 {0.\height}},clip]{./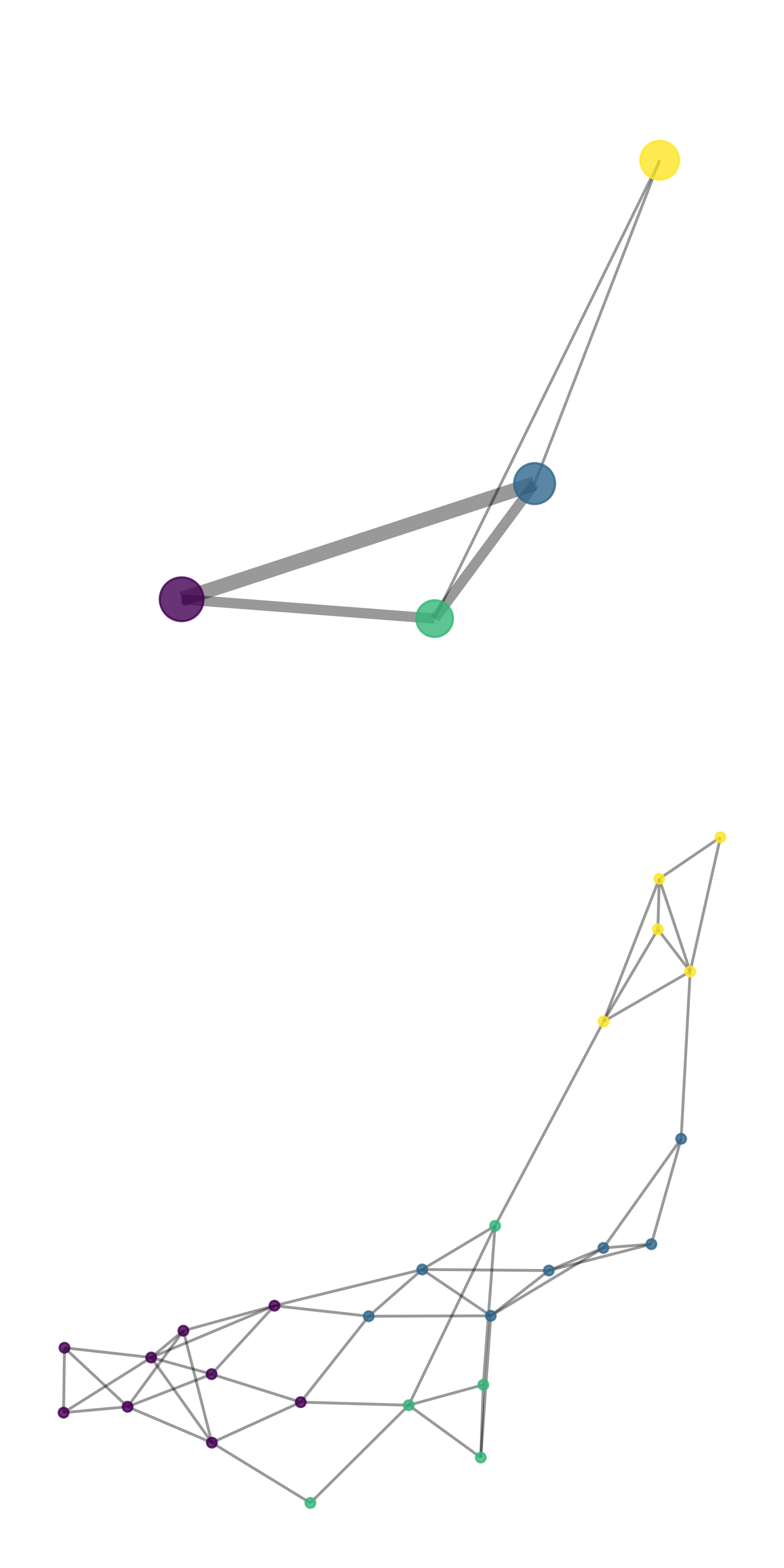} 
    & \adjincludegraphics[width=\widt,
		trim={0 {0\height} 0 {0.\height}},clip]{./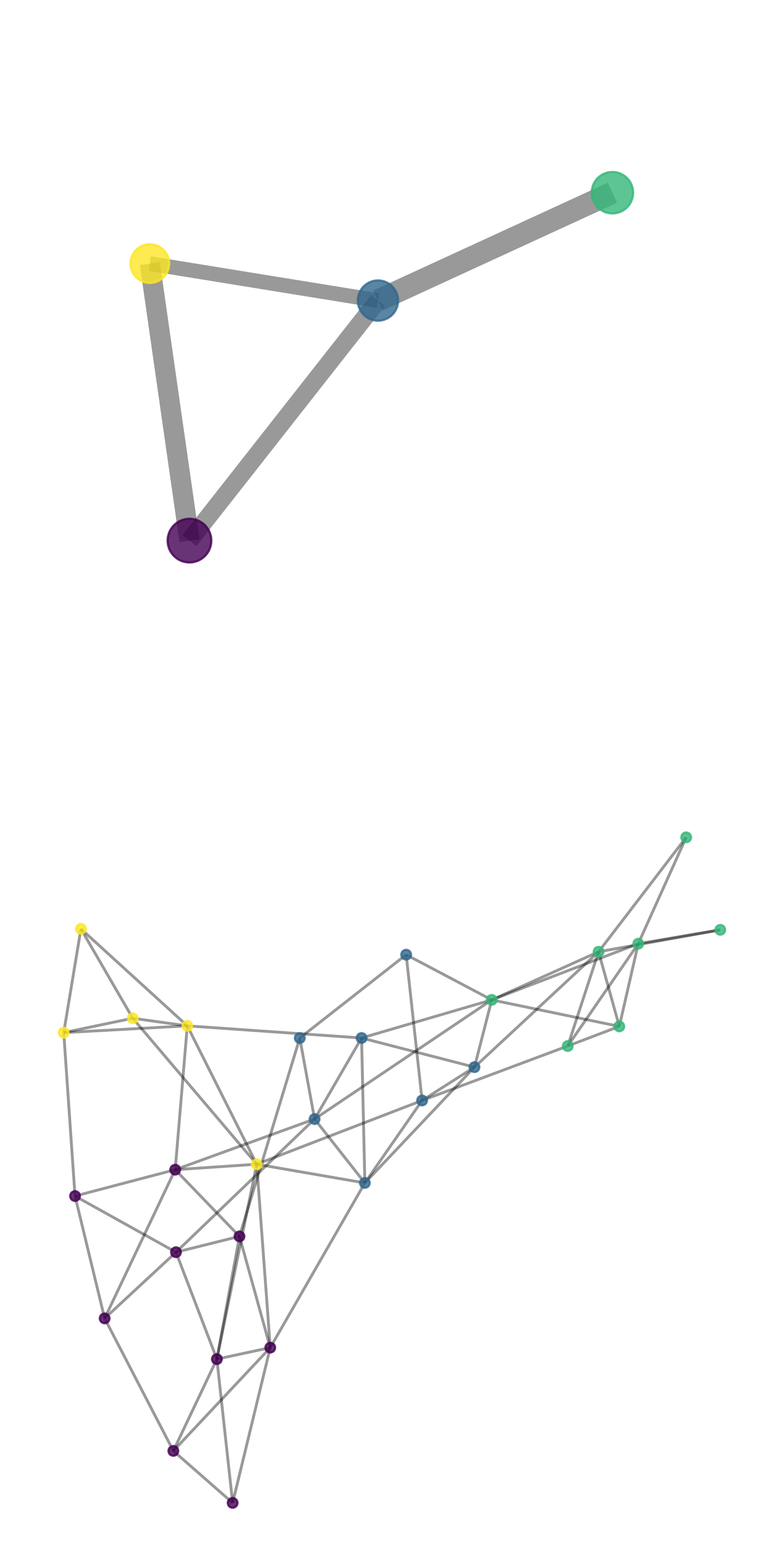}\\
  \midrule 
    \begin{turn}{90}  GDSS  \end{turn} 
    &  
    &  
    & 
    &  
    & \adjincludegraphics[width=\widt,
		trim={0 {0\height} 0 {0.\height}},clip]{./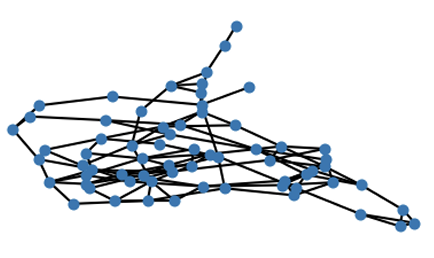} 
    & \adjincludegraphics[width=\widt,
		trim={0 {0\height} 0 {0.\height}},clip]{./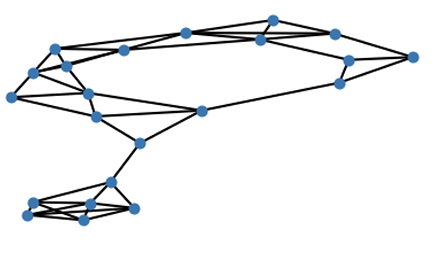} 
    & \adjincludegraphics[width=\widt,
		trim={0 {0\height} 0 {0.\height}},clip]{./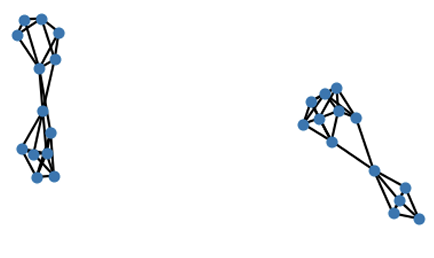} 
    & \adjincludegraphics[width=\widt,
    trim={0 {0\height} 0 {0.\height}},clip]{./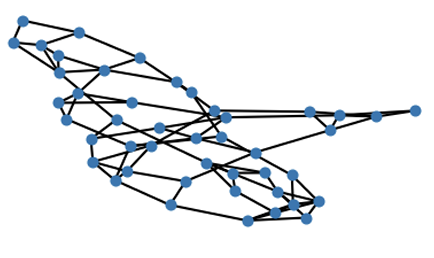} 
    \\ \midrule 
        \begin{turn}{90}  GRAN  \end{turn} 
    & \adjincludegraphics[width=\widt,
		trim={0 {0.05\height} 0 {0.05\height}},clip]{./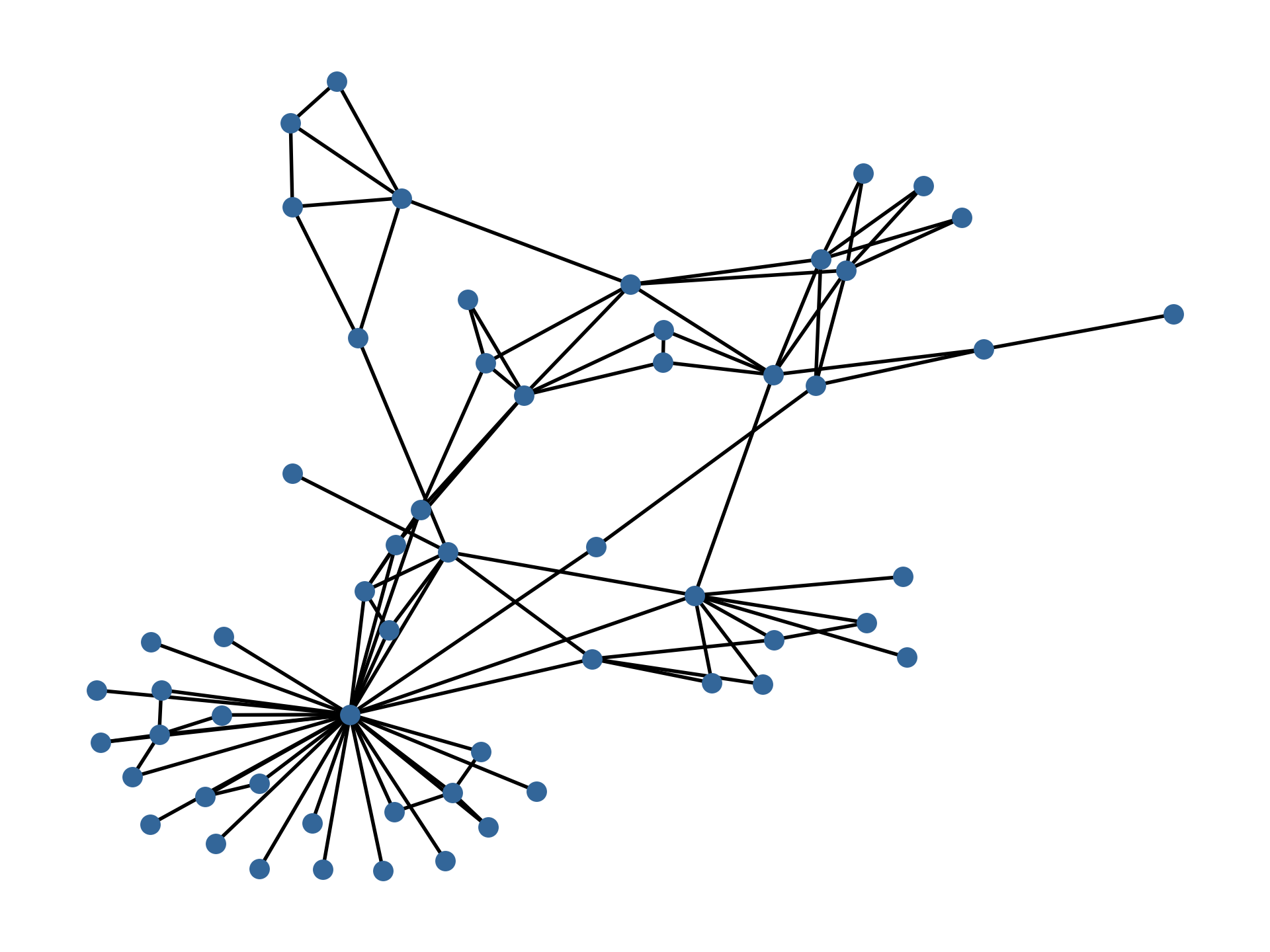} 
    & \adjincludegraphics[width=\widt,
		trim={0 {0.05\height} 0 {0.05\height}},clip]{./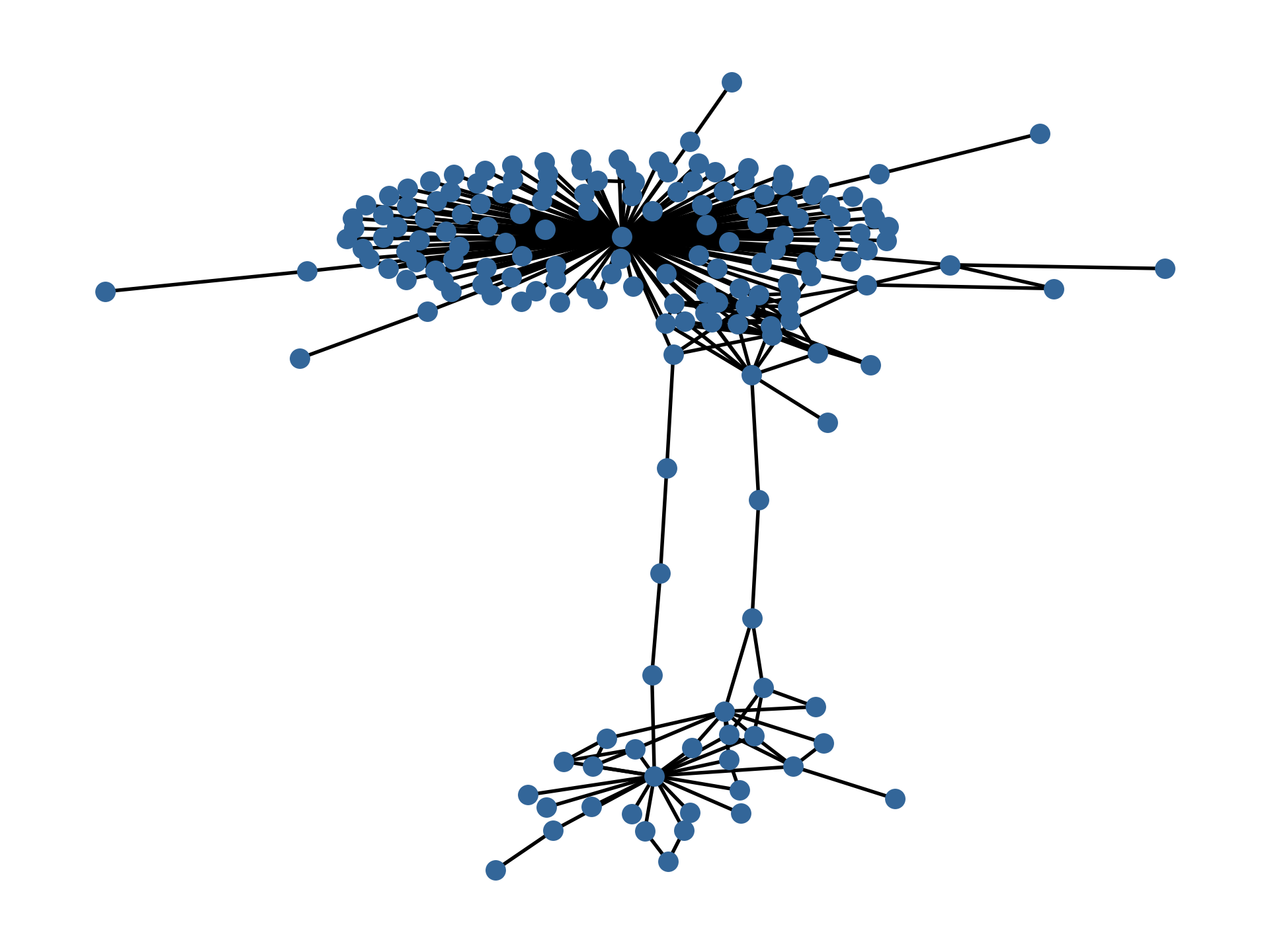} 
    & \adjincludegraphics[width=\widt,
		trim={0 {0.05\height} 0 {0.05\height}},clip]{./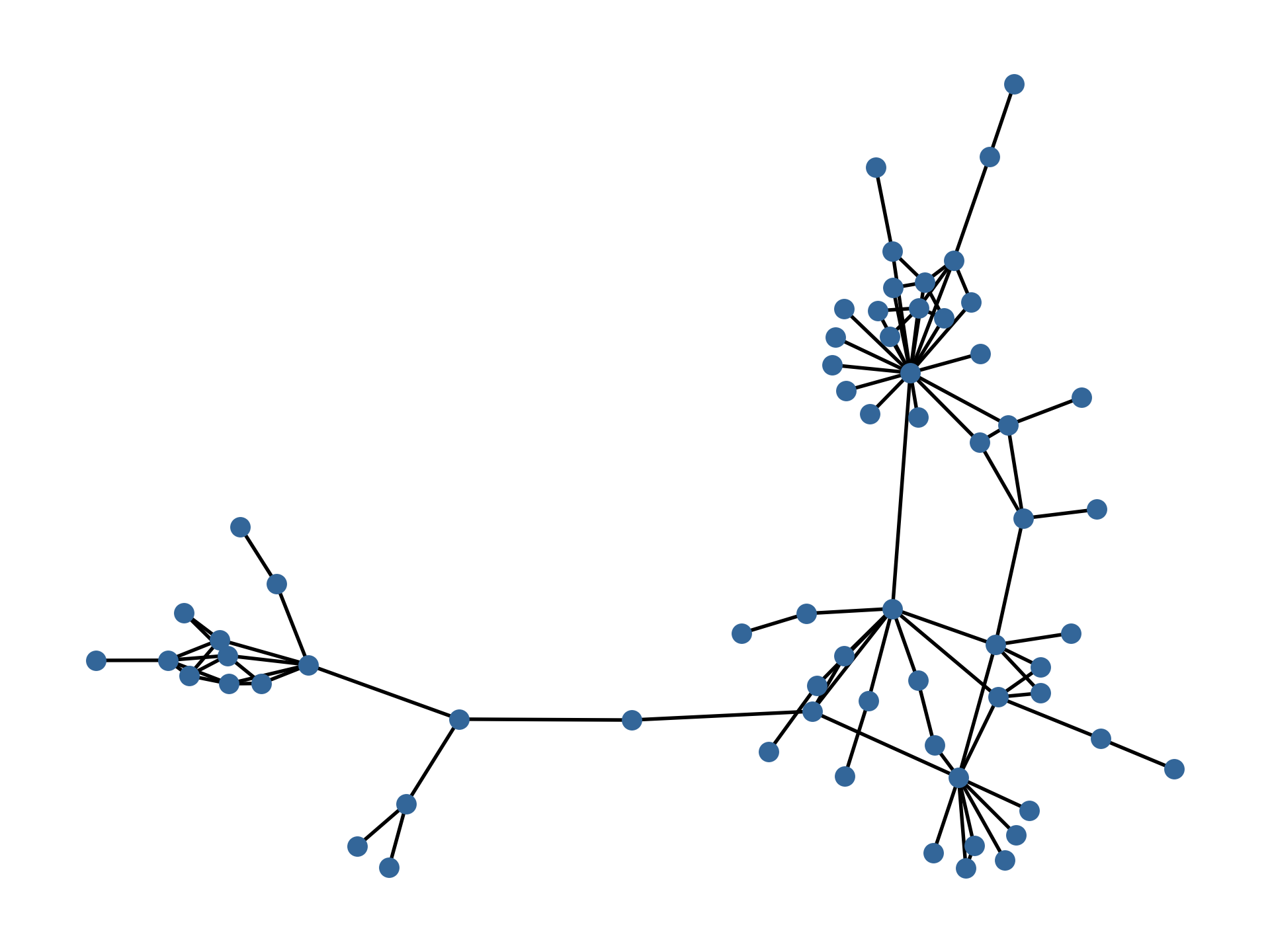} 
    & \adjincludegraphics[width=\widt,
            trim={0 {0.05\height} 0 {0.05\height}},clip]{./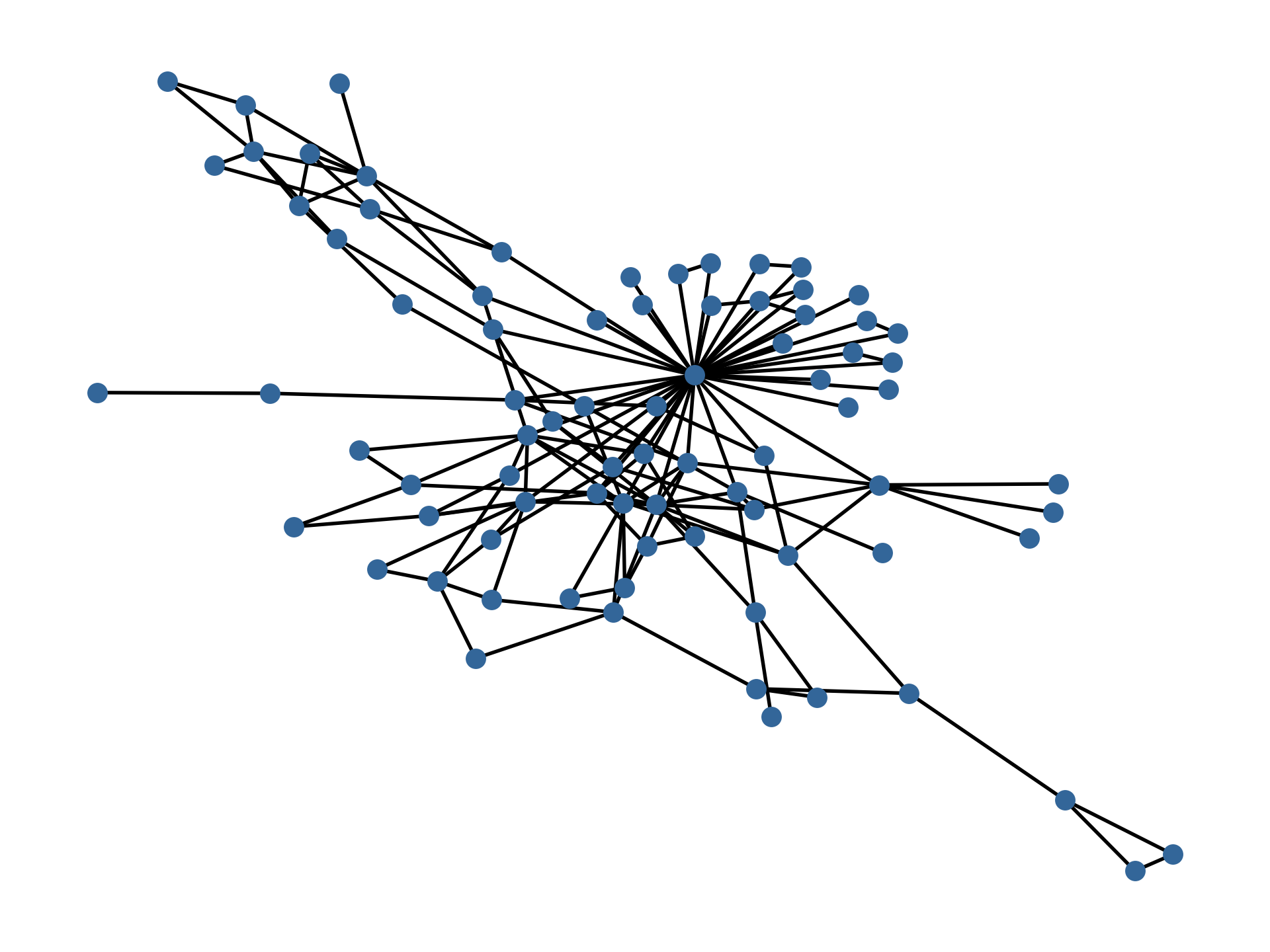}   
    & \adjincludegraphics[width=\widt,
		trim={0 {0.05\height} 0 {0.05\height}},clip]{./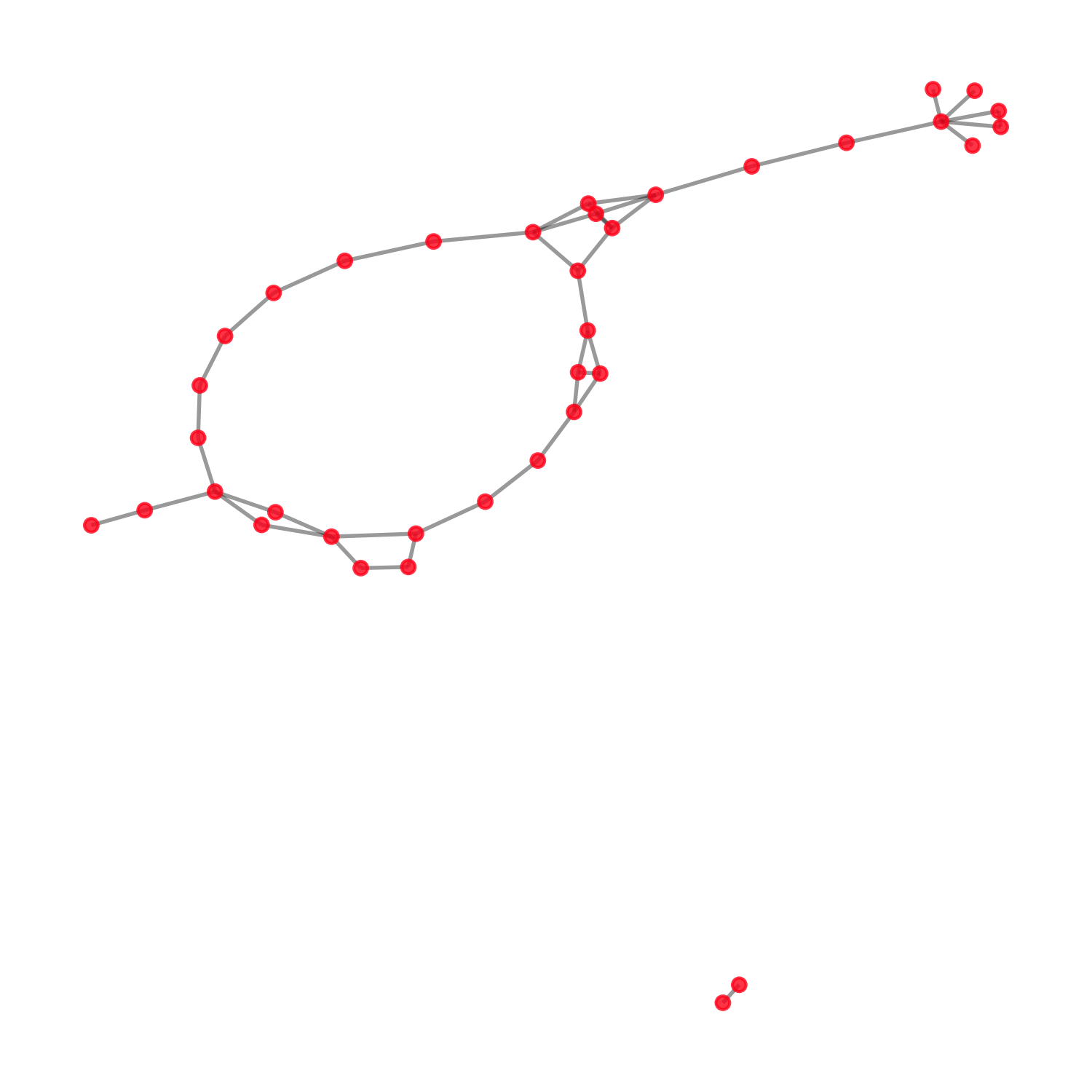} 
    & \adjincludegraphics[width=\widt,
		trim={0 {0.05\height} 0 {0.05\height}},clip]{./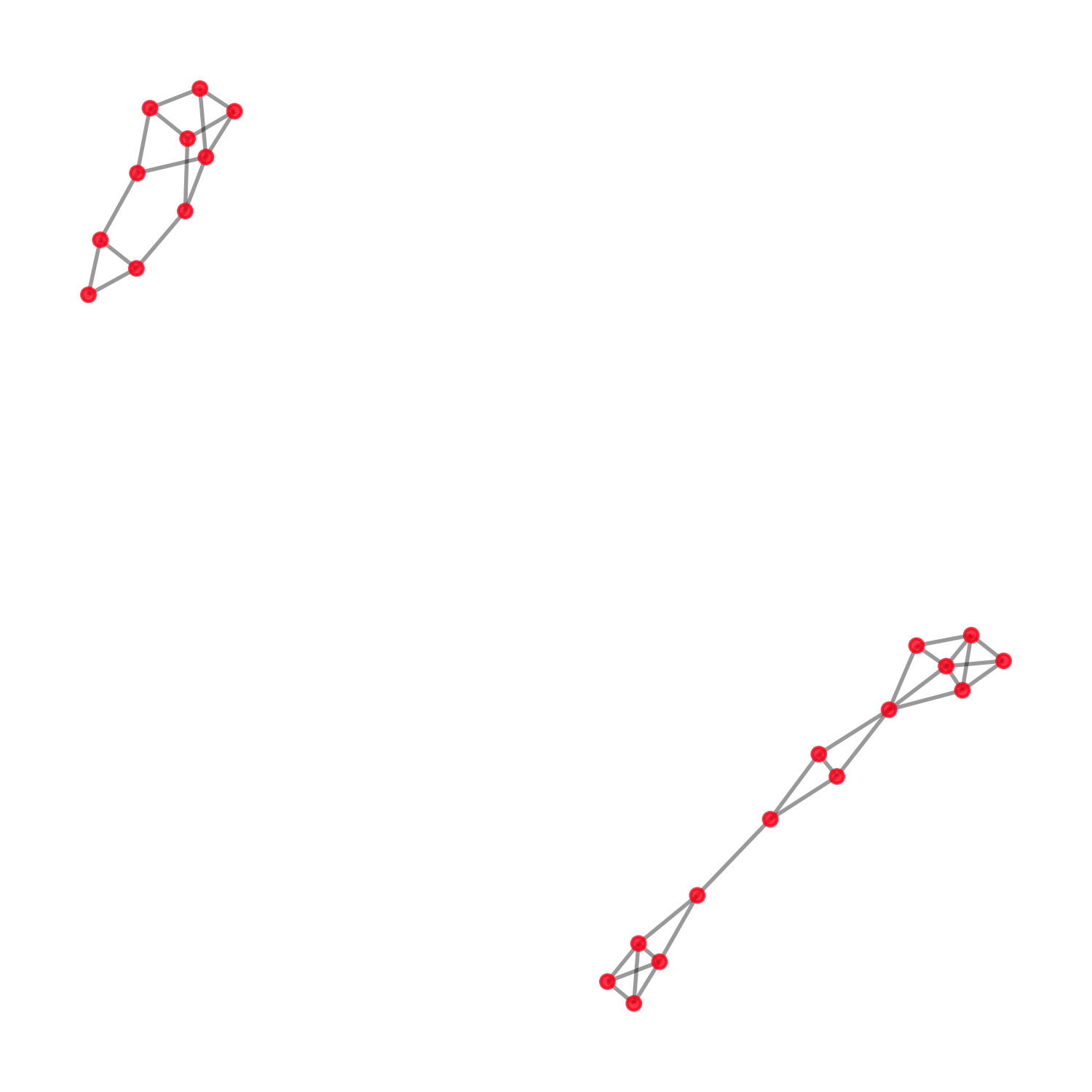} 
    & \adjincludegraphics[width=\widt,
		trim={0 {0.05\height} 0 {0.05\height}},clip]{./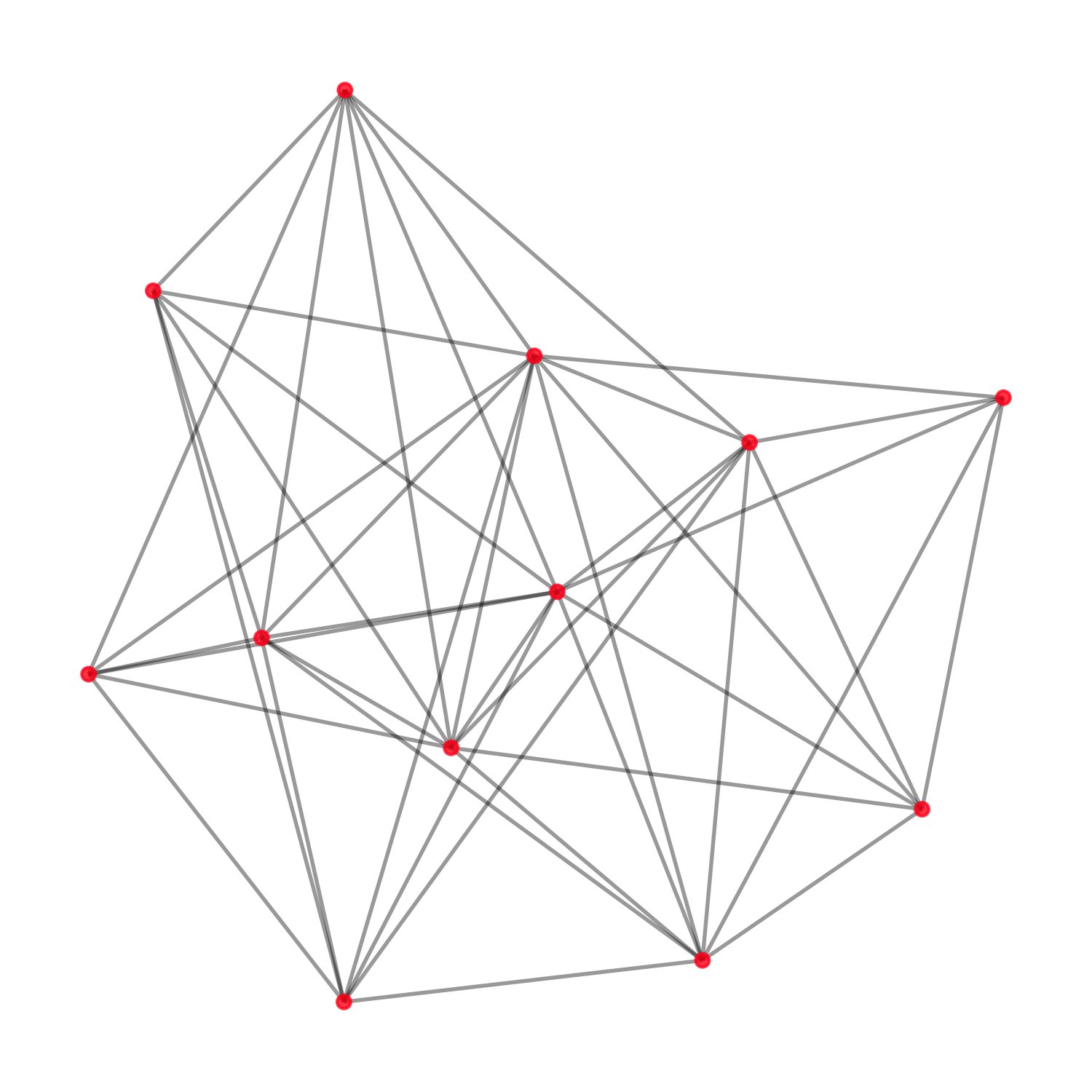} 
    & \adjincludegraphics[width=\widt,
    trim={0 {0.05\height} 0 {0.05\height}},clip]{./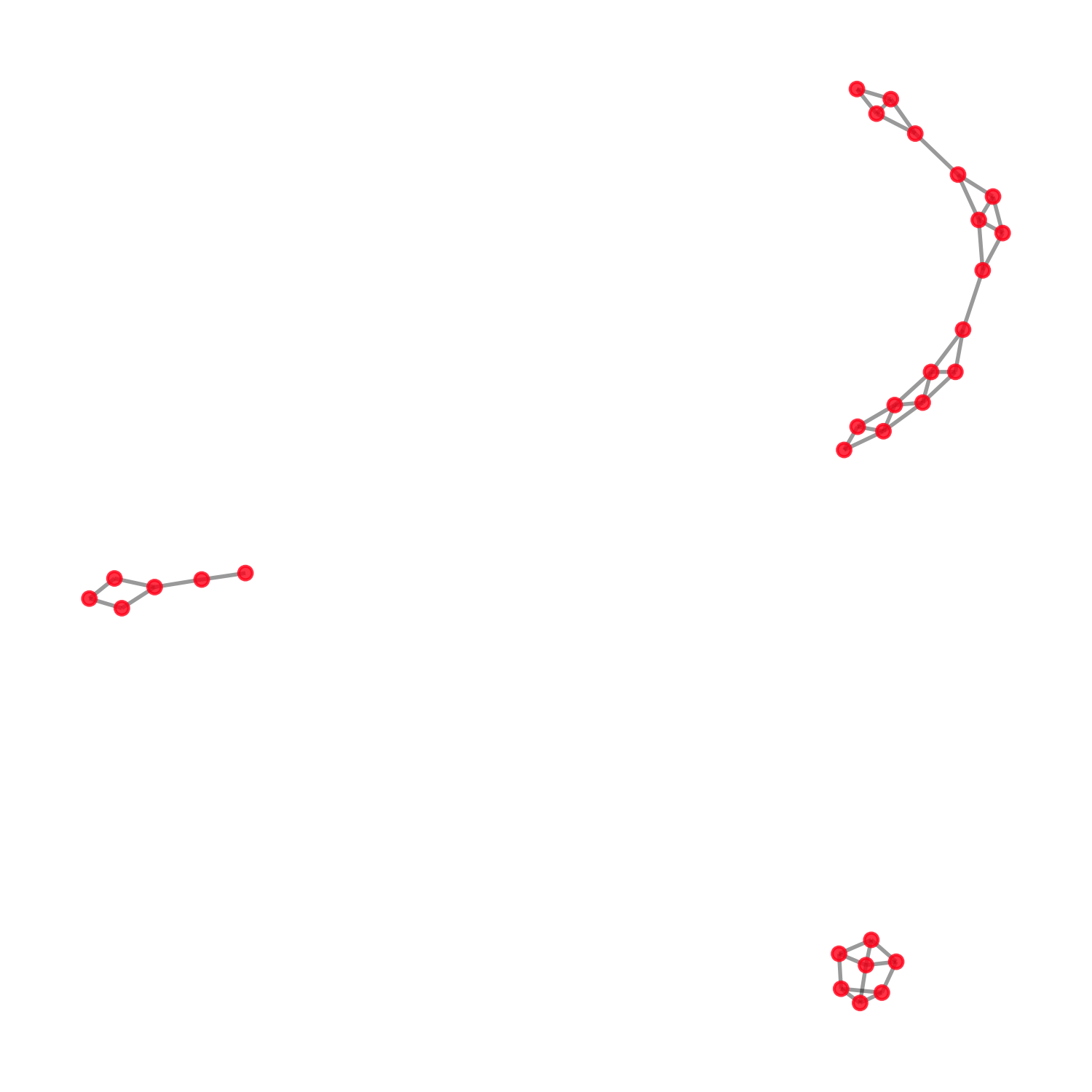} 
    \\ \midrule 
    \begin{turn}{90} ~~~~~~ \HiGeN  \end{turn} 
    & \adjincludegraphics[width=\widt,
		trim={0 {0\height} 0 {0.\height}},clip]{./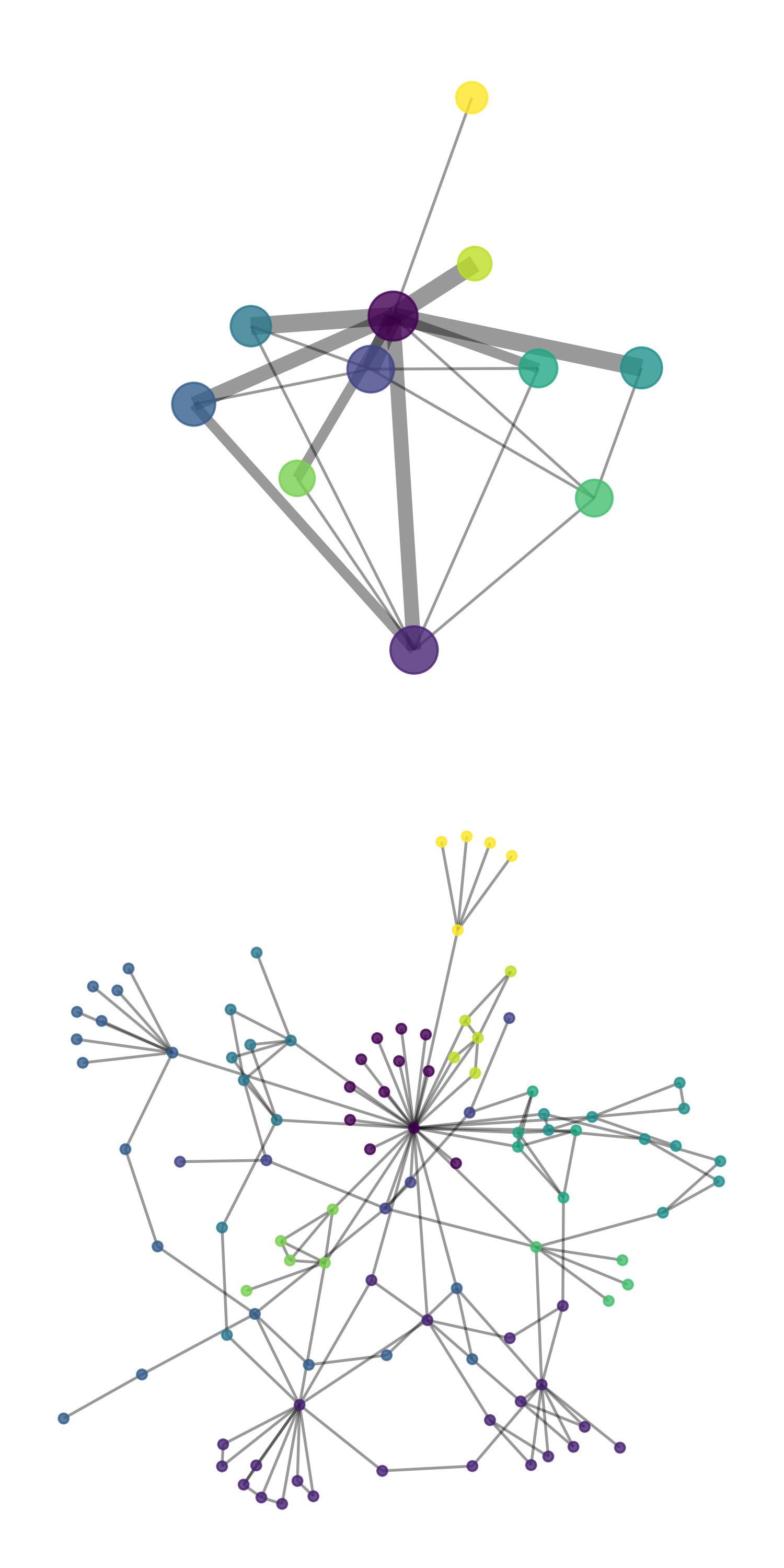} 
    & \adjincludegraphics[width=\widt,
		trim={0 {0\height} 0 {0.\height}},clip]{./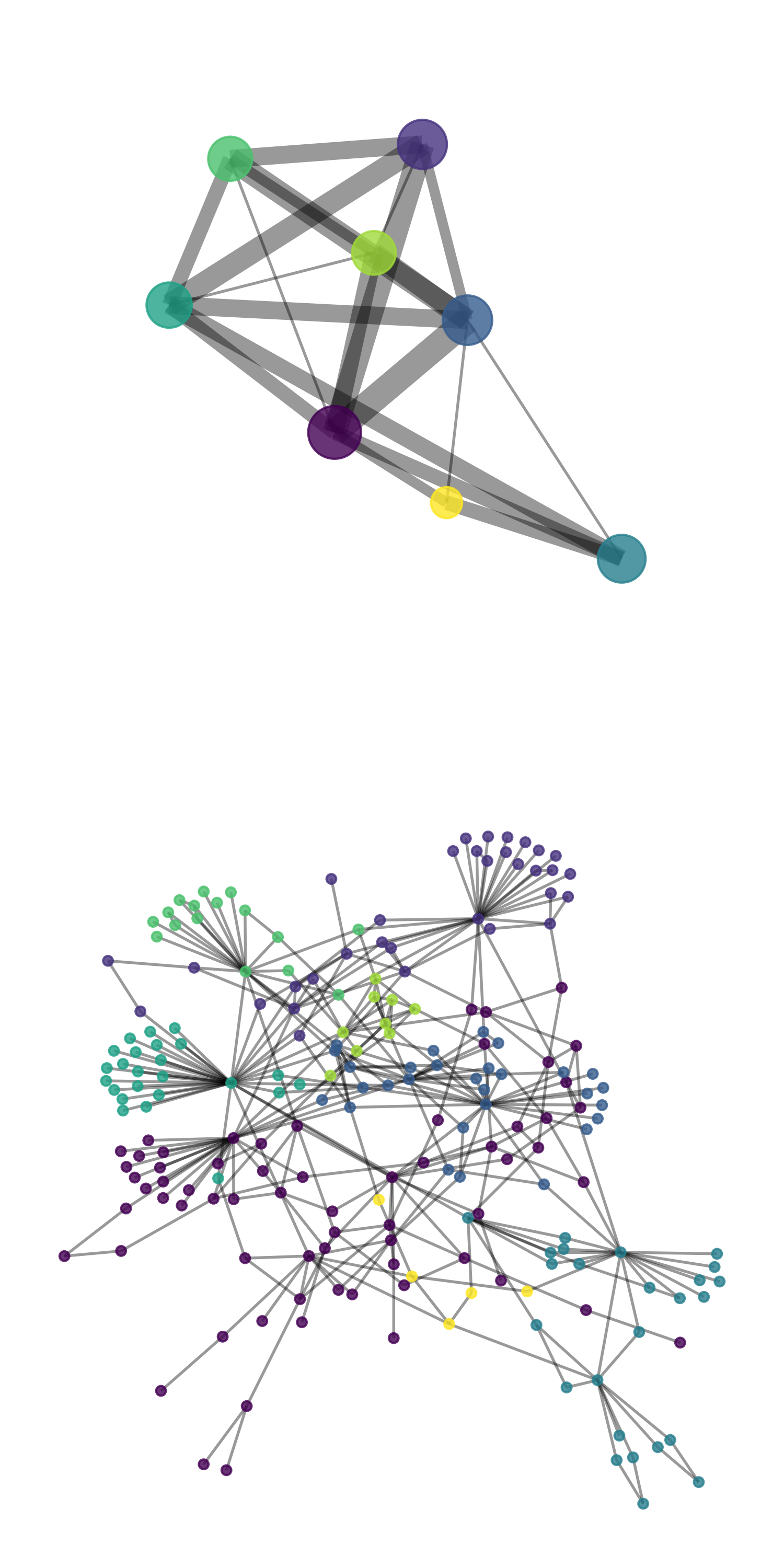} 
    & \adjincludegraphics[width=\widt,
		trim={0 {0\height} 0 {0.\height}},clip]{./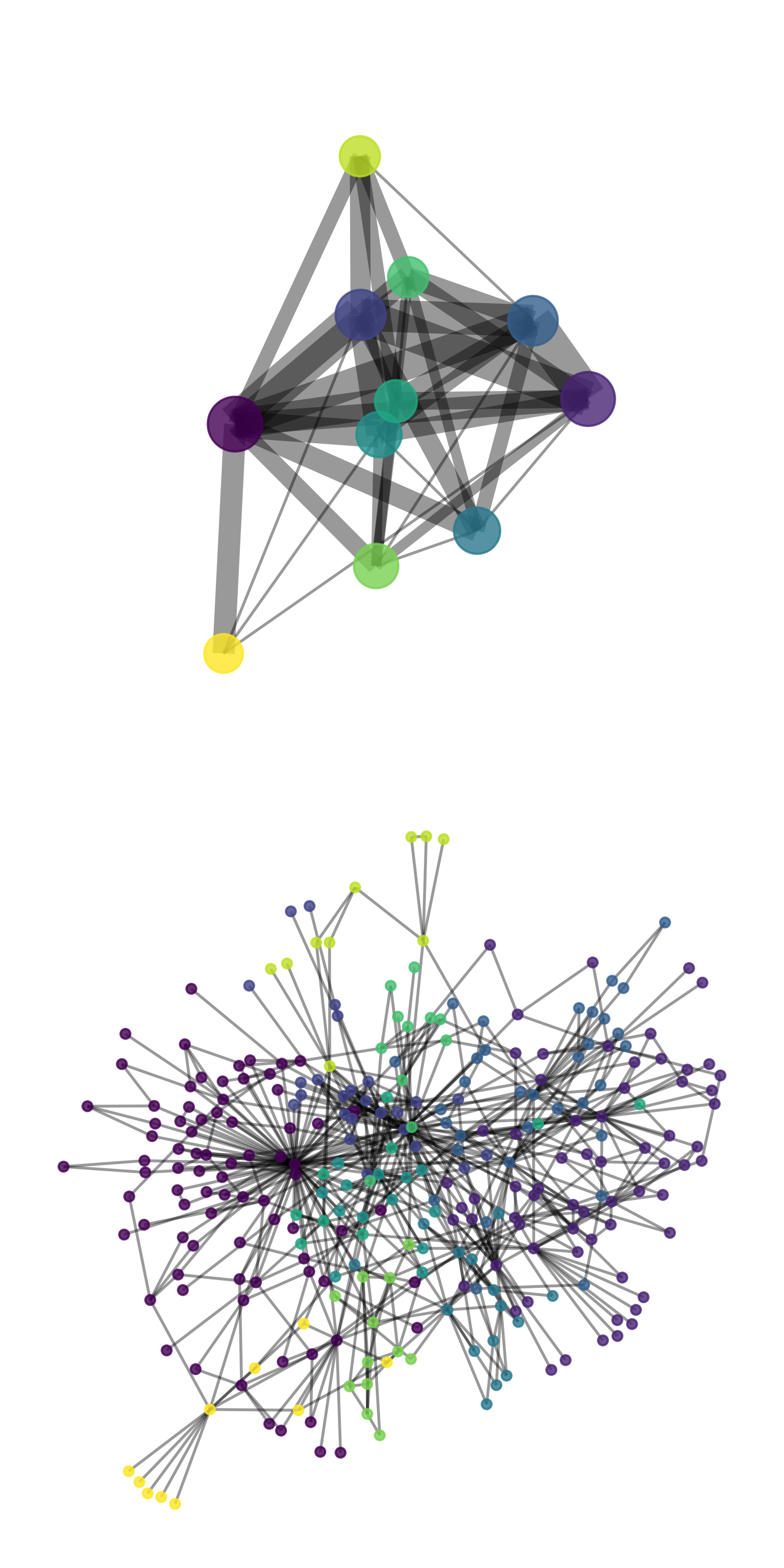}
    & \adjincludegraphics[width=\widt,
		trim={0 {0\height} 0 {0.\height}},clip]{./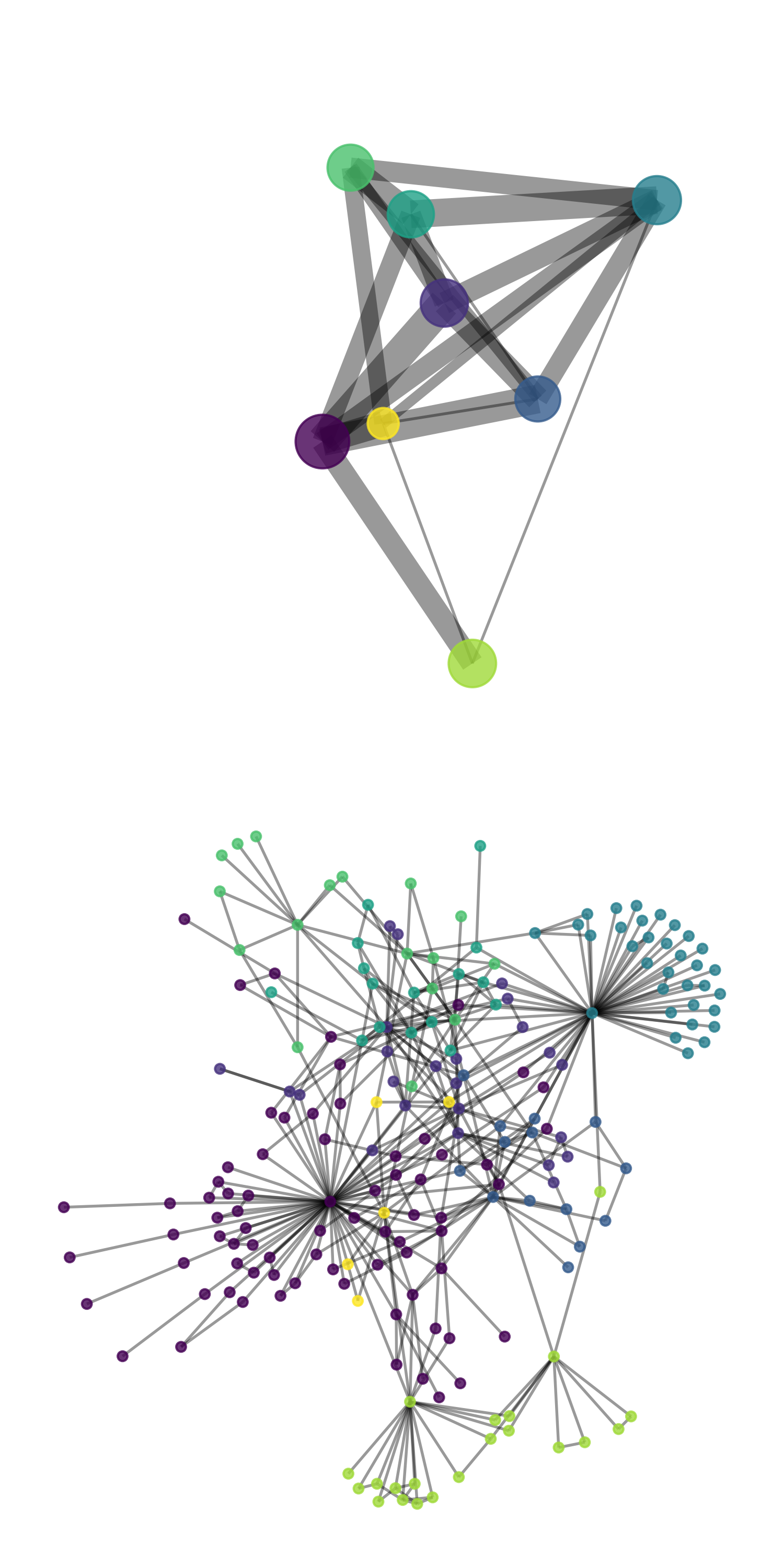}
    & \adjincludegraphics[width=\widt,
		trim={0 {0\height} 0 {0.\height}},clip]{./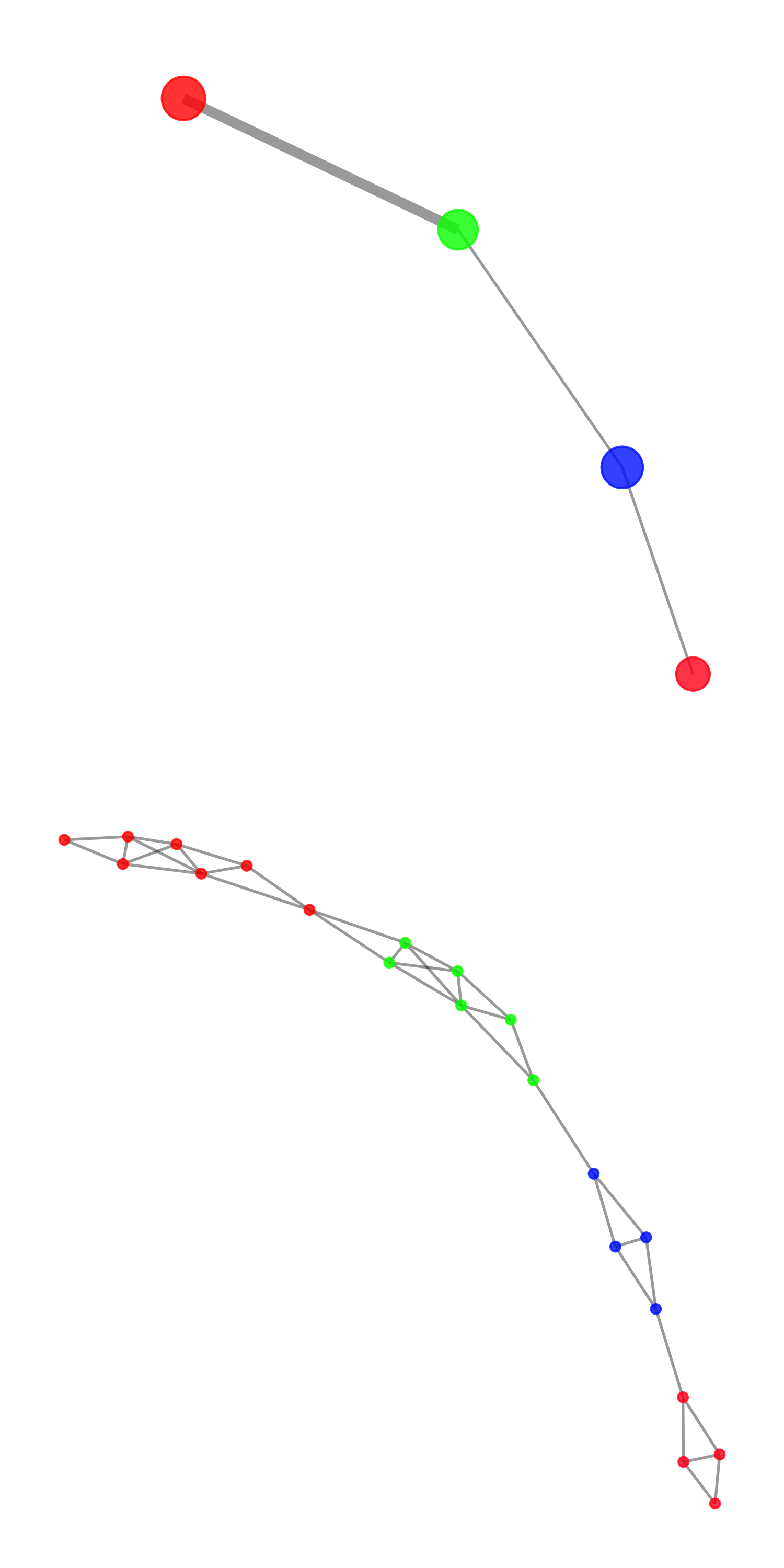} 
    & \adjincludegraphics[width=\widt,
		trim={0 {0\height} 0 {0.\height}},clip]{./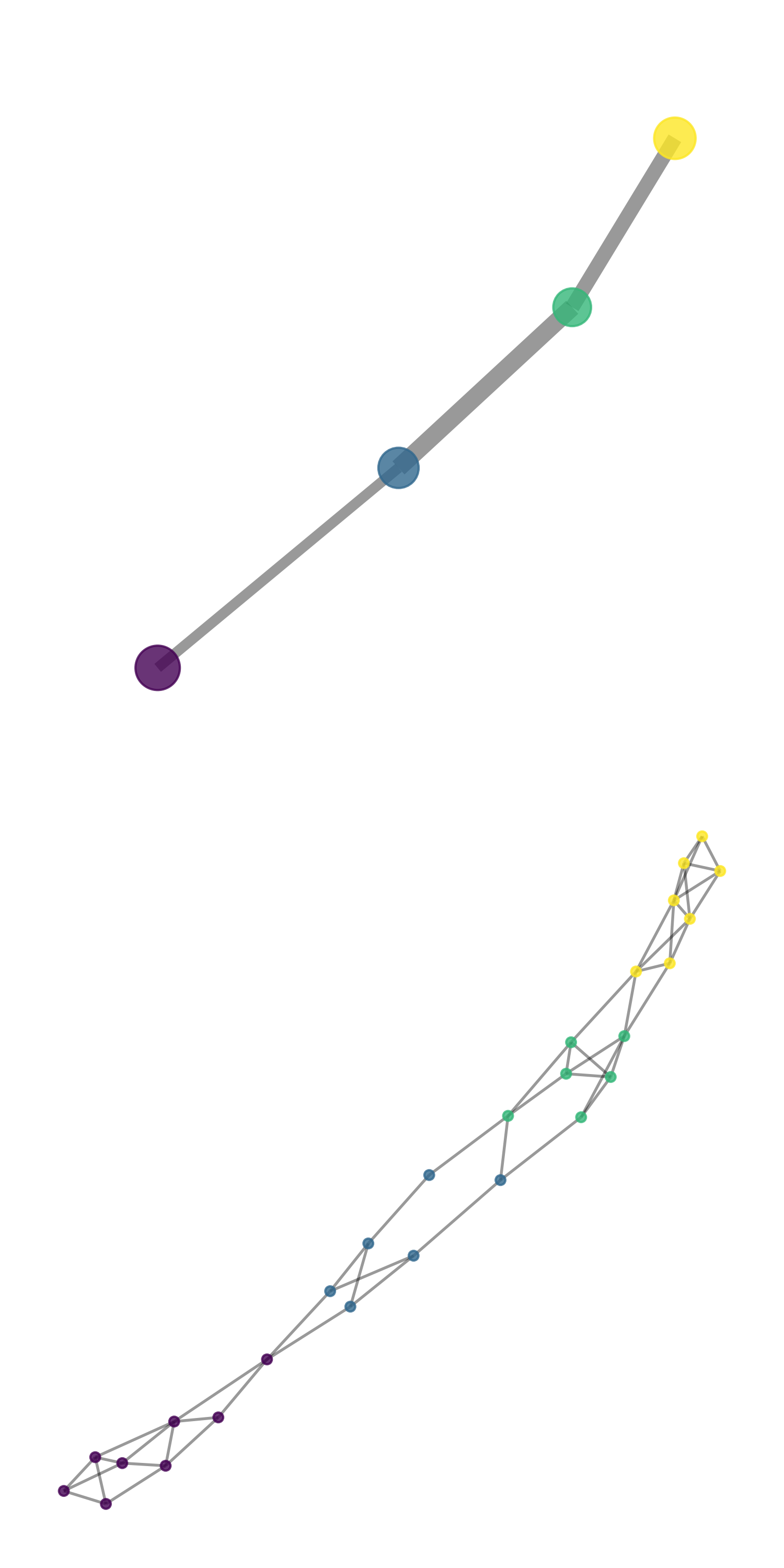} 
    & \adjincludegraphics[width=\widt,
		trim={0 {0\height} 0 {0.\height}},clip]{./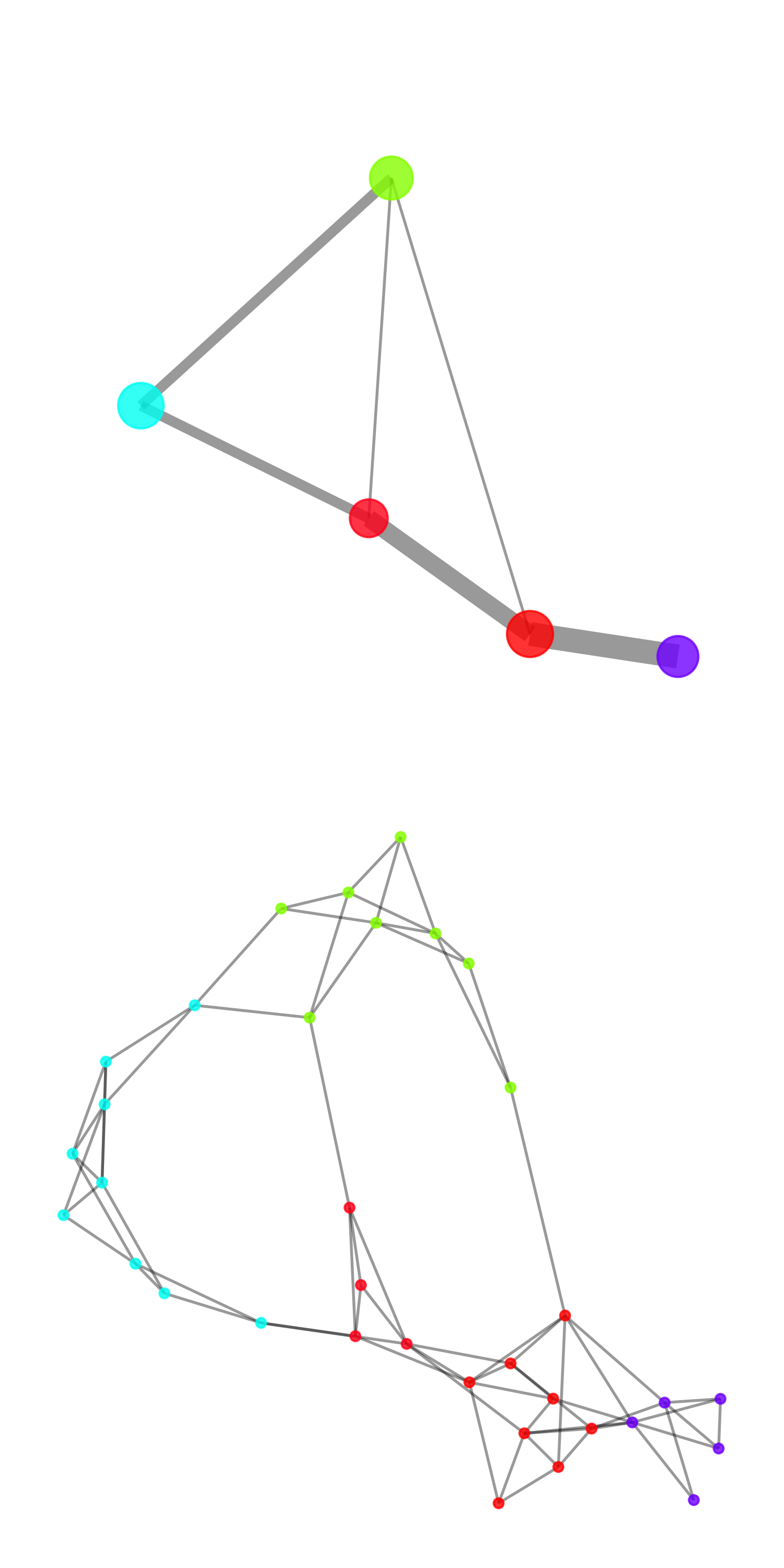}
    & \adjincludegraphics[width=\widt,
		trim={0 {0\height} 0 {0.\height}},clip]{./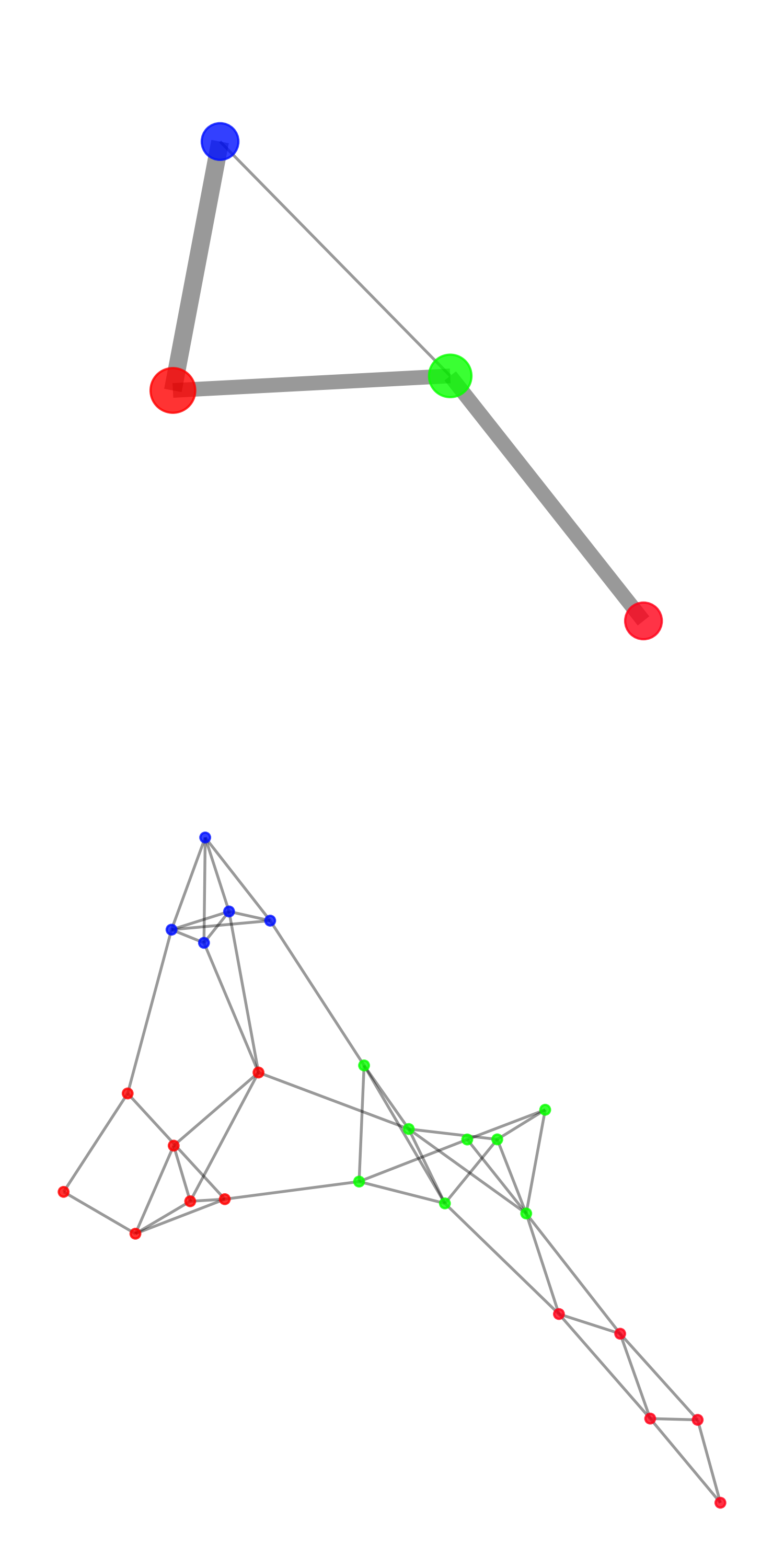}
\end{tabular}
\end{minipage}
    \captionof{figure}{\captionsize Samples from \HiGeN trained on \emph{Protein} and \emph{SBM}.
    Communities are distinguished with different colors and both levels are depicted. 
    The samples for GDSS are obtained from \citep{jo2022scoreBased}.
    \label{fig:sample_Ego_Enzyme}}
\end{figure}

\begin{figure}[ht] 
\renewcommand{\widt}{.18\textwidth}
\centering
\begin{minipage}{.95\linewidth}
\begin{tabular}{l cccc}
    \multicolumn{5}{c}{\emph{3D Point Cloud}} \\
    \begin{turn}{90} ~~~~~~ Train  \end{turn} 
    & \adjincludegraphics[width=\widt,
		trim={0 {0\height} 0 {0.\height}},clip]{./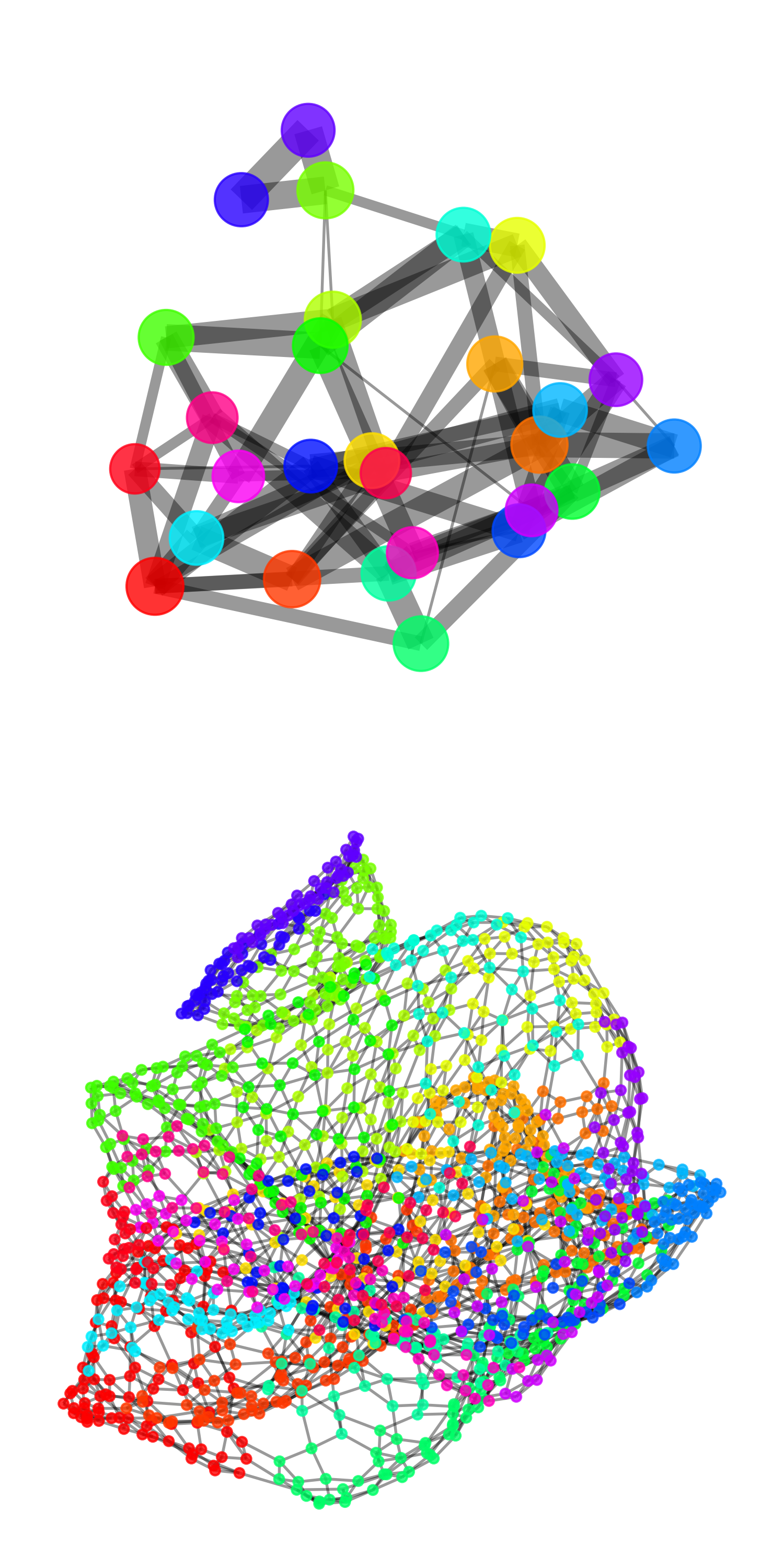} 
    & \adjincludegraphics[width=\widt,
		trim={0 {0\height} 0 {0.\height}},clip]{./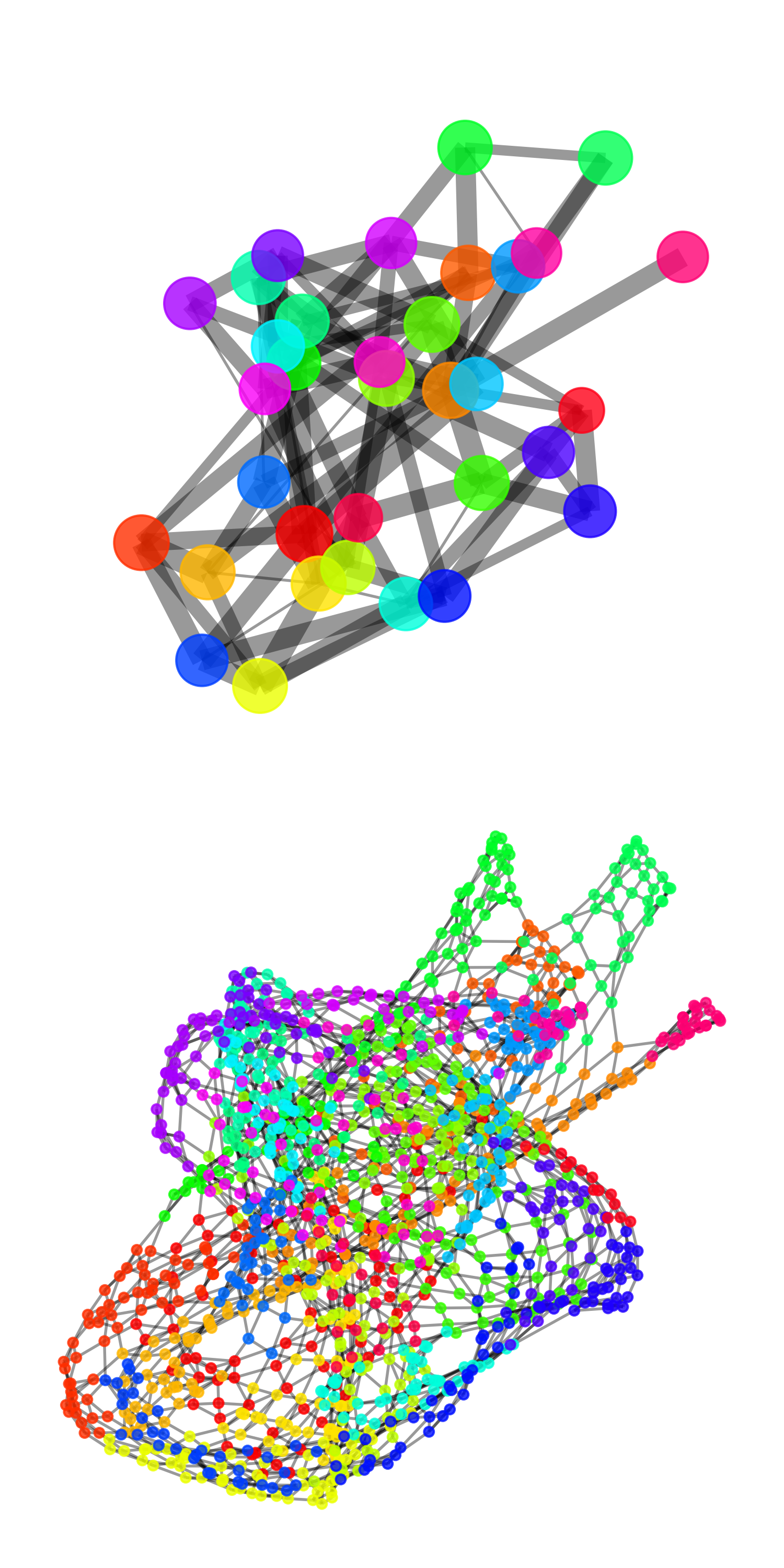} 
    & \adjincludegraphics[width=\widt,
		trim={0 {0\height} 0 {0.\height}},clip]{./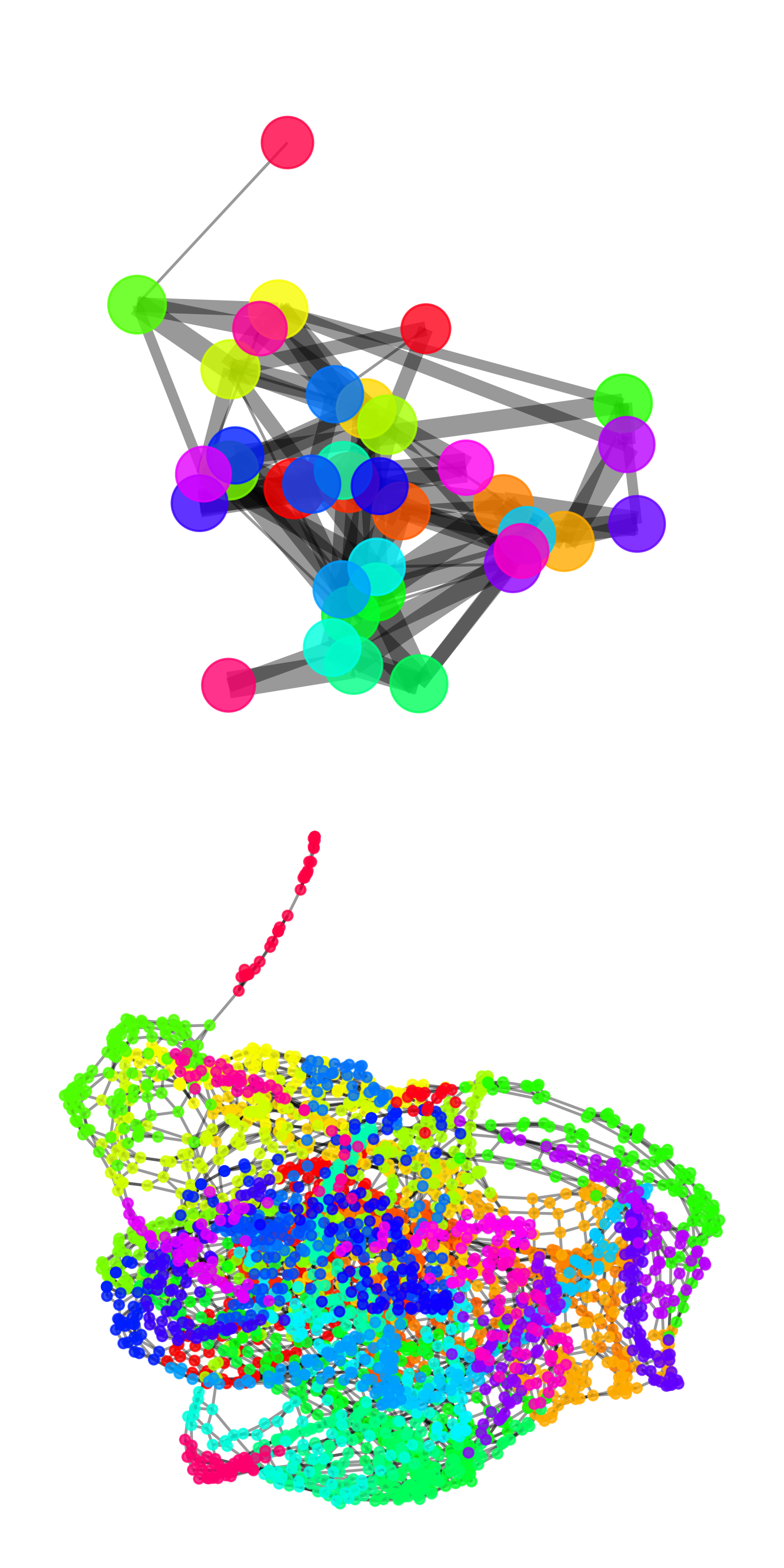} 
    & \adjincludegraphics[width=\widt,
		trim={0 {0\height} 0 {0.\height}},clip]{./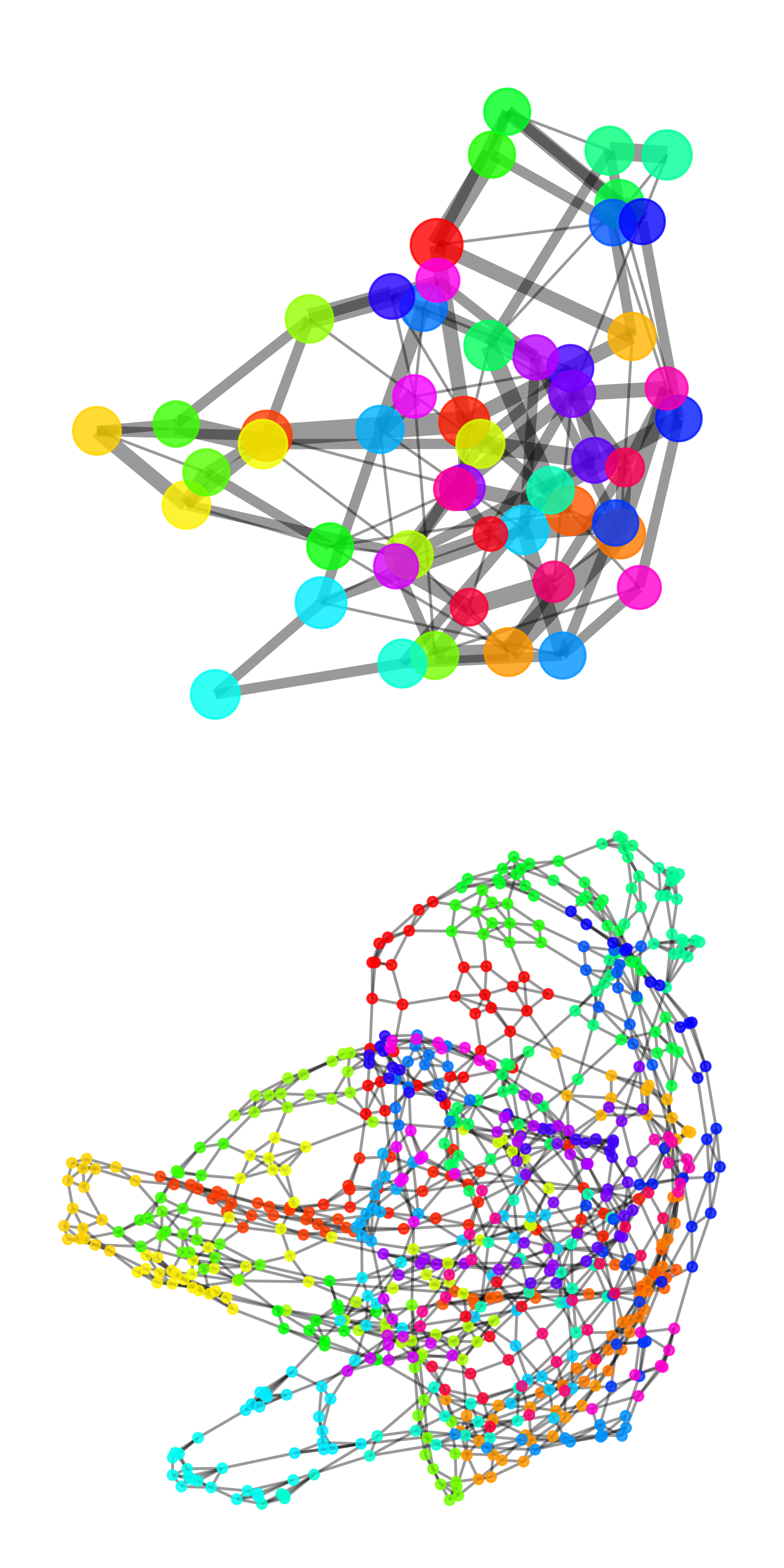} \\
  \midrule 
    \begin{turn}{90} ~~~~ GRAN  \end{turn} 
    & \adjincludegraphics[width=\widt,
		trim={0 {0\height} 0 {0.\height}},clip]{./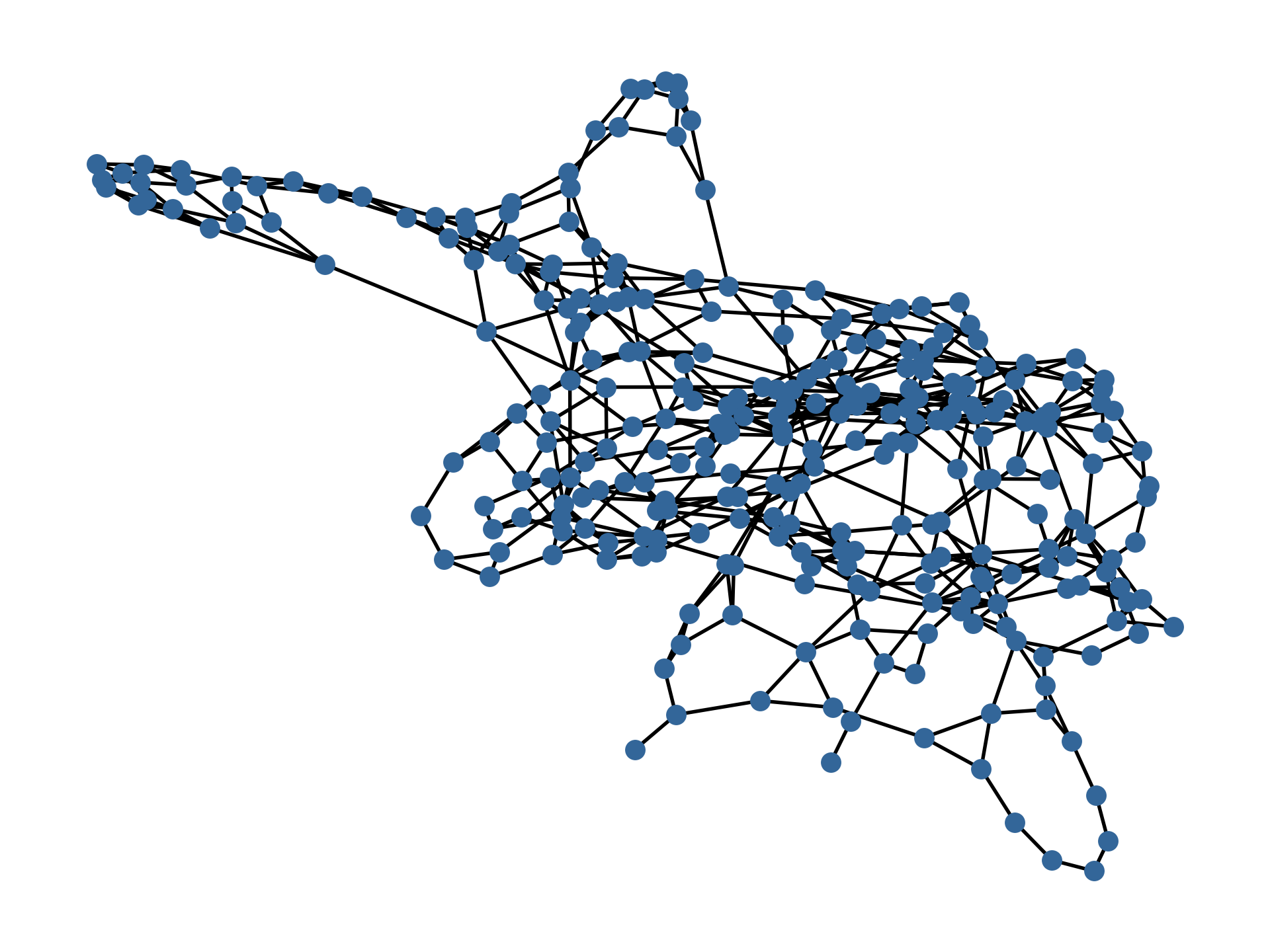} 
    & \adjincludegraphics[width=\widt,
		trim={0 {0\height} 0 {0.\height}},clip]{./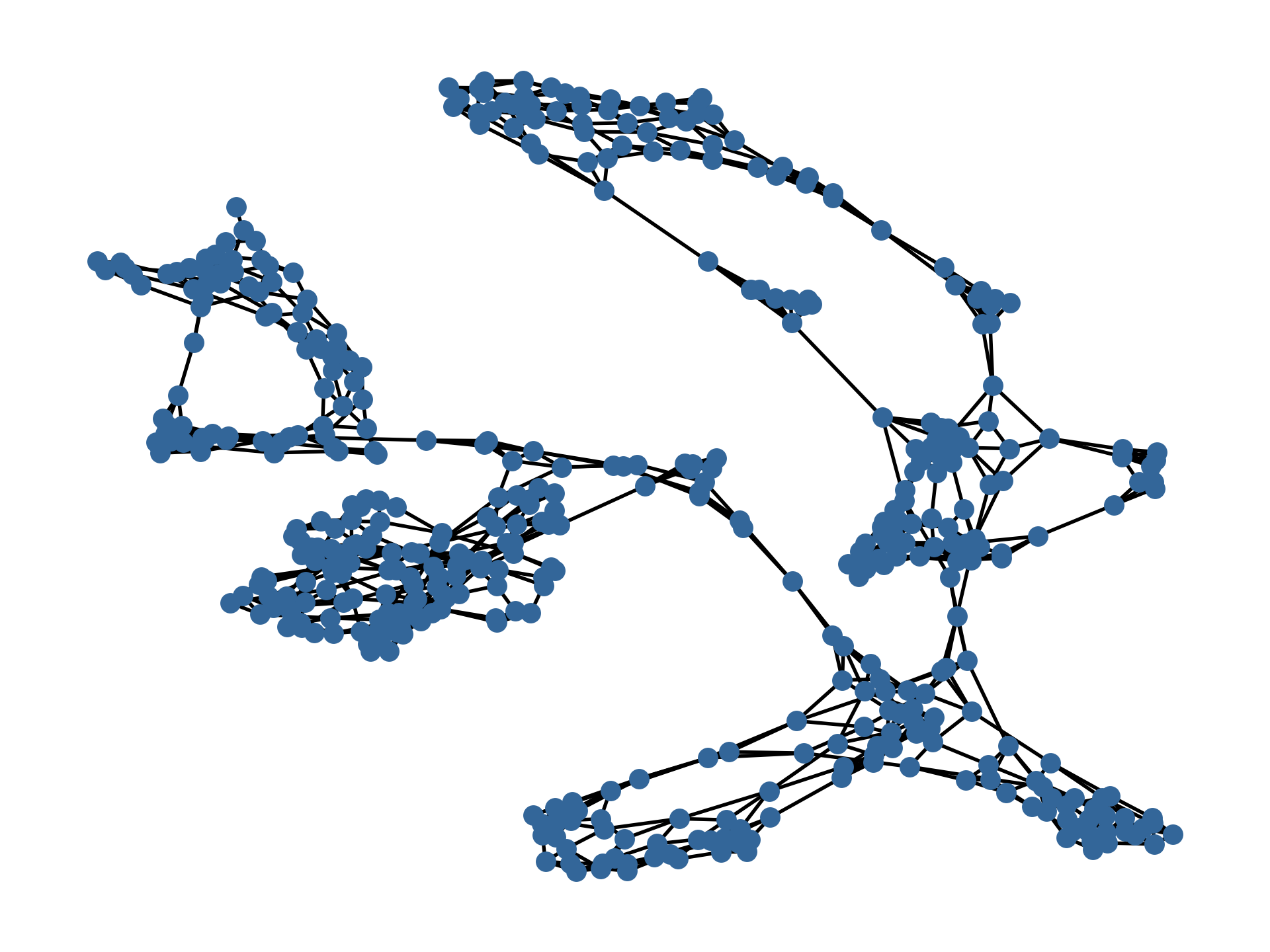} 
    & \adjincludegraphics[width=\widt,
		trim={0 {0\height} 0 {0.\height}},clip]{./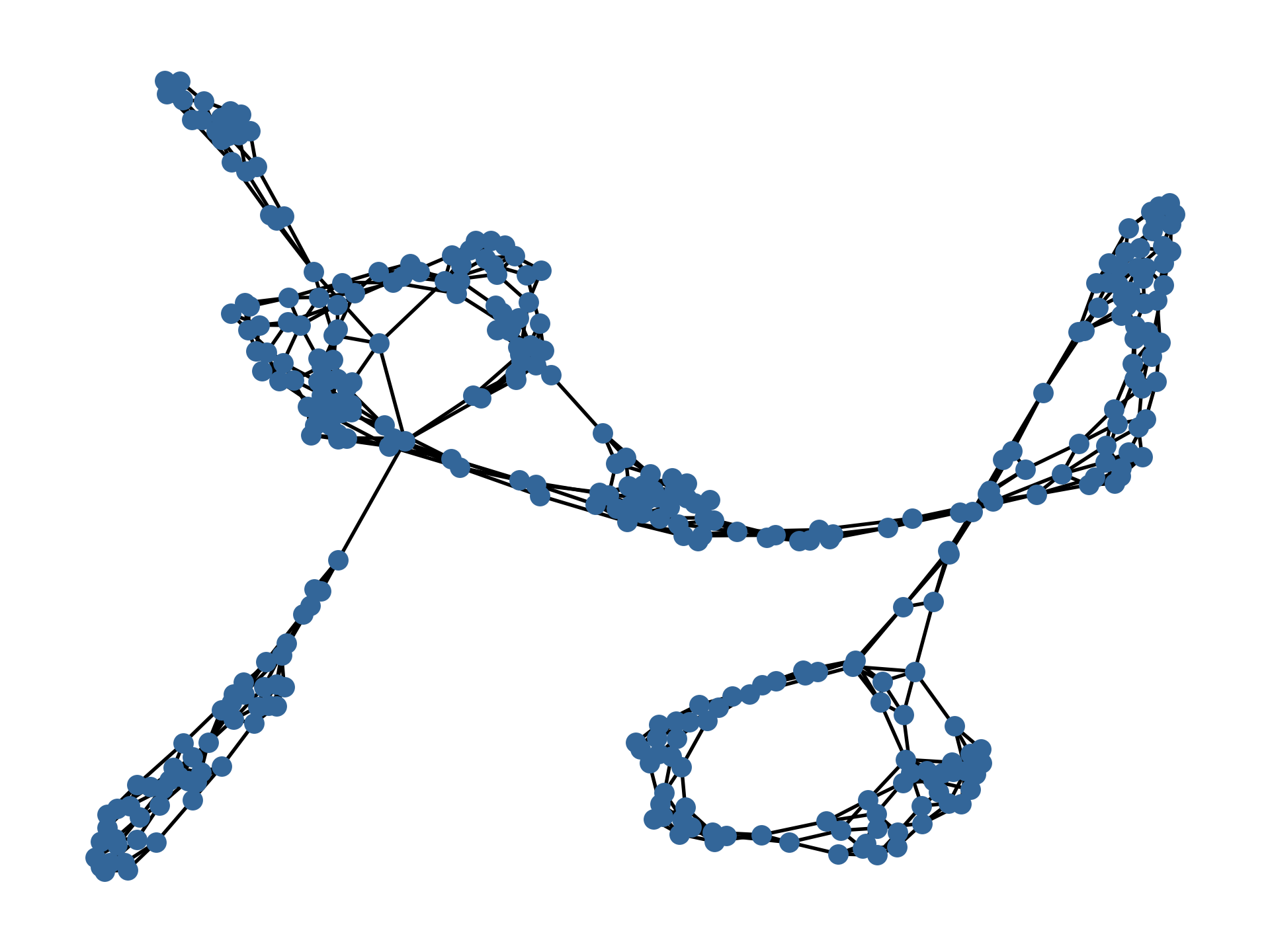} 
  & \adjincludegraphics[width=\widt,
    trim={0 {0\height} 0 {0.\height}},clip]{./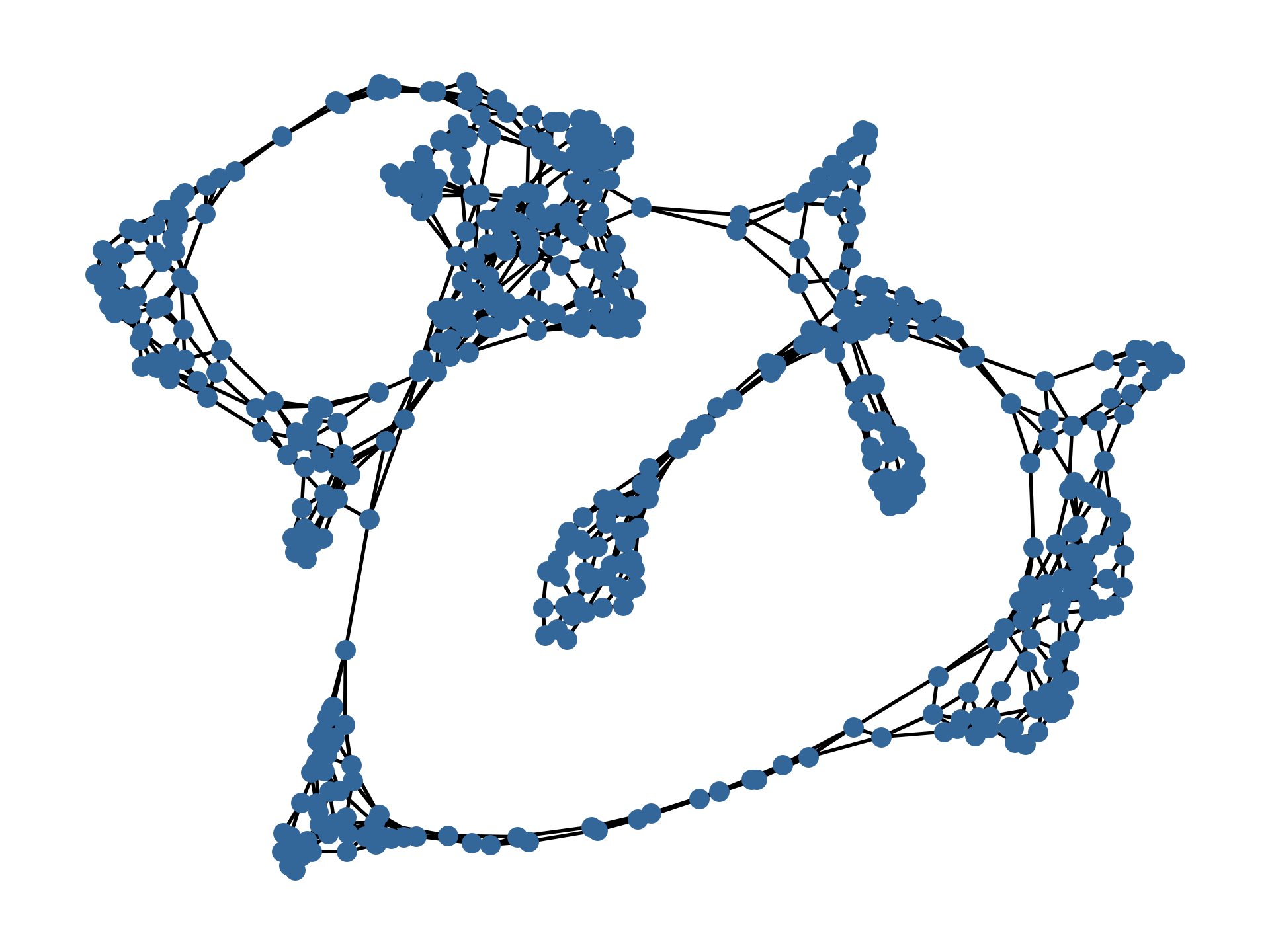} 
  \\
  \midrule 
    \begin{turn}{90} ~~~~~~ \HiGeN-s  \end{turn} 
    & \adjincludegraphics[width=\widt,
		trim={0 {0\height} 0 {0.\height}},clip]{./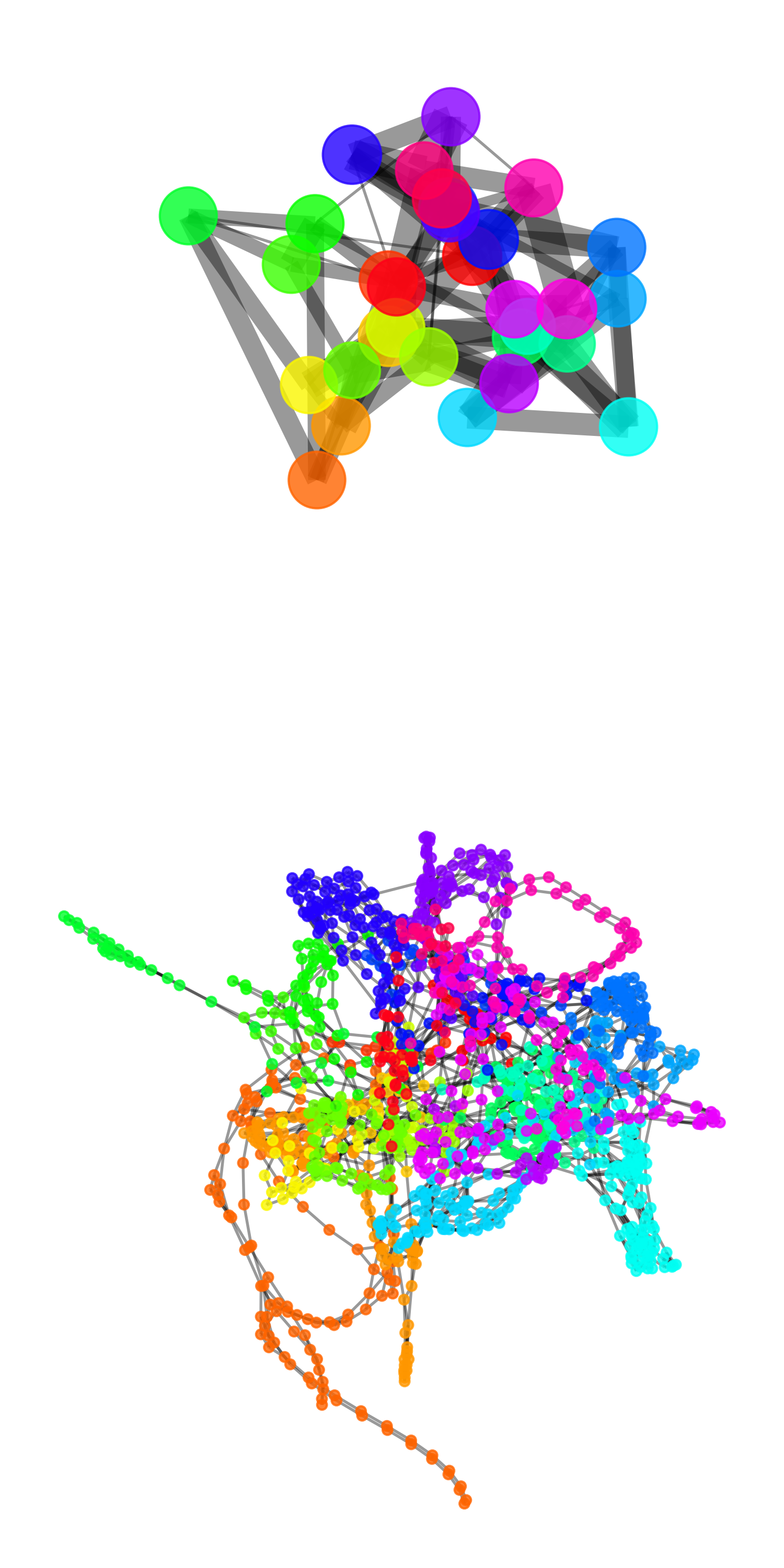} 
    & \adjincludegraphics[width=\widt,
		trim={0 {0\height} 0 {0.\height}},clip]{./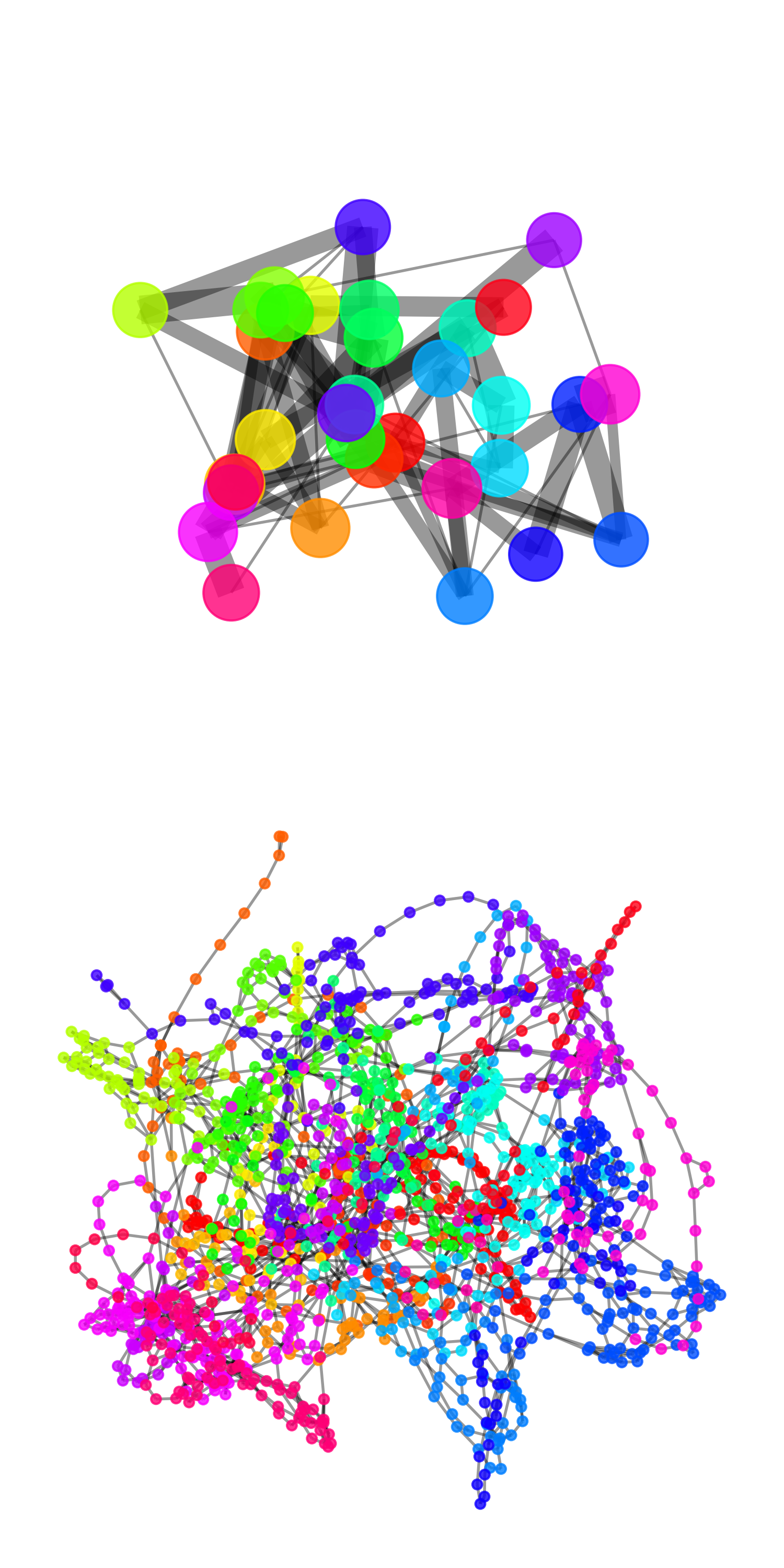} 
    & \adjincludegraphics[width=\widt,
		trim={0 {0\height} 0 {0.\height}},clip]{./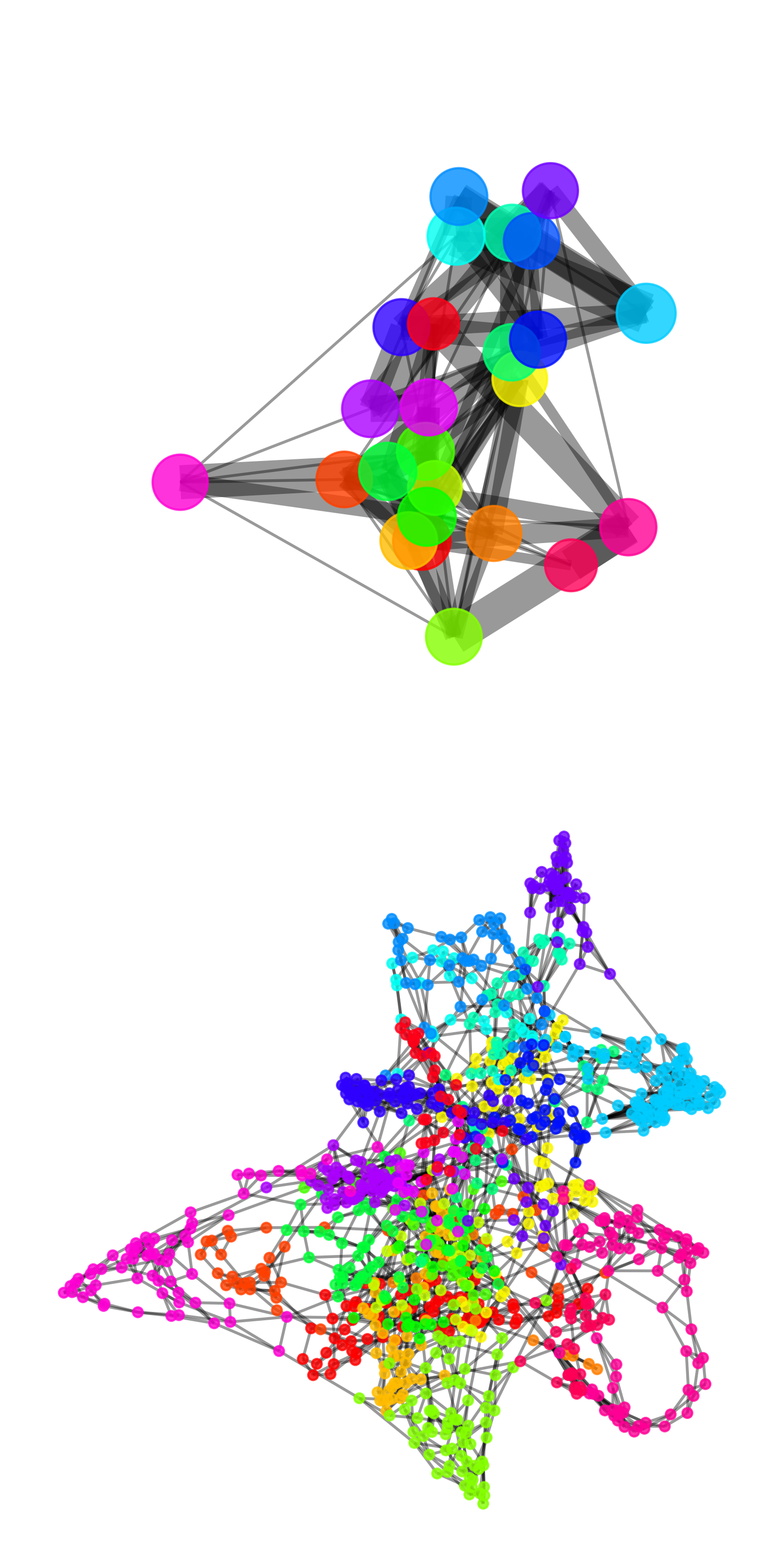}
    & \adjincludegraphics[width=\widt,
		trim={0 {0\height} 0 {0.\height}},clip]{./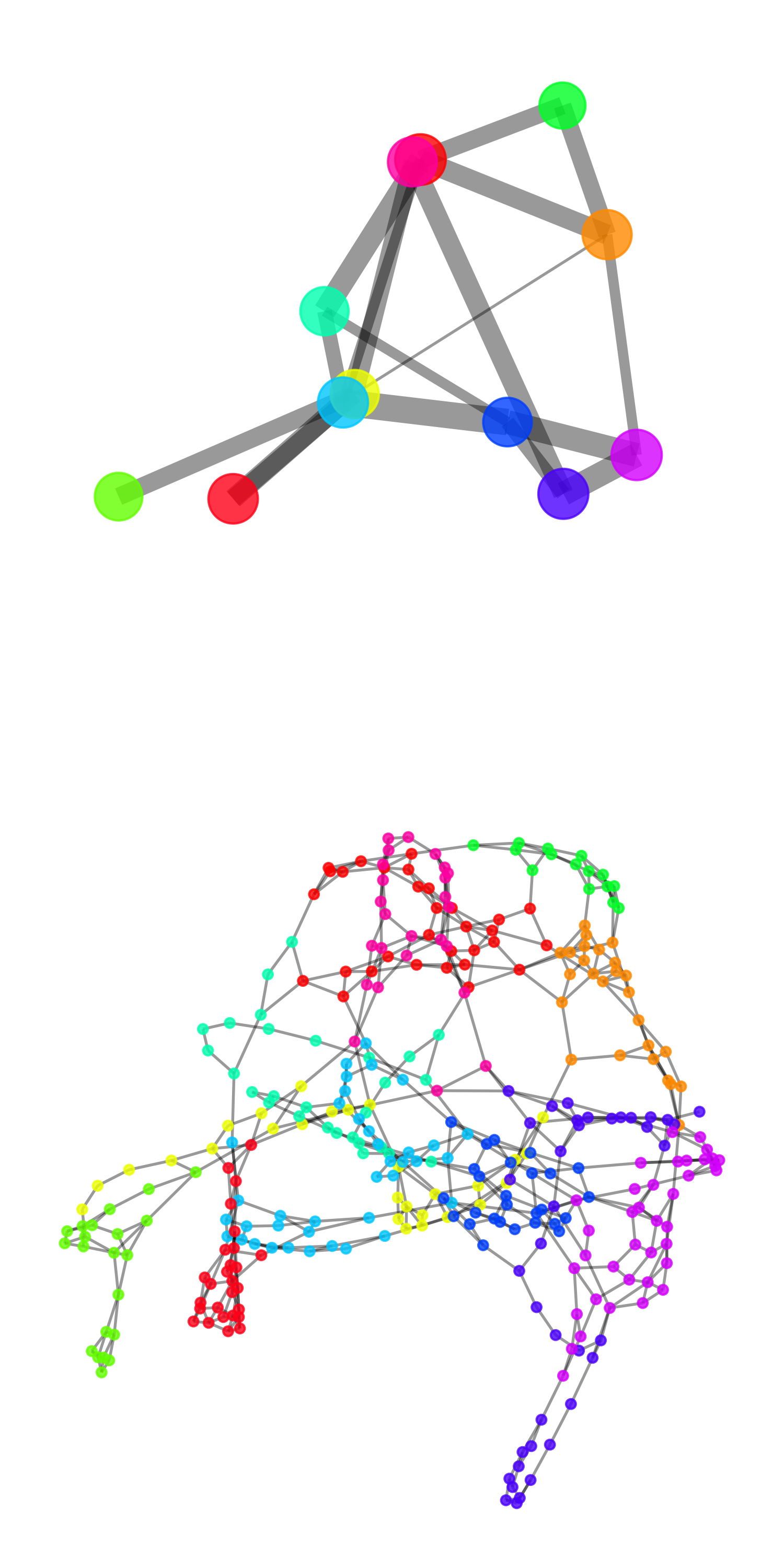} 
\end{tabular}
\end{minipage}
    \captionof{figure}{\captionsize Samples from \HiGeN trained on \emph{3D Point Cloud}.
    Communities are distinguished with different colors and both levels are depicted.
    The samples for GRAN are obtained from \citep{liao2019GRAN}.
    \label{fig:sample_point_cloud}}
\end{figure}

\clearpage
\newtheorem*{theorem*}{\textbf{Theorem}}
\begin{theorem*}
\textit{The joint probability of weights of all edges of $\gG^l$, $\rvw$, can be described by a multinomial distribution:
$ \rvw \sim \text{Mu}(\rvw~|~w_0, \vtheta^{l} ). $ 
By observing that $w_{ii}^{l-1}$ and $w_{ij}^{l-1}$ determine
the sum of edge weights within each community $\pg_i^l$ and bipartite graph $\bp_{ij}^l$, respectively, we can show that 
these components are \textcolor{blue}{conditionally independent} and have \textcolor{blue}{multinomial distribution}:
}
\begin{align}
    p(\gG^l ~|~ \gG^{l-1}) 
    \sim  \prod_{i ~ \in ~ \gV({\gG^{l-1}})} \text{Mu} ( [w_e]_{e ~\in~ \pg_{i}^l}  ~|~ w^{l-1}_{ii}, \vtheta^{l}_{ii}) 
    \prod_{{\edge{i,j}} \in ~ \gE({\gG^{l-1}})} \text{Mu} ( [w_e]_{e ~\in~ \bp_{ij}^l}  ~|~ w^{l-1}_{ij}, \vtheta^{l}_{ij}) \nonumber
\end{align}
\textit{where
$\{ \vtheta^{l}_{ij}[e] \in [0, 1], ~ \text{s.t.} ~  \1^T \vtheta^{l}_{ij} = 1 ~ |~ \forall ~ \edge{i,j}  \in ~ \gE({\gG^{l-1}}) \}$ are the multinomial model's parameters.}
\end{theorem*}

\begin{theorem*} 
\textit{For a random counting vector $\rvw \in \ZP^E$ with a multinomial distribution $\text{Mu}(\rvw~ | ~ w, \vtheta)$, %
let's split it into $T$ disjoint groups $\rvw=[\rvu_1, ~ ..., \rvu_T ]$ where $\rvu_t \in \ZP^{E_t} ~,~ \sum_{t=1}^T {E_t} = E $,
and also split the probability vector accordingly as $\vtheta=[\vtheta_1, ~ ..., \vtheta_T ]$.
Additionally, let's define sum of all variables in the $t$-th group ($t$-th step) by a random count variable $\rv_t := \sum_{e=1}^{E_t} \ru_{t,e}$.
Then, the multinomial distribution can be factorized as a chain of binomial and multinomial distributions:}
\begin{align*} 
    &\text{Mu}(\rvw=[\rvu_1, ~ ..., \rvu_T ]| ~ w, \vtheta=[\vtheta_1, ..., \vtheta_T ]) %
    = \prod_{t=1}^{T} \text{Bi}(\rv_{t} ~ | ~ \rr_t,~ {\eta}_{\rv_{t}}) ~
    \text{Mu}(\rvu_{t}~ | ~ \rv_{t},~ \vlambda_{{t}}), \\
    &\text{where: } \rr_t = w - \sum_{i<t}\rv_i ~,~ 
    \eta_{\rv_t } = \frac{\1^{\T}~ \vtheta_t}{1-\sum_{i<t} \1^{\T}~ \vtheta_i}, ~~
    \vlambda_{{t}} = \frac{\vtheta_t}{\1^{\T}~ \vtheta_t}. \nonumber
\end{align*}
\textit{Here, $\rr_t$ denotes the  remaining weight at $t$-th step, and the probability of binomial, $\eta_{\rv_t}$, is the fraction of the remaining probability mass that is allocated to $\rv_t$, \ie the sum of all weights in the $t$-th group.
The vector parameter $\vlambda_{{t}}$ is the normalized multinomial probabilities of all count variables in the $t$-th group.
Intuitively, this decomposition of multinomial distribution can be viewed as a recursive \textit{stick-breaking} process where at each step $t$: 
first a binomial distribution is used to determine how much probability mass to allocate to the current group, and a multinomial distribution is used to distribute that probability mass among the variables in the group. The resulting distribution is equivalent to the original multinomial distribution. }

\end{theorem*}

{\footnotesize We further increase the expressiveness of the generative network by extending this probability to a mixture model with $K$ mixtures:
\begin{align*}
    p(\rvu_{t} ) &= \sum_{k=1}^{K}  \vbeta_k^l \text{Bi}(\rv_{t}  ~|~ \rr_t,~ {\eta}_{t, k}^{l}) 
    \text{Mu}(\rvu_{t}~ |~ \rv_{t},~ \vlambda_{{t}, k}^{l})
    \\
    \vlambda_{t, k}^{l} &=
    \softmax \left(  \mathrm{MLP}_{\vtheta}^l \big( \left[ \Delta \vh_{\hat{\gE}_t({\hat{\pg}_{i, t}^l})}~||~\mathrm{pool}(\vh_{\hat{\pg}_{i, t}^l})~||~ h_{\parents(\pg_{i}^l)} \right] \big)  \right)[k,:]
     \\
    \eta_{t, k}^{l} &=
    \mathrm{sigmoid} \left( \mathrm{MLP}_{\eta}^l \big( \left[
    \mathrm{pool}(\vh_{\hat{\pg}_{i, t}^l})~||~ h_{\parents(\pg_{i}^l)} \right] \big)  \right)[k]
    \nonumber\\
    \vbeta^{l} &= \softmax \left( \mathrm{MLP}_{\beta}^l \big( \left[ \mathrm{pool}(\vh_{\hat{\pg}_{i, t}^l} ) ~||~ h_{\parents(\pg_{i}^l)} \right]\big)  \right) \nonumber %
\end{align*}
Where $\Delta \vh_{\hat{\gE}_t({\hat{\pg}_{i, t}^l})}$  is matrix of candidate edge \features, 
$\vh_{\hat{\pg}_{i, t}^l} = \mathrm{GNN}^l_{com} ( \hat{\pg}_{i, t}^l )$ is node \features matrix.
}

\end{appendices}
\end{document}